\newtheorem{theorem}{Theorem}
\newtheorem{lemma}{Lemma}
\theoremstyle{definition}
\newtheorem{remark}{Remark}
\begin{document}

\title{Hypervolume-Optimal $\mu$-Distributions on Line/Plane-based Pareto Fronts in \\Three Dimensions}  

\author{Ke~Shang,~\IEEEmembership{Member,~IEEE},
        ~Hisao~Ishibuchi,~\IEEEmembership{Fellow,~IEEE},
        ~Weiyu~Chen,
        ~Yang~Nan,
        and~Weiduo~Liao
\thanks{This work was supported by National Natural Science Foundation of China (Grant No. 61876075), Guangdong Provincial Key Laboratory (Grant No. 2020B121201001), the Program for Guangdong Introducing Innovative and Enterpreneurial Teams (Grant No. 2017ZT07X386), Shenzhen Science and Technology Program (Grant No. KQTD2016112514355531), the Program for University Key Laboratory of Guangdong Province (Grant No. 2017KSYS008). \textit{(Corresponding Author: Hisao Ishibuchi.)}}
\thanks{K. Shang, H. Ishibuchi, W. Chen, Y. Nan and W. Liao are with Guangdong Provincial Key Laboratory of Brain-inspired Intelligent Computation, Department of Computer Science and Engineering, Southern University of Science and Technology, Shenzhen 518055, China (e-mail: kshang@foxmail.com; hisao@sustech.edu.cn; 11711904@mail.sustech.edu.cn; nany@mail.sustech.edu.cn).}
}

\maketitle

\begin{abstract}
Hypervolume is widely used in the evolutionary multi-objective optimization (EMO) field to evaluate the quality of a solution set. For a solution set with $\mu$ solutions on a Pareto front, a larger hypervolume means a better solution set. Investigating the distribution of the solution set with the largest hypervolume is an important topic in EMO, which is the so-called hypervolume optimal $\mu$-distribution. Theoretical results have shown that the $\mu$ solutions are uniformly distributed on a linear Pareto front in two dimensions. However, the $\mu$ solutions are not always uniformly distributed on a single-line Pareto front in three dimensions. They are only uniform when the single-line Pareto front has one constant objective. In this paper, we further investigate the hypervolume optimal $\mu$-distribution in three dimensions. We consider the line- and plane-based Pareto fronts. For the line-based Pareto fronts, we extend the single-line Pareto front to two-line and three-line Pareto fronts, where each line has one constant objective. For the plane-based Pareto fronts, the linear triangular and inverted triangular Pareto fronts are considered. First, we show that the $\mu$ solutions are not always uniformly distributed on the line-based Pareto fronts. The uniformity depends on how the lines are combined. Then, we show that a uniform solution set on the plane-based Pareto front is not always optimal for hypervolume maximization. It is locally optimal with respect to a $(\mu+1)$ selection scheme. Our results can help researchers in the community to better understand and utilize the hypervolume indicator.

\end{abstract}
\begin{IEEEkeywords}
Hypervolume indicator,
Evolutionary multi-objective optimization,
Optimal $\mu$-distribution.
\end{IEEEkeywords}

\IEEEpeerreviewmaketitle

\section{Introduction}
\IEEEPARstart{T}{he} hypervolume indicator is a popular performance indicator in the field of evolutionary multi-objective optimization (EMO). Informally, the hypervolume of a solution set is the volume of the objective space dominated by the solution set and dominating a reference point. A well recognized fact in the EMO community is that the hypervolume indicator is able to evaluate the convergence and the diversity of a solution set simultaneously. The hypervolume indicator is strictly Pareto compliant \cite{zitzler2007hypervolume}, which guarantees that a solution set maximizing the hypervolume indicator are all Pareto optimal  \cite{fleischer2003measure}. Thus, the hypervolume indicator is adopted in some EMO algorithms (EMOAs) to guide the population converge to the Pareto front. We call these algorithms hypervolume-based EMOAs. Some representative algorithms are SMS-EMOA \cite{beume2007sms,Emmerich2005An}, FV-MOEA \cite{jiang2015simple}, HypE \cite{bader2011hype}, and R2HCA-EMOA \cite{shang2019new}. For a comprehensive survey of the hypervolume indicator, please refer to \cite{shang2020survey}.

For a hypervolume-based EMOA, a well-converged and widely-distributed solution set on the Pareto front is expected. The convergence of the solution set can be achieved due to the Pareto compliance property of the hypervolume indicator. For the diversity of the solution set, a widely-distributed solution set on the Pareto front is expected by maximizing the hypervolume indicator. Many researchers analyze how the solution set is distributed on the Pareto front when the hypervolume is maximized. This research issue is usually called the hypervolume optimal $\mu$-distribution. 

Theoretical studies have proved that for a linear Pareto front in two dimensions, the $\mu$ solutions are uniformly distributed on the Pareto front in order to maximize the hypervolume indciator \cite{auger2009theory,emmerich2007gradient}. To the best of our knowledge, this conclusion is the only one to precisely describe the hypervolume optimal-$\mu$ distribution on the Pareto front in two dimensions. Auger et al. \cite{auger2009theory} and Friedrich et al. \cite{friedrich2015multiplicative} theoretically investigated the hypervolume optimal $\mu$-distribution of solutions on the two-objective nonlinear Pareto fronts. However, it is still very difficult to precisely describe the location of the $\mu$ solutions on the Pareto fronts. In three dimensional cases, Shukla et al. \cite{shukla2014theoretical} theoretically investigated the hypervolume optimal $\mu$-distribution on a single-line Pareto front (i.e., a degenerated Pareto front). It shows that the uniformity property of the optimal distribution for a linear Pareto front in two dimensions cannot be generalized to three dimensions. The solutions are only uniform on a single-line Pareto front when one objective of the Pareto front is constant (i.e., this Pareto front can be degenerated to a linear Pareto front in two dimensions). Auger et al. \cite{auger2010theoretically} theoretically studied the hypervolume optimal $\mu$-distribution in three dimensions. However, the exact hypervolume optimal $\mu$-distribution is not obtained in \cite{auger2010theoretically}. Singh \cite{singh2019understanding} theoretically studied the hypervolume behavior when $\mu$ is infinity (i.e., the whole Pareto front) whereas the hypervolume optimal $\mu$-distribution is not described. Due to the difficulty of the theoretical study, many studies only investigate the hypervolume optimal $\mu$-distribution empirically \cite{glasmachers2014optimized,ishibuchi2019comparison,ishibuchi2017hypervolume,ishibuchi2017reference,ryoji2020new}, i.e., the hypervolume optimal $\mu$-distribution is approximated on the Pareto front in these studies.

In this paper, we further study the hypervolume optimal $\mu$-distribution in three dimensions, both theoretically and empirically. We consider the line- and plane-based Pareto fronts. The line-based Pareto fronts have two and three lines, where each line has one constant objective. The plane-based Pareto fronts are with triangular and inverted triangular shapes.

\begin{figure}[]
\centering
\subfigure[HV=4.9000]{                    
\includegraphics[scale=0.25]{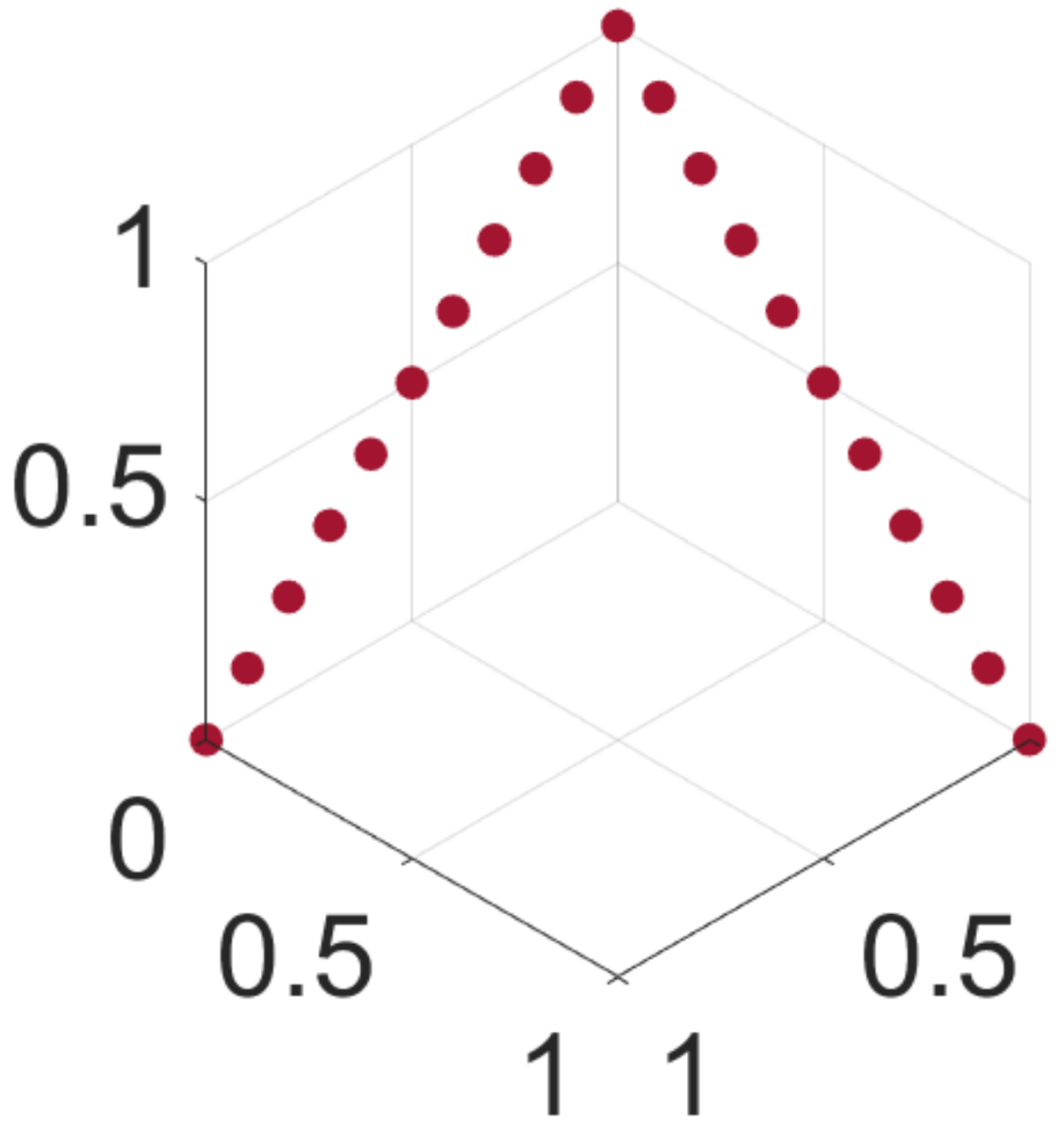}               
}
\subfigure[HV=4.8909]{                    
\includegraphics[scale=0.25]{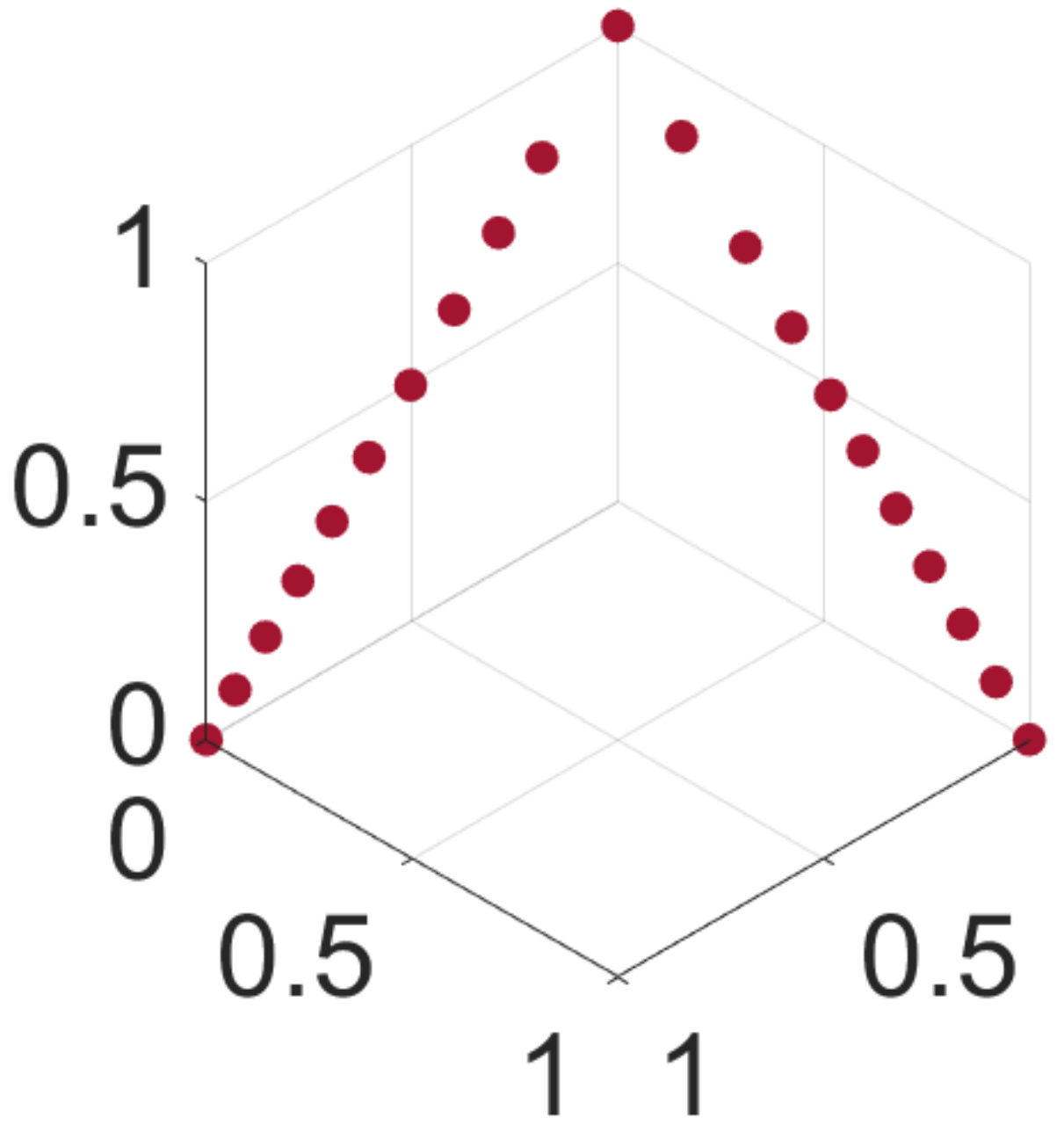}                
}\\
\subfigure[HV=7.6196]{                    
\includegraphics[scale=0.25]{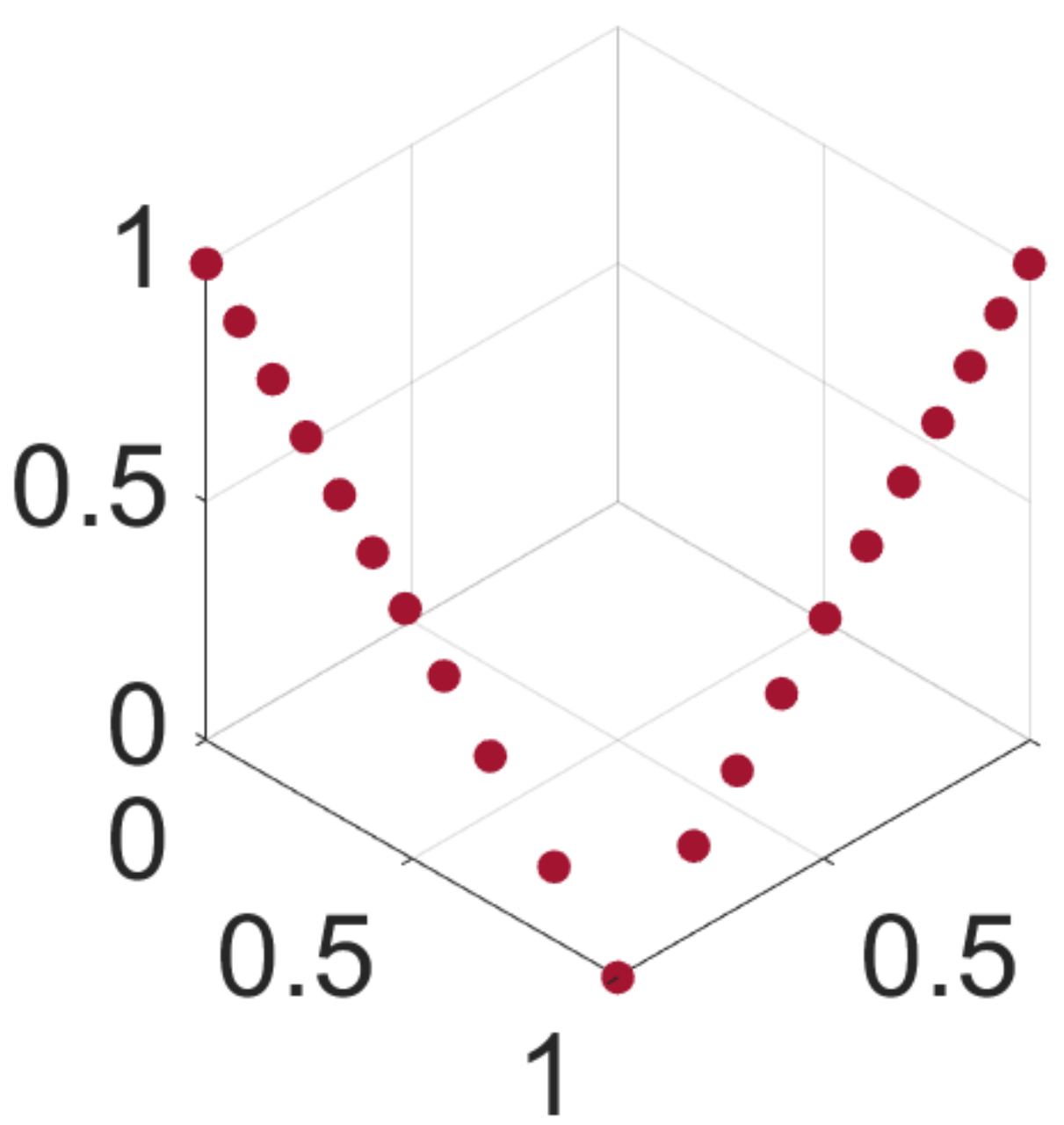}                
}
\subfigure[HV=7.6150]{                    
\includegraphics[scale=0.25]{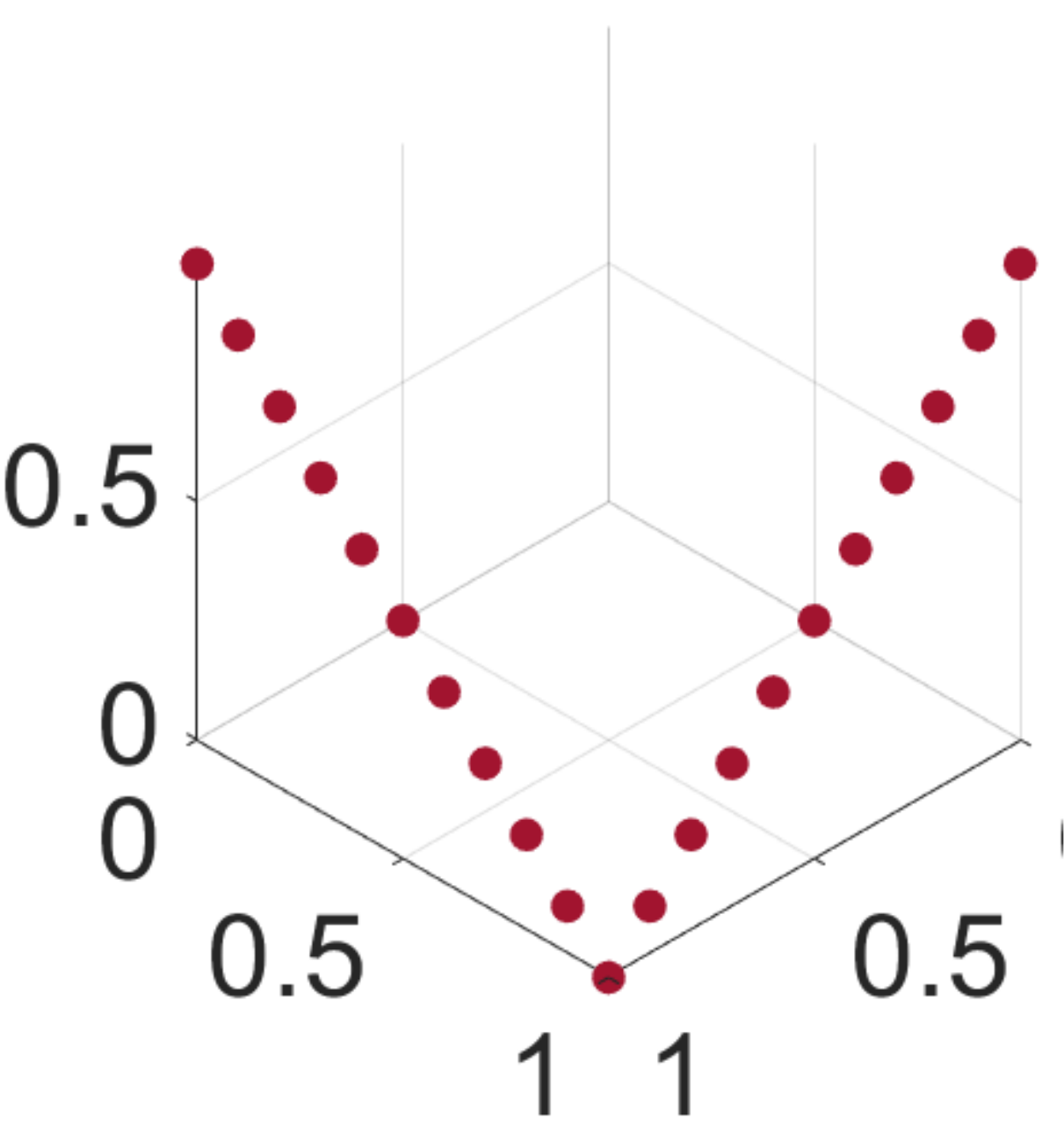}                
}\\
\subfigure[HV=5.3500]{                    
\includegraphics[scale=0.25]{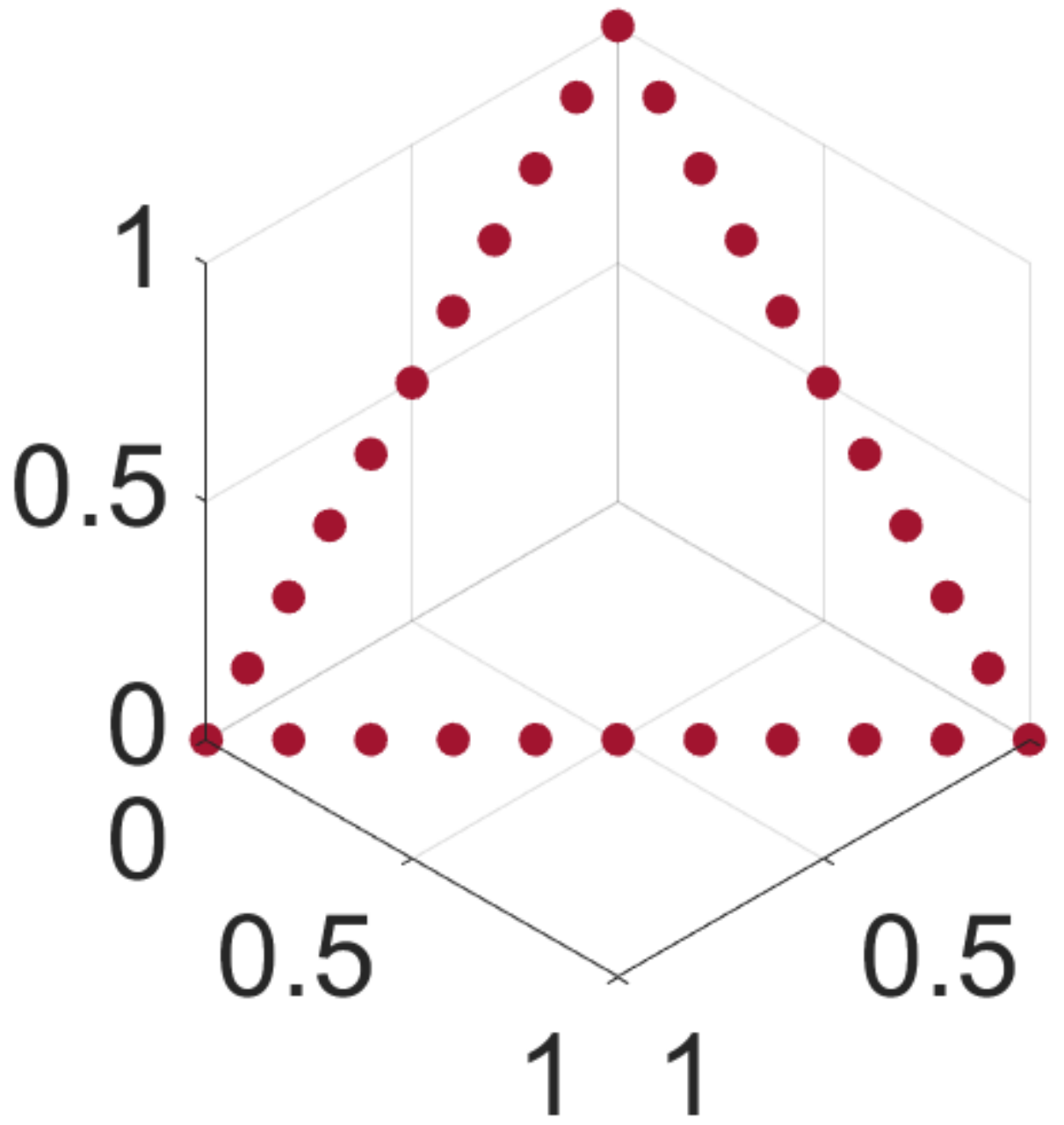}                
}
\subfigure[HV=5.3396]{                    
\includegraphics[scale=0.25]{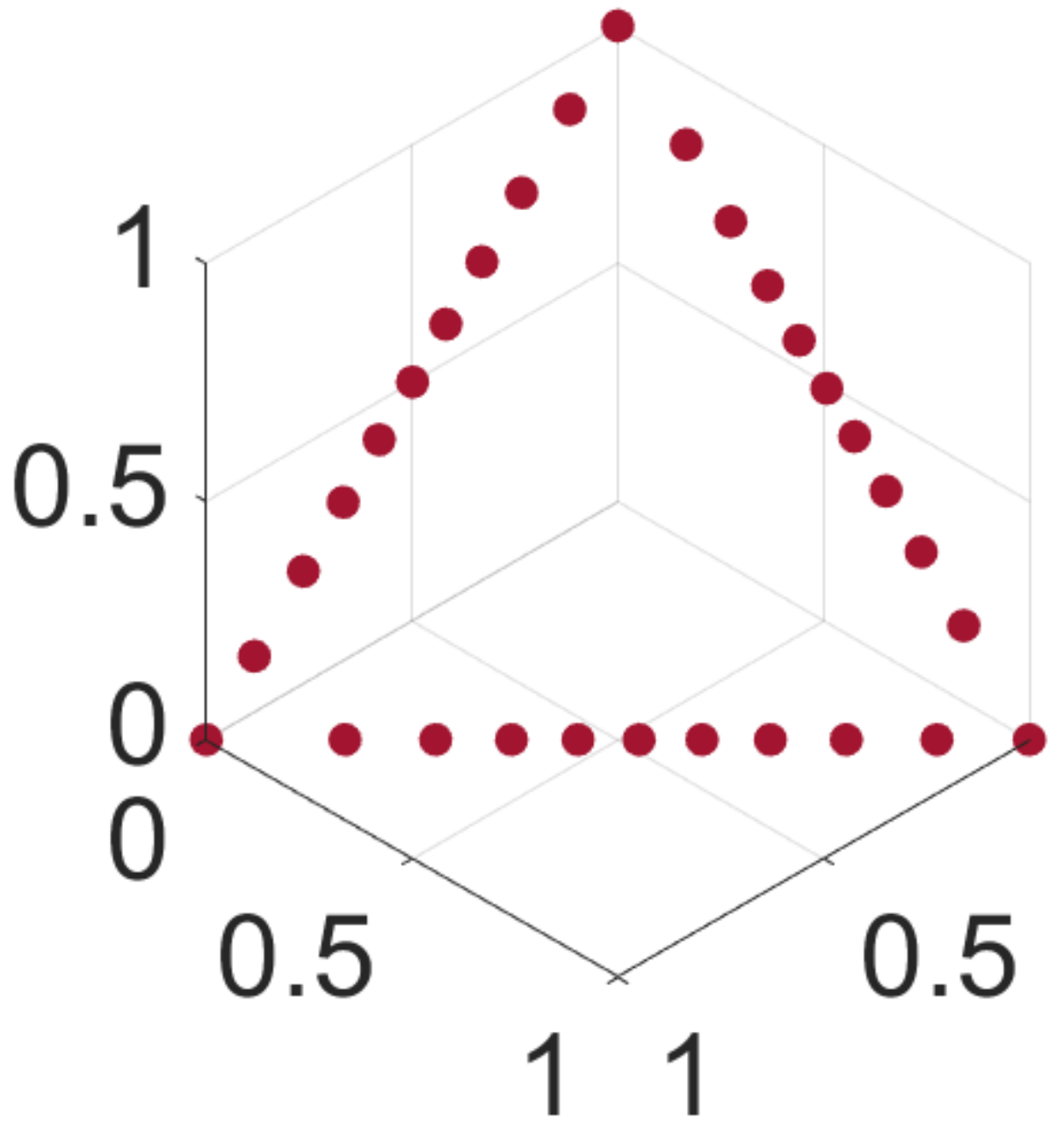}                
}\\
\subfigure[HV=7.7136]{                    
\includegraphics[scale=0.25]{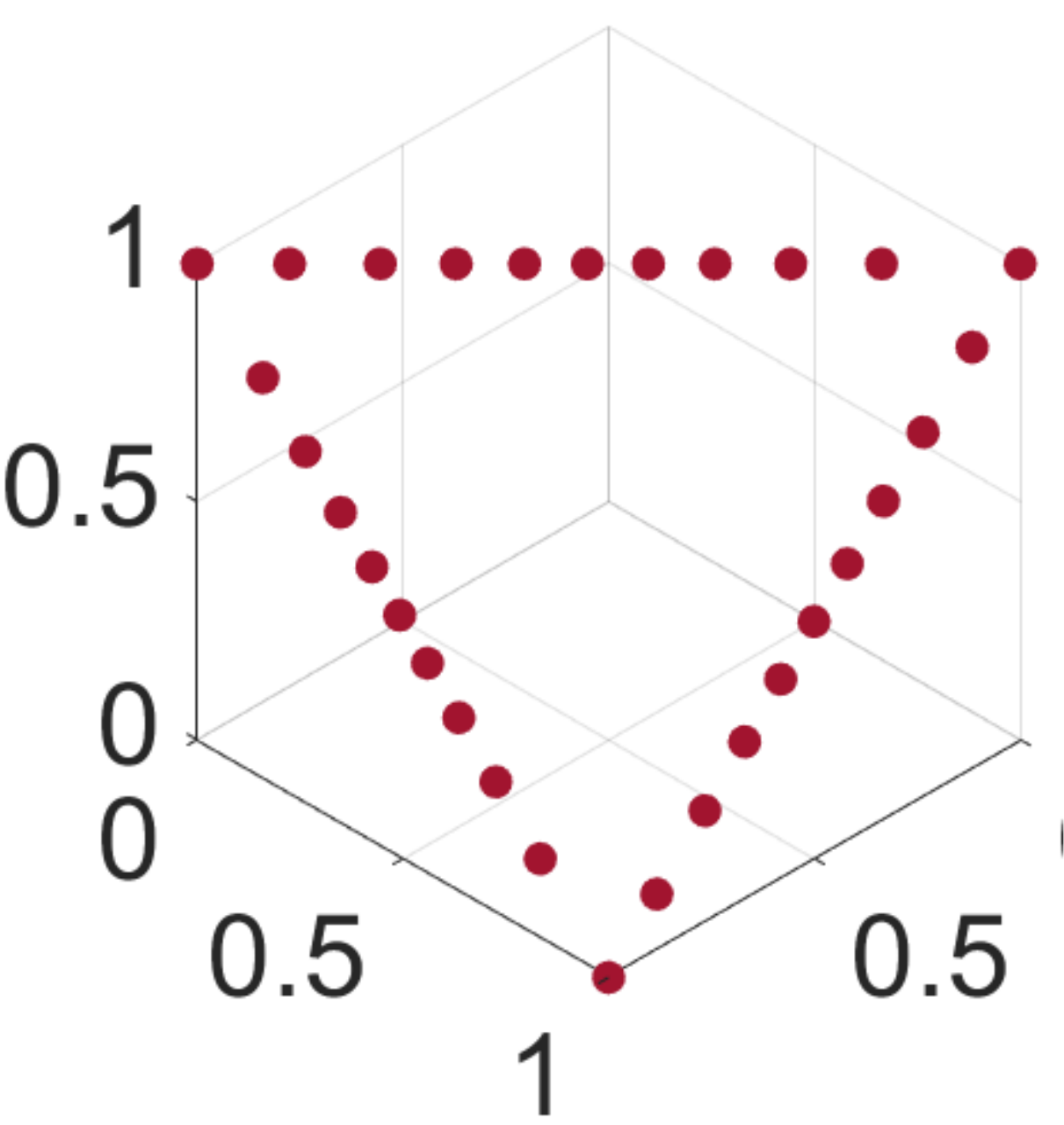}                
}
\subfigure[HV=7.7100]{                    
\includegraphics[scale=0.25]{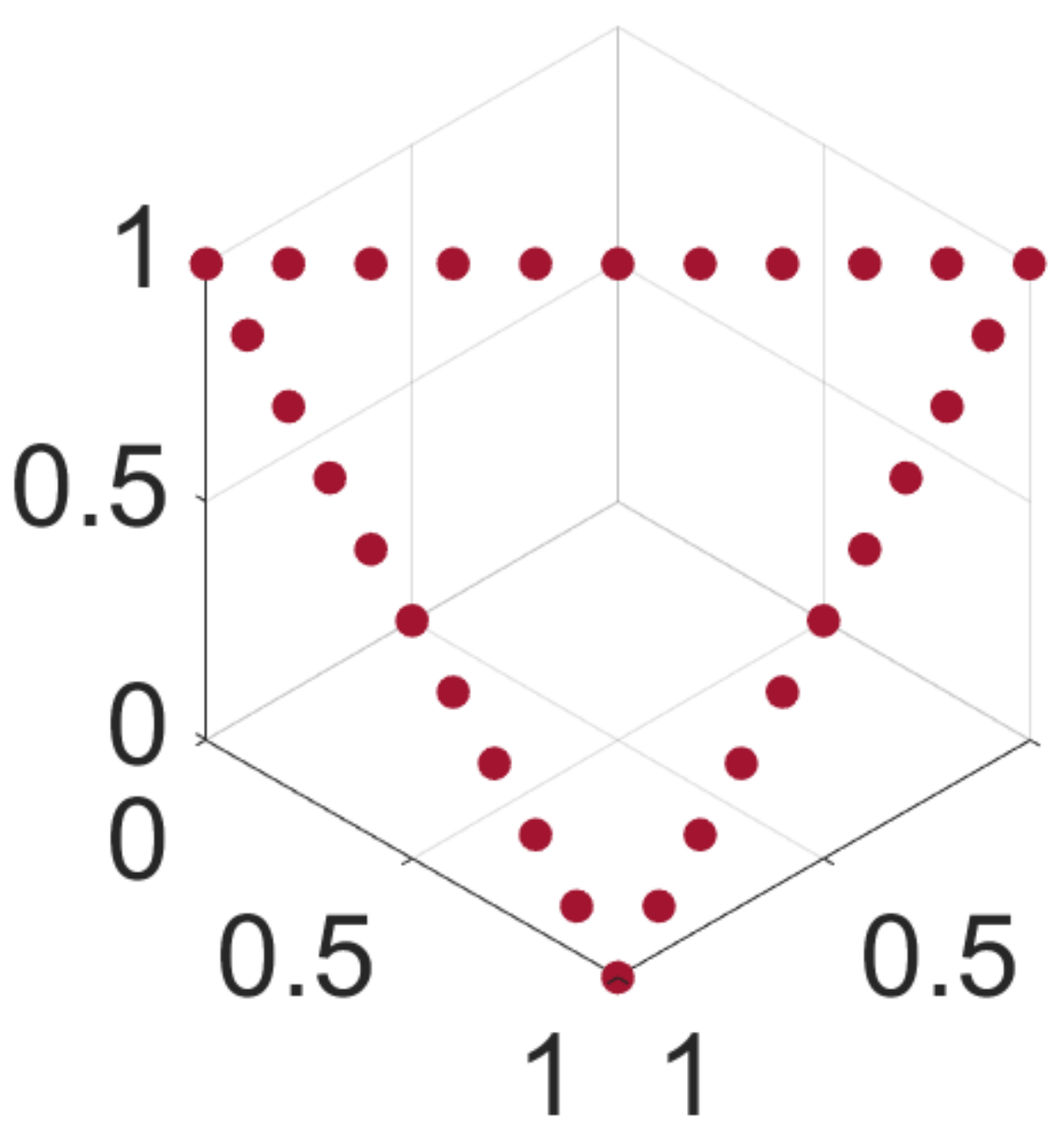}                
}
\caption{Uniform and non-uniform solution sets on four types of line-based Pareto fronts. The reference point is specified as $\mathbf{r}=(-1,-1,-1)$. The hypervolume of each solution set is shown under each figure.} 
\label{intro}                                                        
\end{figure}

For the line-based Pareto fronts, we show that the solutions are not always uniformly distributed. The uniformity depends on how the lines are combined. For example, Fig.~\ref{intro} shows four types of the line-based Pareto fronts with uniform and non-uniform solution sets. The hypervolume value of each solution set is also shown in Fig.~\ref{intro} for the reference point  $\mathbf{r}=(-1,-1,-1)$\footnote{In this paper, maximization of each objective is assumed in multi-objective optimization problems. Thus, the reference point is in the negative orthant.}. In some cases (i.e., (a) and (e)), the uniform solution set has a larger hypervolume than the non-uniform solution set (i.e., (a)$>$(b) and (e)$>$(f)). However, in the other cases (i.e., (c) and (g)), the non-uniform solution set has a larger hypervolume than the uniform solution set (i.e., (c)$>$(d) and (g)$>$(h)). In this paper, we theoretically investigate this issue and explain why the uniform solution set is not always optimal in Fig.~\ref{intro}.

For the plane-based Pareto fronts, we show that the uniform solution set is not always optimal for hypervolume maximization. It is locally optimal with respect to a $(\mu+1)$ selection scheme. For example, Fig.~\ref{intro2} shows two types of the plane-based Pareto fronts with uniform and non-uniform solution sets. The hypervolume value of each solution set is shown in Fig.~\ref{intro2} for the reference point $\mathbf{r}=(-1/8,-1/8,-1/8)$. For each Pareto front, the uniform solution set has a smaller hypervolume than the non-uniform solution set (e.g., (a)$<$(b) and (c)$<$(d)). In this paper, we investigate this issue and prove the local optimality of the uniform solution set on the plane-based Pareto fronts. 

\begin{figure}[!htb]
\centering
\subfigure[HV=0.3223]{                    
\includegraphics[scale=0.13]{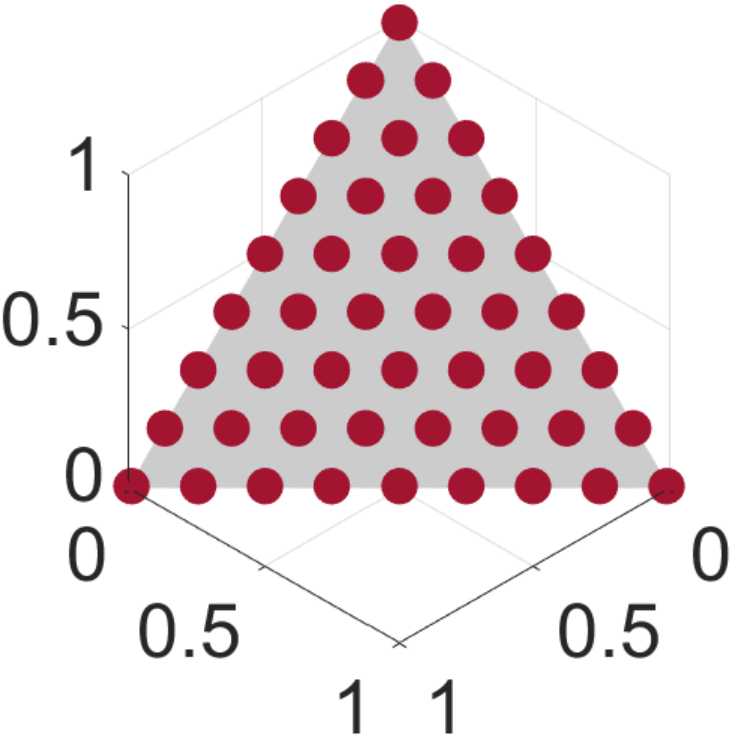}               
}
\subfigure[HV=0.3236]{                    
\includegraphics[scale=0.13]{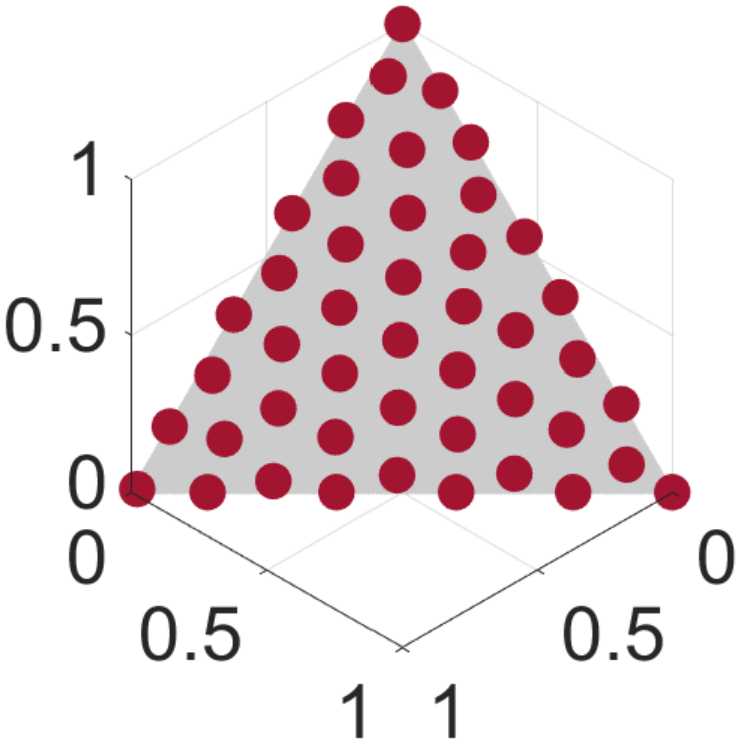}                
}\\
\subfigure[HV=1.1895]{                    
\includegraphics[scale=0.13]{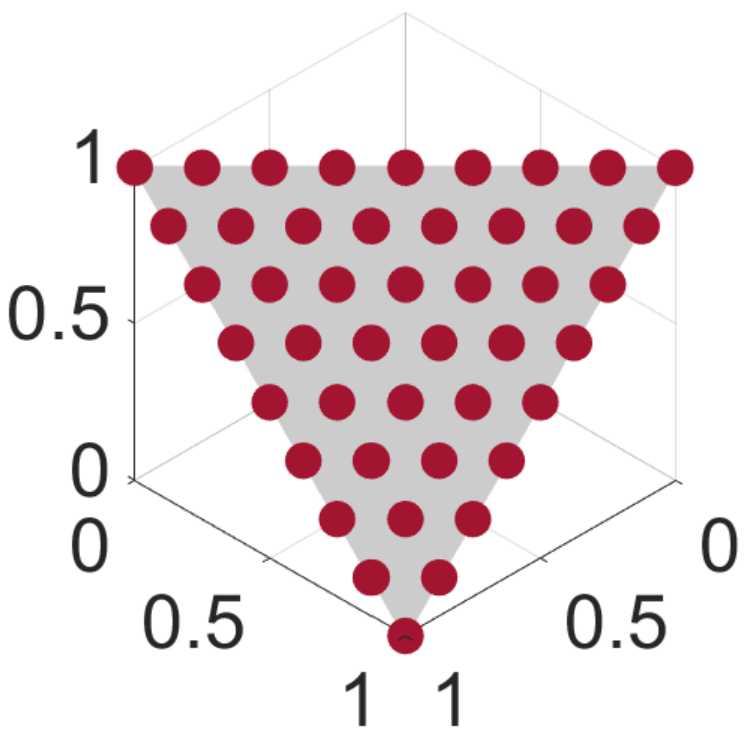}                
}
\subfigure[HV=1.1908]{                    
\includegraphics[scale=0.13]{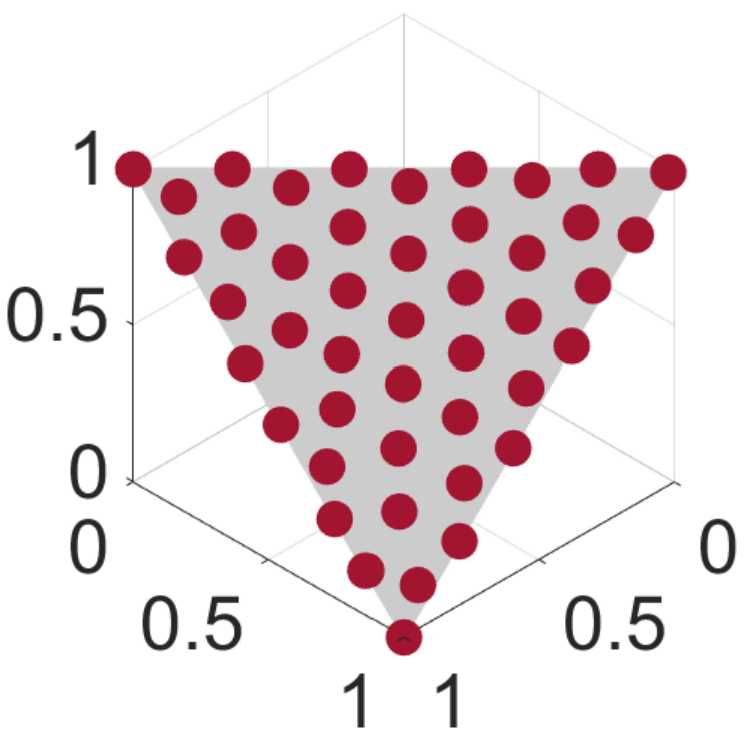}                
}
\caption{Uniform and non-uniform solution sets on two types of plane-based Pareto fronts. The reference point is specified as $\mathbf{r}=(-1/8,-1/8,-1/8)$. The hypervolume of each solution set is shown under each figure.} 
\label{intro2}                                                        
\end{figure}

The main contributions of this paper are summarized as follows:
\begin{enumerate}
\item We extend the single-line Pareto fronts to line-based Pareto fronts with more than one lines in three dimensions, and reveal that a uniform solution set cannot always be obtained by maximizing the hypervolume indicator. The solution set is uniform only if the hypervolume can be decomposed into independent parts.
\item In addition to the line-based Pareto fronts, we also investigate the plane-based Pareto fronts in three dimensions. The main result is that a uniform solution set (i.e., a solution set generated by the DAS method \cite{das1998normal}) is not always optimal for hypervolume maximization. It is locally optimal with respect to a $(\mu+1)$ selection scheme.
\end{enumerate}

The rest of the paper is structured as follows. In Section \ref{related}, the preliminaries of the paper is presented. In Section \ref{section-two}, the line-based Pareto fronts are investigated. In Section \ref{section-plane}, the plane-based Pareto fronts are investigated. Finally, the conclusions are drawn in Section \ref{conclusion}.

This paper is an extended version of our conference paper \cite{shangke2020}. In our conference paper, we only investigated the line-based Pareto fronts. In this paper, in addition to the line-based Pareto fronts, we also investigate the plane-based Pareto fronts. More discussions on the line- and plane-based Pareto fronts are provided in this paper.

\section{Preliminaries}
\label{related}
\subsection{Basic Definitions}
First, the definitions of the hypervolume indicator, the hypervolume contribution, and the hypervolume optimal $\mu$-distributions are presented. 

The hypervolume of a solution set is defined as follows. For a solution set $A\subset \mathbb{R}^m$ and a reference point $\mathbf{r}\in \mathbb{R}^m$, the hypervolume of the solution set $A$ is defined as 
\begin{equation}
\text{HV}(A,\mathbf{r}) = \mathcal{L}\left(\bigcup_{\mathbf{a}\in A}\left\{\mathbf{b}|\mathbf{a}\succeq \mathbf{b}\succeq \mathbf{r}\right\}\right),
\end{equation}
where $\mathcal{L}(\cdot)$ is the Lebesgue measure of a set, and $\mathbf{a}\succeq \mathbf{b}$ means that $\mathbf{a}$ Pareto dominates $\mathbf{b}$ (i.e., $a_i\geq b_i$ for all $i=1,...,m$ and $a_j>b_j$ for at least one $j=1,...,m$ in the maximization case). 

The hypervolume contribution is an important concept based on the hypervolume indicator. It describes the amount of the hypevolume value contributed by a solution to the solution set. Formally, for a solution $\mathbf{s}\in A$, the hypervolume contribution of $\mathbf{s}$ to $A$ is defined as
\begin{equation}
\text{HVC}(\mathbf{s},A,\mathbf{r}) = \text{HV}(A,\mathbf{r}) - \text{HV}(A\setminus \{\mathbf{s}\},\mathbf{r}).
\end{equation}

Based on the hypervolume indicator, the hypervolume optimal $\mu$-distribution is defined as follows \cite{auger2012hypervolume}.
For a Pareto front $\mathcal{F}\subset \mathbb{R}^m$ and a reference point $\mathbf{r}\in \mathbb{R}^m$, the hypervolume optimal $\mu$-distribution is $\mu\in \mathbb{N}$ points on the Pareto front which maximize the hypervolume of the $\mu$ points. The set $A$ containing the optimal $\mu$ points is
\begin{equation}
A = \arg \max_{|A'|=\mu,A'\subset \mathcal{F}} \text{HV}(A',\mathbf{r}).
\end{equation}

\subsection{Hypervolume Optimal $\mu$-Distribution in Two Dimensions}
\label{optimal2d}
For a linear Pareto front in two dimensions, Emmerich et al. \cite{emmerich2007gradient} and Auger et al. \cite{auger2009theory} theoretically show that the $\mu$ solutions which maximize the hypervolume value are uniformly distributed on the Pareto front. When the reference point is sufficiently far away from the Pareto front, the optimal $\mu$-distribution includes the two extreme points of the Pareto front. For example, consider the linear Pareto front $f_1+f_2=1$ and $f_1,f_2\geq 0$, if the reference point $\mathbf{r}=(r,r)$ satisfies $r\leq -1/(\mu-1)$ \cite{brockhoff2010optimal}\footnote{In the case of minimization, this condition is rewritten as $r \geq 1 +1/(\mu-1)$.},  the two extreme points $(0,1)$ and $(1,0)$ of the Pareto front are included in the hypervolume optimal $\mu$-distribution.

Fig.~\ref{hv} gives an illustration of the hypervolume optimal $\mu$-distribution in two dimensions. In Fig.~\ref{hv}, four solutions (i.e., $\mu=4$) are uniformly distributed on the Pareto front. If we set the reference point to $r=-1/(\mu-1)$ (i.e., $r=-1/3$), each solution has the same hypervolume contribution (i.e., the colored square).

\begin{figure}[!htb]
\centering                                           
\includegraphics[scale=0.22]{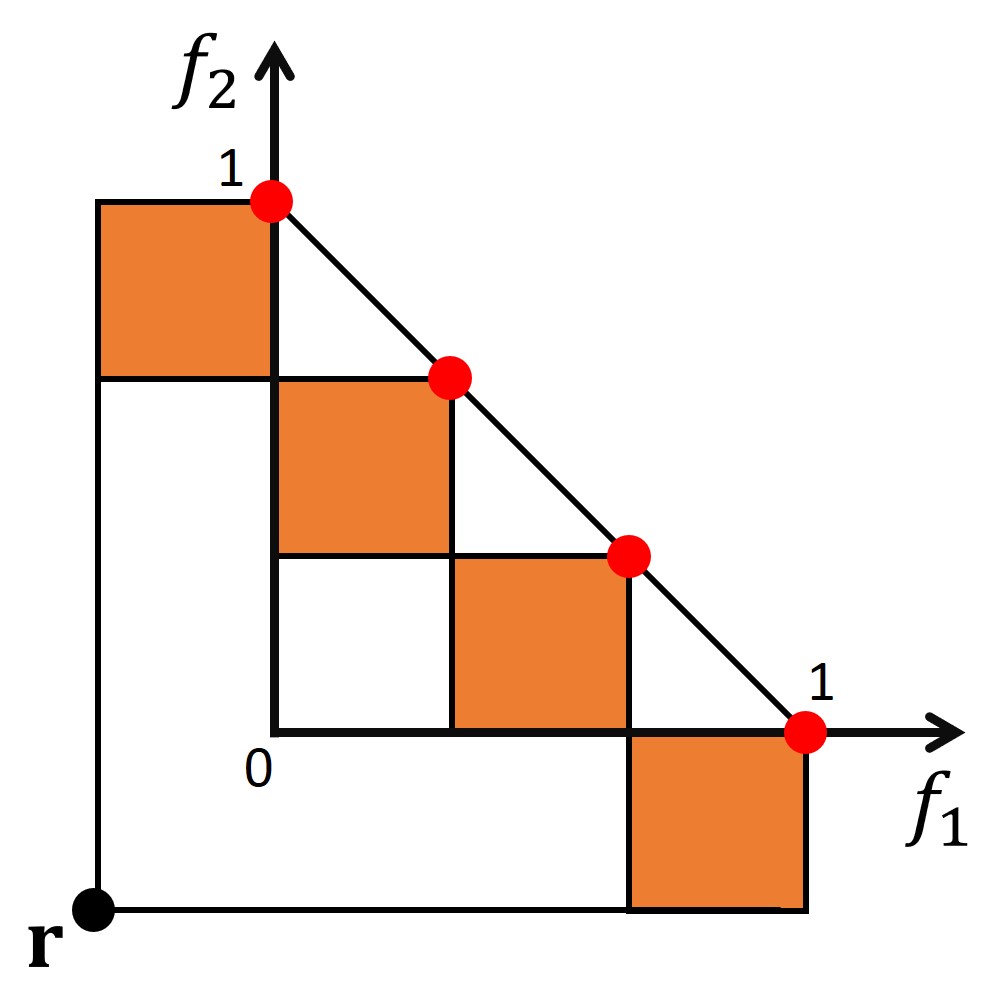}                
\caption{An illustration of the hypervolume optimal $\mu$-distribution on a linear Pareto front in two dimensions.} 
\label{hv}                                                        
\end{figure}

\subsection{Hypervolume Optimal $\mu$-Distribution in Three Dimensions}
\label{state}
In this paper, all the Pareto fronts are studied in the normalized objective space (i.e., $[0,1]^3$). The reference point is specified as $\mathbf{r}=(r,r,r)$ (i.e., each element of $\mathbf{r}$ is the same). 

Shukla et al. \cite{shukla2014theoretical} theoretically studied the hypervolume optimal $\mu$-distribution on a single-line Pareto front in three dimensions and showed that the solutions are not always uniformly distributed for hypervolume maximization. A single-line Pareto front in three dimensions is a degenerated Pareto front. For such a single-line Pareto front, it has the following two types:
\begin{enumerate}
\item \textbf{Type I:} Two objectives are conflicting with each other, and the other objective is constant. Fig.~\ref{type1-2} (a) shows an example of Type I Pareto front. 
\item \textbf{Type II:}  Two objectives are consistent (i.e., not conflicting) with each other, and the other objective is conflicting with the two objectives. Fig.~\ref{type1-2} (b) shows an example of Type II Pareto front. 
\end{enumerate}
\begin{figure}[!htb]
\centering
\subfigure[Type I Pareto front]{                    
\includegraphics[scale=0.35]{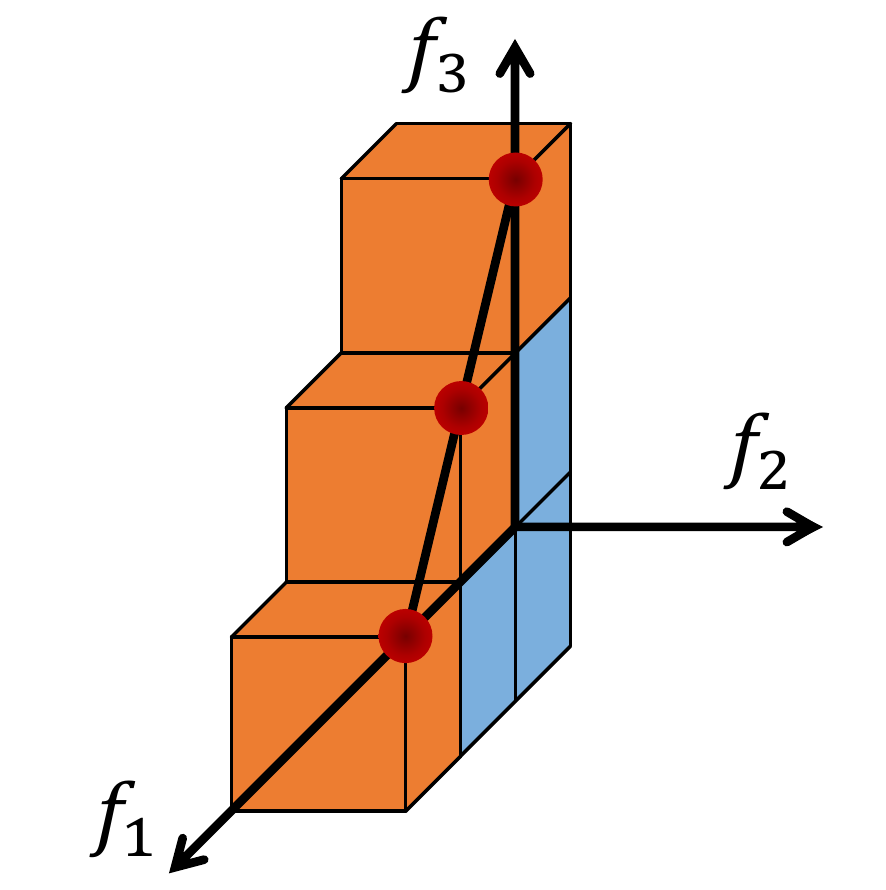}               
}
\subfigure[Type II Pareto front]{                    
\includegraphics[scale=0.35]{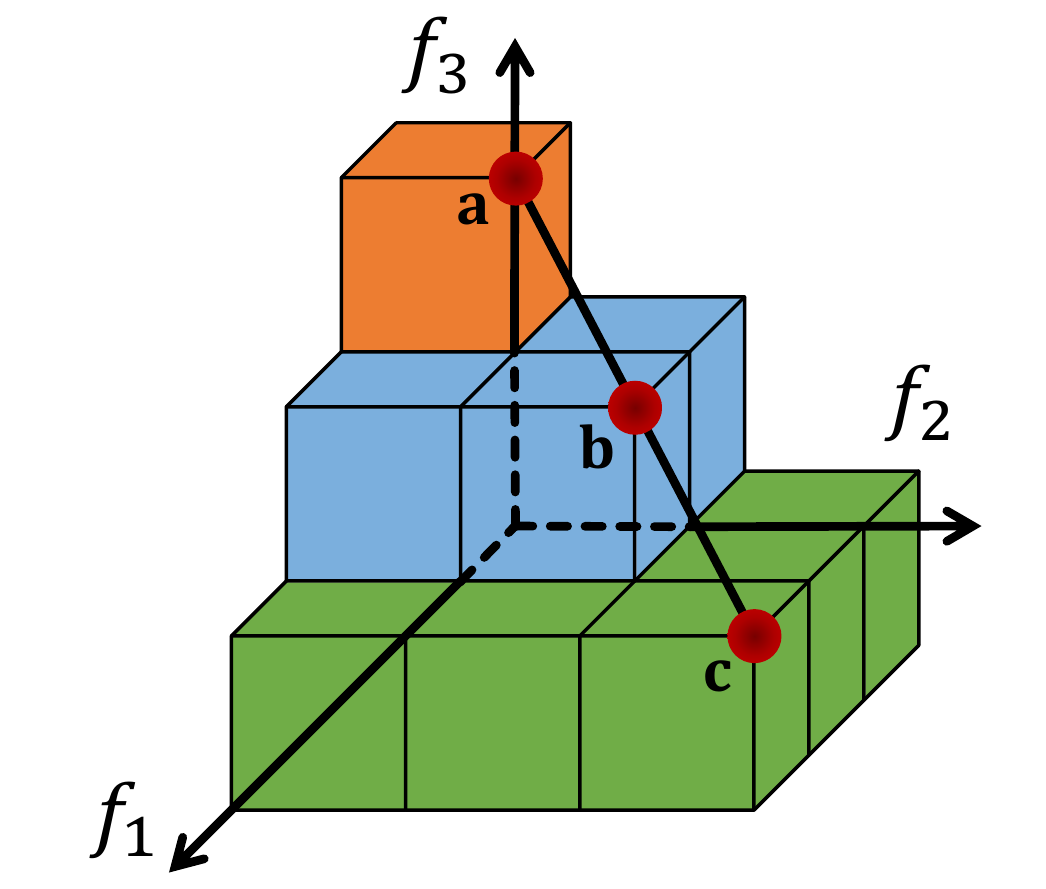}                
}
\caption{Single-line Pareto fronts in three dimensions.} 
\label{type1-2}                                                        
\end{figure}

For the Type I Pareto front $f_1+f_3=1,f_1,f_3\geq0,f_2=0$ in Fig.~\ref{type1-2} (a), when the reference point is specified as $r=-1/(\mu-1)$, each solution of the uniform solution set has the same hypervolume contribution. We can see that the hypervolume of the three solutions on Type I Pareto front can be calculated as the two-dimensional hypervolume of the three solutions in $f_1\text{-}f_3$ space multiplying $|r|$. Thus, maximizing the hypervolume in three dimensions is equivalent to maximizing the hypervolume in two dimensions (i.e., $f_1\text{-}f_3$ space). Therefore, the solutions are uniformly distributed on the Type I Pareto front.

For the Type II Pareto front $f_1+f_3=1,f_1,f_3\geq0,f_1=f_2$ in Fig.~\ref{type1-2} (b), when the reference point is specified as $r=-1/(\mu-1)$, each solution of the uniform solution set has a different hypervolume contribution. For example, the hypervolume contribution of each solution $\mathbf{a} = (0,0,1)$, $\mathbf{b} = (0.5,0.5,0.5)$, and $\mathbf{c} = (1,1,0)$ in Fig.~\ref{type1-2} (b) is $0.5^3$, $3\times 0.5^3$, and $5\times 0.5^3$, respectively. $\mathbf{c}$ has a larger hypervolume contribution than $\mathbf{b}$, and $\mathbf{b}$ has a larger hypervolume contribution than $\mathbf{a}$. If we move $\mathbf{b}$ towards $\mathbf{c}$ to a new location $\mathbf{b}' = (0.6,0.6,0.4)$ as illustrated in Fig. \ref{illustration-linePF2}, then the hypervolume contribution of $\mathbf{b}'$ is $(0.5^2\times 3+0.1^2+0.1\times 0.5\times 4)\times 0.4 = 0.384>0.375 = 3\times 0.5^3$. This means that the whole hypervolume in Fig. \ref{type1-2} (b) can be improved by moving $\mathbf{b}$ to $\mathbf{b}'$. Therefore, the uniform solution set is not optimal for hypervolume maximziation on the Type II Pareto front.

\begin{figure}[!htb]
\centering
\subfigure[Before moving $\mathbf{b}$]{                    
\includegraphics[scale=0.4]{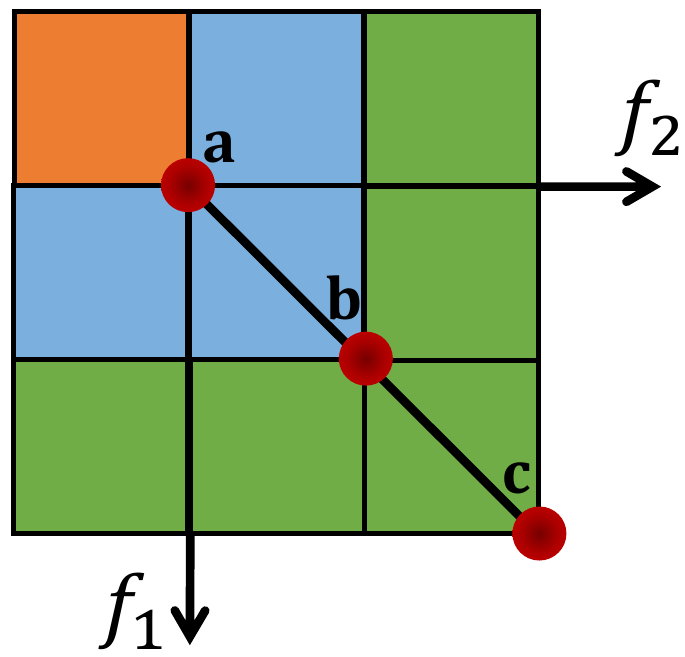}               
}
\subfigure[After moving $\mathbf{b}$]{                    
\includegraphics[scale=0.4]{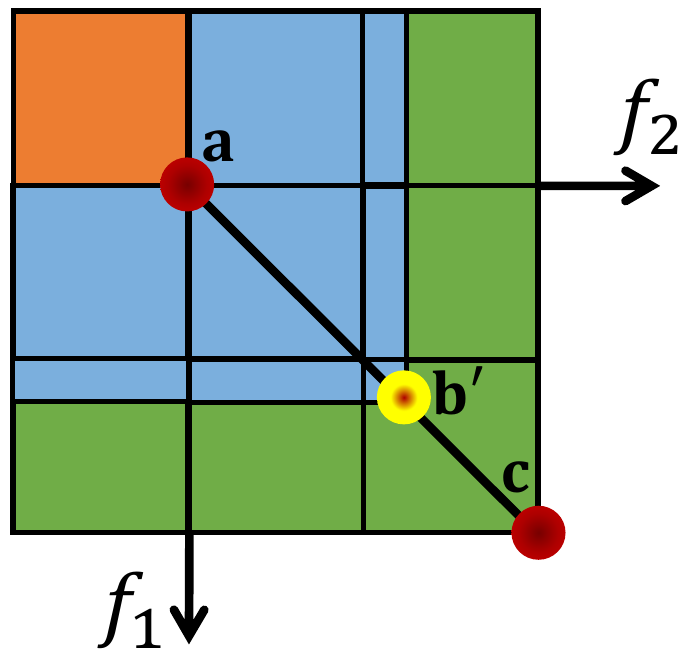}                
}
\caption{An illustration of moving solution $\mathbf{b}$ in Fig. \ref{type1-2} (b). Only $f_1$-$f_2$ subspace is shown. The hypervolume contributions of $\mathbf{b}$ and $\mathbf{b}'$ are the blue areas in (a) and (b) times their $f_3$ values, respectively.} 
\label{illustration-linePF2}                                                        
\end{figure}

\subsection{DAS Method and Reference Point Specification for Hypervolume Calculation}
\label{suggestedR}
The DAS method \cite{das1998normal} is a widely used method for generating uniformly distributed points on a simplex. DAS is adopted in many popular EMOAs (e.g., MOEA/D \cite{zhang2007moea}, NSGA-III \cite{deb2013evolutionary}) for generating weight vectors or reference points. 

DAS generates all points $\mathbf{w}=(w_1,w_2,...,w_m)$ satisfying the following relations:
\begin{equation}
\begin{aligned}
&\sum_{i=1}^{m}w_i=1 \text{ and } w_i\geq 0 \text{ for }i=1,2,...,m,\\
&w_i\in\left\{0,\frac{1}{H},\frac{2}{H},...,1\right\}  \text{ for }i=1,2,...,m,\\
\end{aligned}
\end{equation}
where $m$ is the number of objectives and $H$ is a positive integer. 

The total number of generated points is $\binom{H+m-1}{m-1}$. Fig.~\ref{das} (a) illustrates the points generated by DAS in three dimensions with $H=3$ (i.e., $\mu=10$). In the EMO community, a DAS solution set is implicitly assumed to be a perfectly uniform solution set on the unit simplex (i.e., a triangular plane Pareto front). Thus, a uniform solution set on a triangular plane Pareto front means a DAS solution set in this paper. 

\begin{figure}[!htb]
\centering
\subfigure[DAS solution set]{\includegraphics[scale=0.12]{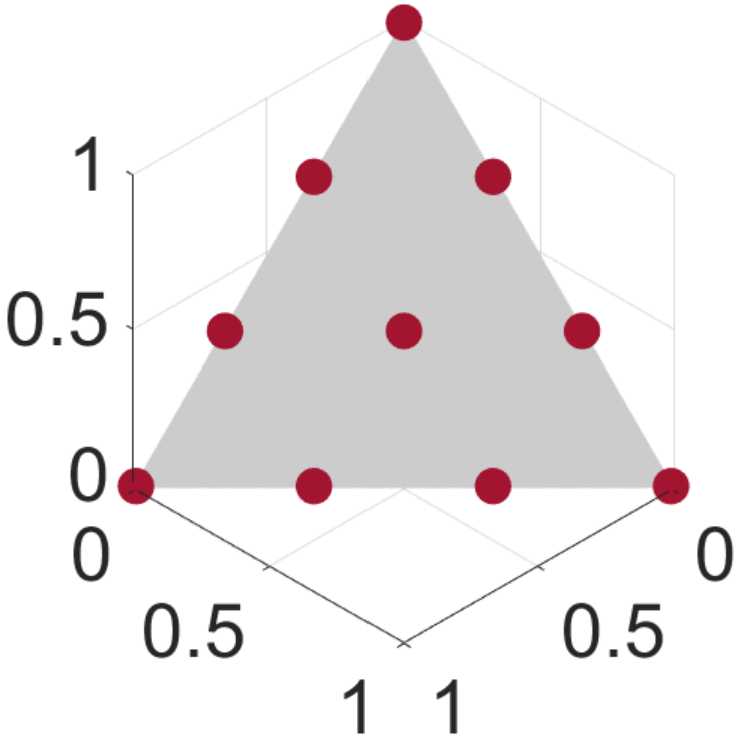}}             
\subfigure[The hypervolume of the DAS solution set]{\includegraphics[scale=0.3]{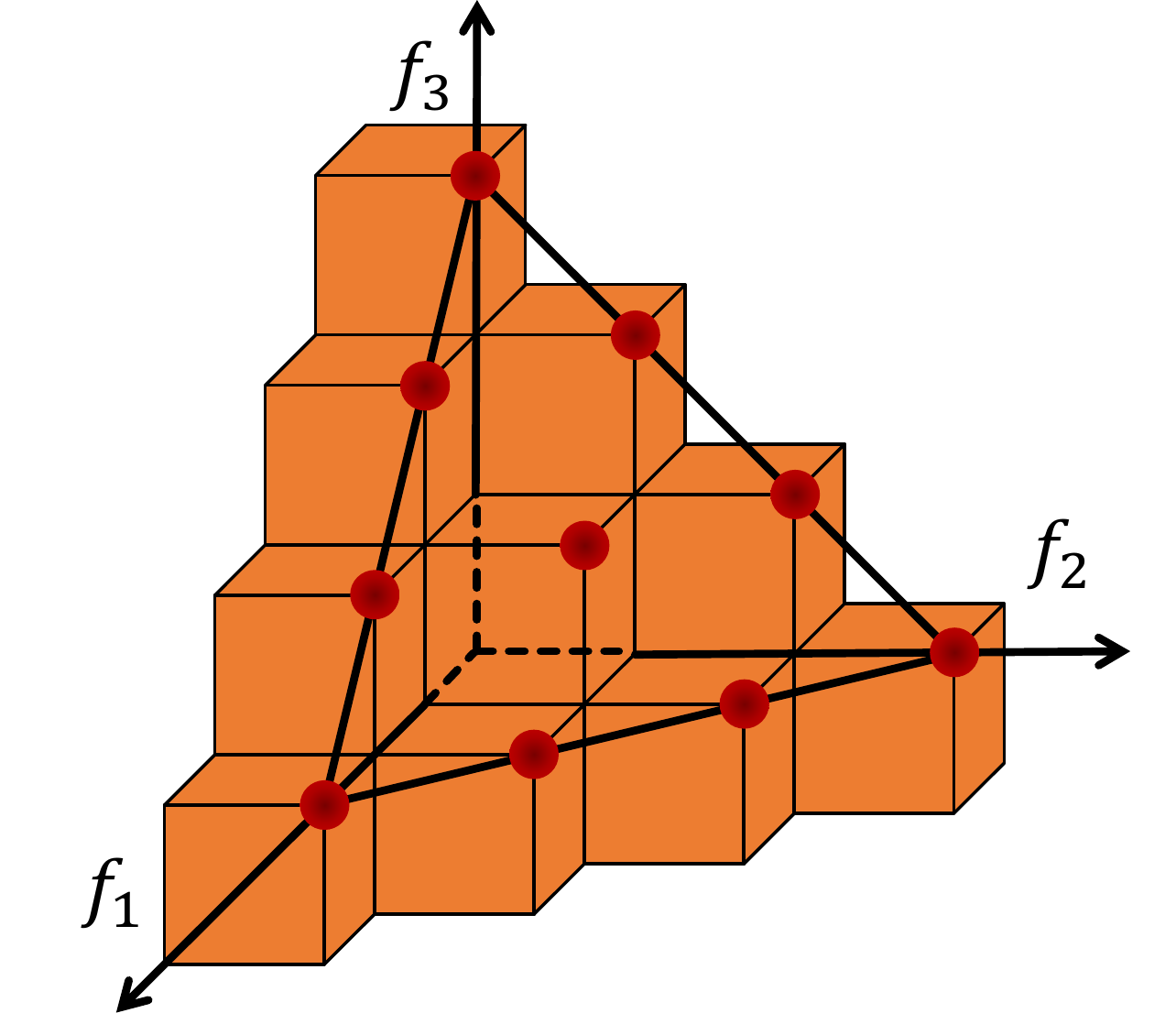}}          
\caption{A DAS solution set in three dimensions and its hypervolume.} %
\label{das}                                                        
\end{figure}

Ishibuchi et al. \cite{ishibuchi2017reference} suggested a reference point specification method for hypervolume calculation. The general idea of the specification method is to set the reference point so that all solutions of the DAS solution set on the triangular linear Pareto front have the same hypervolume contribution. 

Let us consider the linear Pareto front $f_1+f_2+...+f_m=1$ and $f_1,f_2,...,f_m\geq 0$. Given a DAS solution set on the Pareto front, the reference point $\mathbf{r}=(r,...,r)$ is specified as follows \cite{ishibuchi2017reference}\footnote{In the case of minimization, Eq. \eqref{specification} is rewritten as $r = 1 +1/H$.}:
\begin{equation}
r=-\frac{1}{H},
\label{specification}
\end{equation}
where $H$ is the parameter in (3).

Using this reference point specification, all the solutions have the equal hypervolume contribution as illustrated in Fig.~\ref{das} (b). Eq.~\eqref{specification} is a generalization of the reference point specification $r = -1/(\mu-1)$ in two dimensions (see Fig.~\ref{hv}) since $H=\mu-1$ when $m=2$.

\section{Line-based Pareto fronts}
\label{section-two}
In this section, we extend the Type I Pareto front, and investigate the hypervolume optimal $\mu$-distribution on a Pareto front which consists of several joint Type I Pareto fronts.

We consider the following four types of Pareto fronts:
\begin{enumerate}
\item \textbf{Type III:} The two lines are the boundary of a triangular front. Fig.~\ref{type3-4} (a) shows a Type III Pareto front, where the triangular front is $f_1+f_2+f_3=1,f_1,f_2,f_3\geq 0$.
\item \textbf{Type IV:}  The two lines are the boundary of an inverted triangular front. Fig.~\ref{type3-4} (b) shows a Type IV Pareto front where the inverted triangular front is $f_1+f_2+f_3=2,0\leq f_1,f_2,f_3\leq 1$.
\item \textbf{Type V:} The three lines are the boundary of a triangular front. Fig.~\ref{type3-4} (c) shows a Type V Pareto front where the triangular front is $f_1+f_2+f_3=1,f_1,f_2,f_3\geq 0$.
\item \textbf{Type VI:}  The three lines are the boundary of an inverted triangular front. Fig.~\ref{type3-4} (d) shows a Type VI Pareto front where the inverted triangular front is $f_1+f_2+f_3=2,0\leq f_1,f_2,f_3\leq 1$.
\end{enumerate}
\begin{figure}[!htb]
\centering
\subfigure[Type III Pareto front]{                    
\includegraphics[scale=0.3]{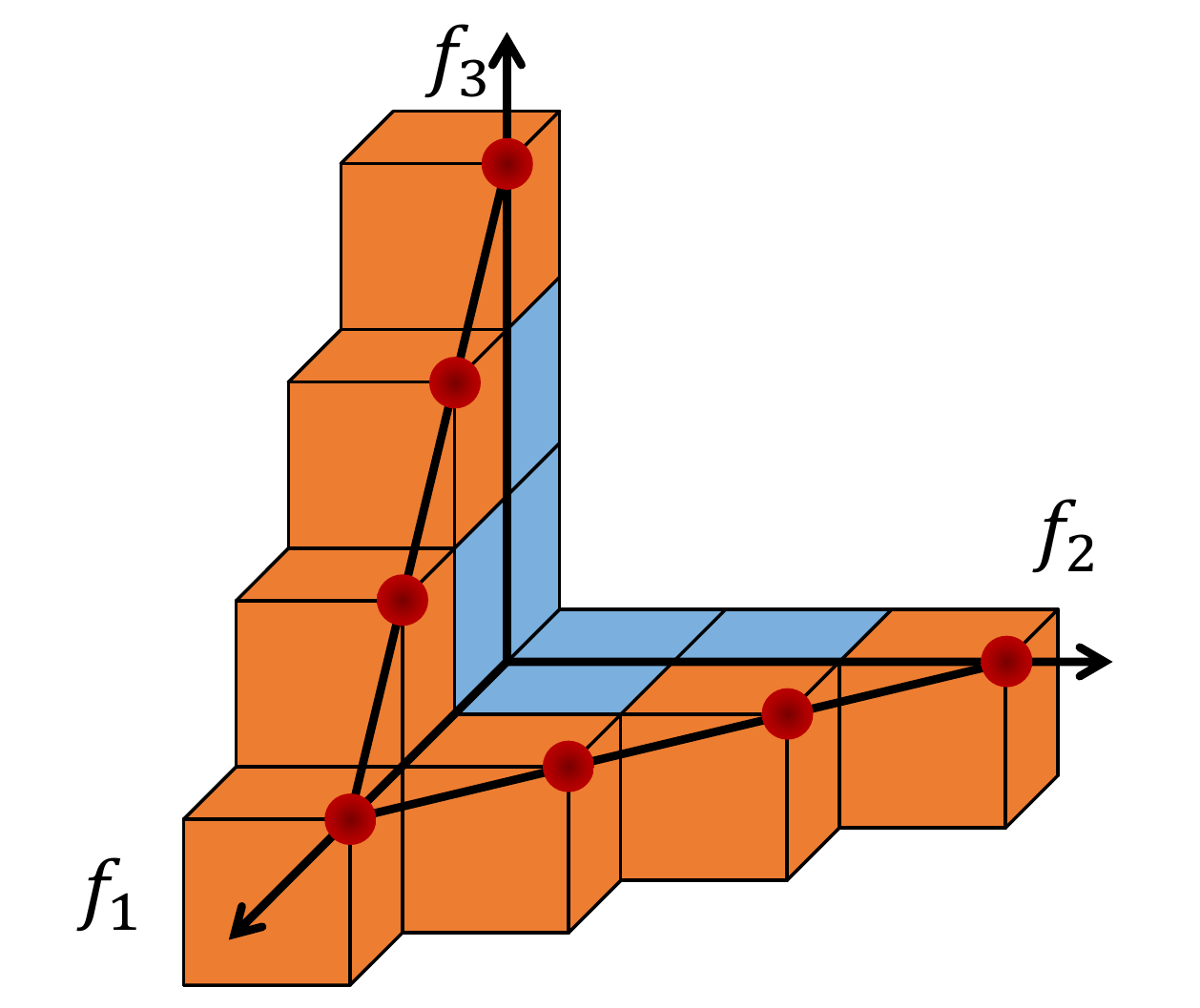}               
}
\subfigure[Type IV Pareto front]{                    
\includegraphics[scale=0.3]{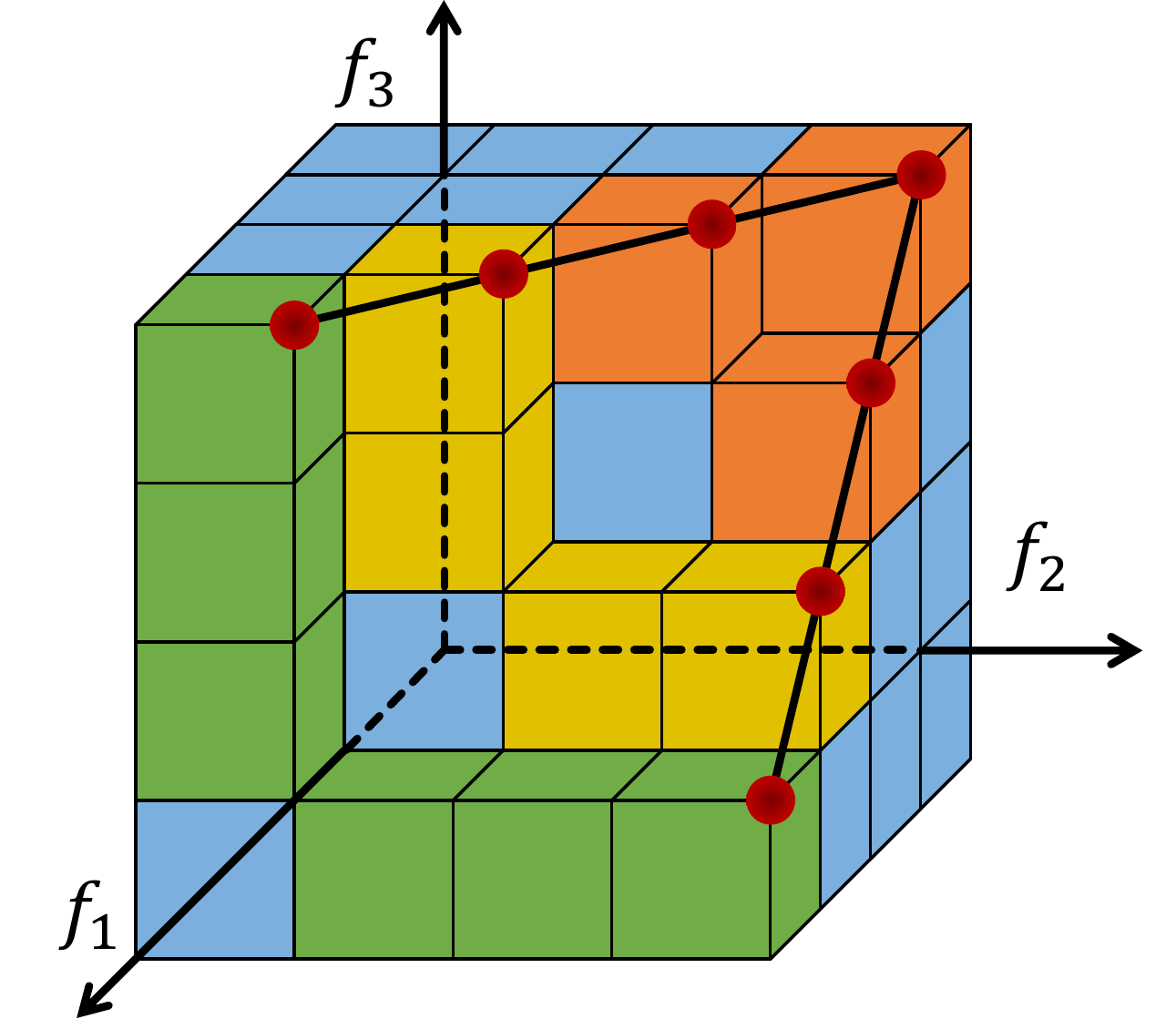}                
}
\subfigure[Type V Pareto front]{                    
\includegraphics[scale=0.3]{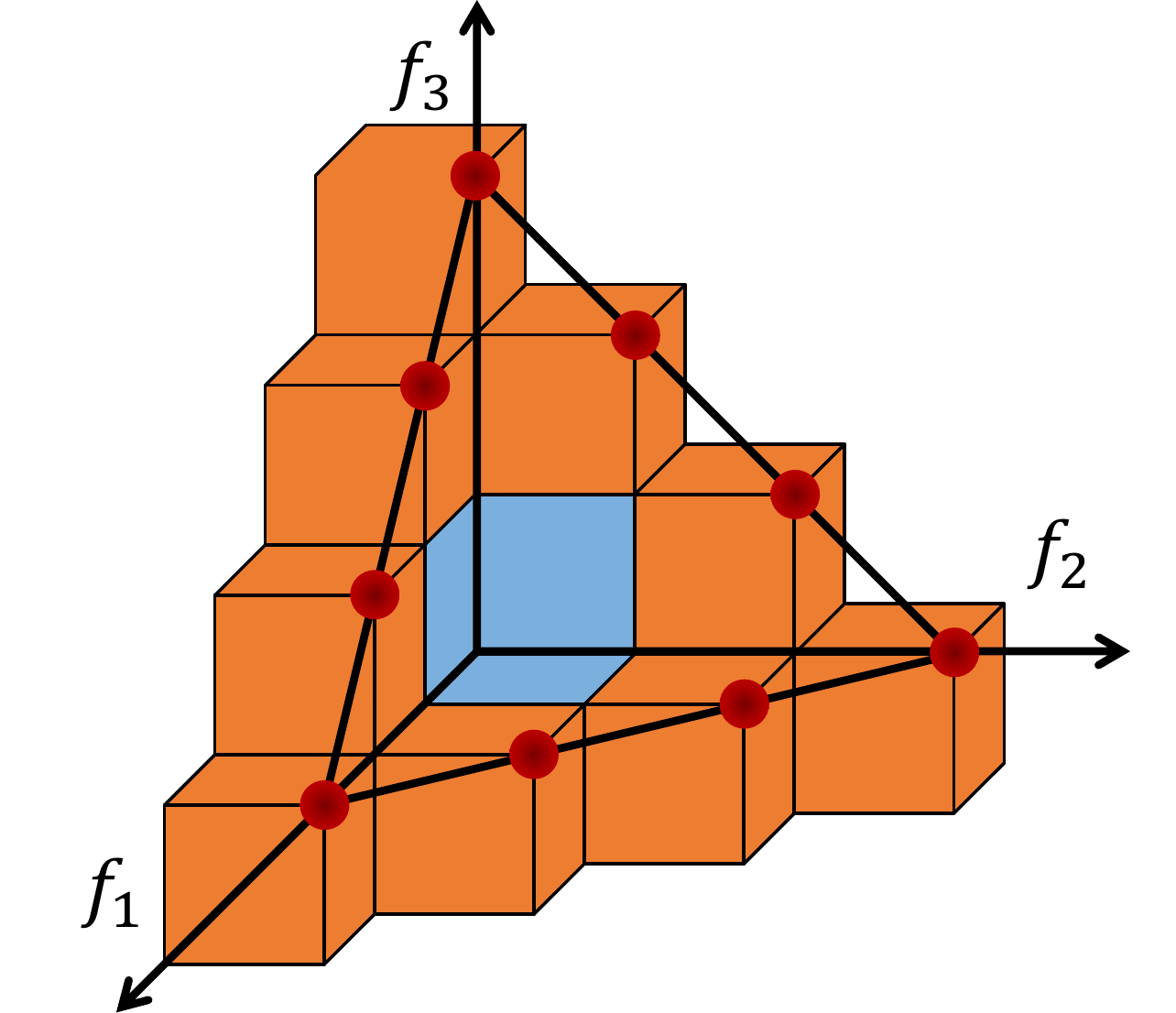}               
}
\subfigure[Type VI Pareto front]{                    
\includegraphics[scale=0.3]{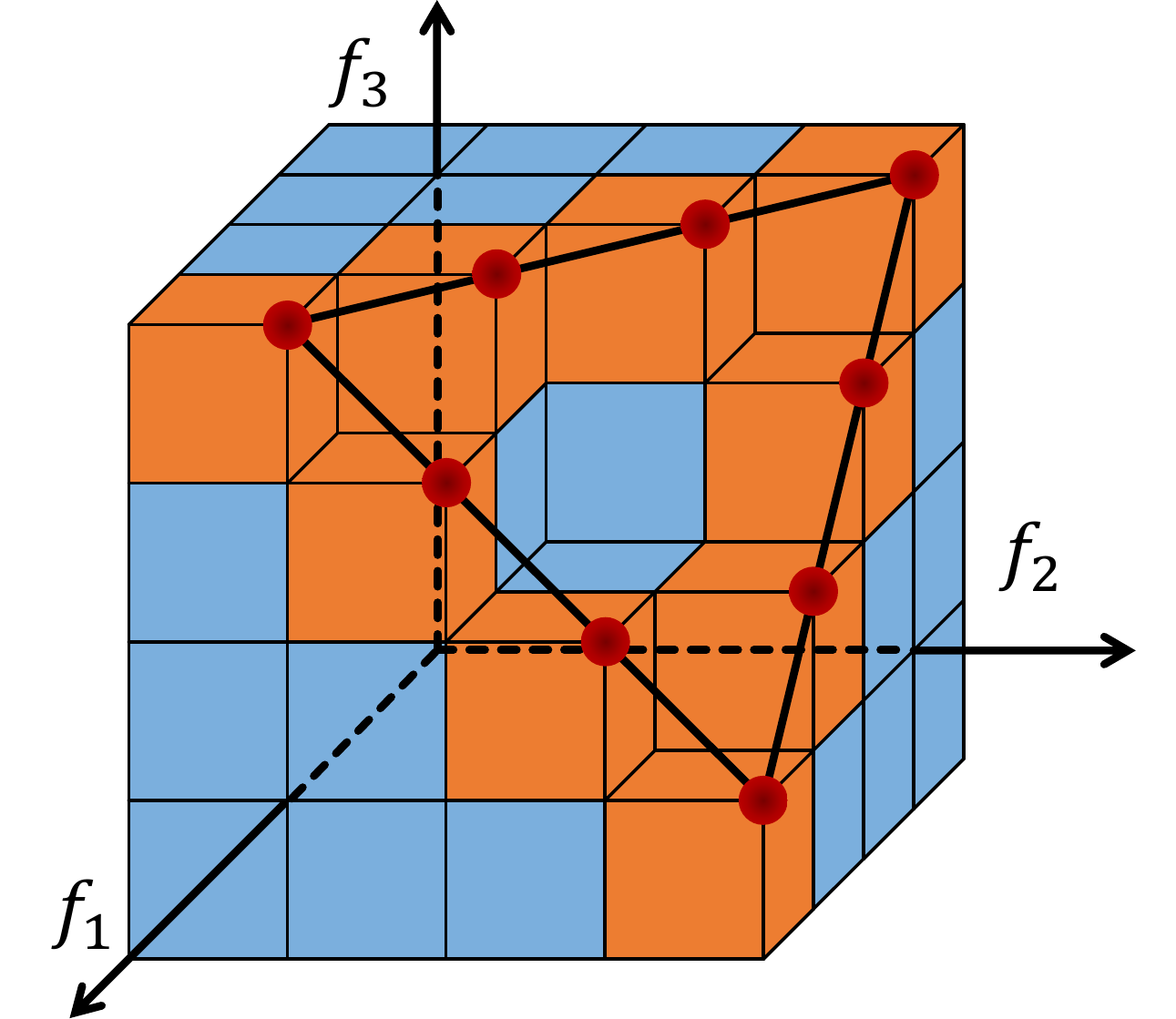}                
}
\caption{Line-based Pareto fronts in three dimensions.} 
\label{type3-4}                                                        
\end{figure}

Next, the hypervolume optimal $\mu$-distribution on each of these four types of Pareto fronts is investigated.

\subsection{Type III Pareto Front}
Let us consider the Type III Pareto front with two lines where the first line is $f_1+f_3=1,f_1,f_3\geq 0$ and $f_2=0$, and the second line is $f_1+f_2=1,f_1,f_2\geq 0$ and $f_3=0$ as in Fig.~\ref{type3-4} (a). Before investigating the hypervolume optimal $\mu$-distribution on this type of Pareto front, we show the following lemma which will be used in our analysis.

\begin{lemma}[Reproduced from Theorem 3 in \cite{brockhoff2010optimal}]
Given $\mu\in\mathbb{N}_{\geq 2}$, and a linear Pareto front $f(x) = 1- x$ where $x\in[0,1]$, the unique optimal $\mu$-distribution $(x_1^\mu,...,x_{\mu}^\mu)$ for the hypervolume indicator with reference point $\mathbf{r} = (r_1,r_2)\in\mathbb{R}^2_{<0}$ can be described by 
\begin{equation}
x_i^\mu = f^{-1}(F_l)+\frac{i}{\mu+1}(F_r-f^{-1}(F_l))
\end{equation}
for all $1\leq i \leq \mu$ where
\begin{equation}
\begin{aligned}
&F_l = \min \left\{1-r_1,\frac{\mu+1}{\mu} - \frac{1}{\mu}f(1-r_2),\frac{\mu}{\mu-1} \right\},\\
&F_r = \min \left\{1-r_2,\frac{\mu+1}{\mu}-\frac{1}{\mu}f^{-1}(1-r_1),\frac{\mu}{\mu-1} \right\}.
\end{aligned}
\end{equation}
\end{lemma}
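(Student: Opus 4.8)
The plan is to treat this as a smooth constrained maximization and solve it via the Karush--Kuhn--Tucker (KKT) conditions. First I would parametrize a candidate set by the $x$-coordinates of its $\mu$ points and, since the hypervolume depends only on the set, order them as $0\le x_1\le\dots\le x_\mu\le 1$ without loss of generality; the corresponding objective values are $f(x_i)=1-x_i$. Writing the dominated region as the union of the boxes $[r_1,x_i]\times[r_2,1-x_i]$ and slicing it into vertical columns gives
\begin{equation}
\text{HV} = \sum_{i=1}^{\mu} (x_i - x_{i-1})\,(1 - x_i - r_2), \qquad x_0 := r_1 .
\end{equation}
This is a quadratic in $(x_1,\dots,x_\mu)$ whose Hessian is a constant negative-definite tridiagonal matrix (the negative of the discrete second-difference operator), so $\text{HV}$ is strictly concave on the convex feasible region. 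Hence the maximizer is unique and the KKT conditions are both necessary and sufficient; this settles the ``unique'' part at the outset and reduces the theorem to solving a single stationarity system.

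Next I would compute the stationarity conditions. For an interior (non-clamped) coordinate one finds $\partial\text{HV}/\partial x_j = x_{j+1}-2x_j+x_{j-1}=0$, which forces the points to be \emph{equally spaced}; for the rightmost point the condition instead reads $1-x_\mu-r_2 = x_\mu-x_{\mu-1}$. Reading the recurrence at $j=1$ through the convention $x_0=r_1$, together with the right-end condition, shows that when no box constraint is active the $\mu$ points are exactly the interior nodes of $\mu+2$ equally spaced points whose two ``virtual endpoints'' are $r_1$ and $1-r_2$. Matching this to the claimed formula $x_i=f^{-1}(F_l)+\frac{i}{\mu+1}(F_r-f^{-1}(F_l))$ and using $f^{-1}(t)=1-t$ identifies the unclamped values $F_l=1-r_1$ and $F_r=1-r_2$, i.e. the first arguments of the two minima.

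The remaining work is the case analysis that produces the other two arguments of each minimum. The only inequality constraints that can bind are $x_1\ge 0$ and $x_\mu\le 1$: the common gap $d=(1-r_1-r_2)/(\mu+1)>0$ keeps the ordering strict and leaves interior points inside $[0,1]$, so interior ordering never binds. If $x_1=0$ is clamped while $x_\mu$ stays interior, re-solving the stationarity system gives spacing $d'=(1-r_2)/\mu$ and virtual left endpoint $-d'$, i.e. $F_l=\frac{\mu+1}{\mu}-\frac{1}{\mu}f(1-r_2)$; if both ends clamp to $0$ and $1$ the points become $\frac{i-1}{\mu-1}$, giving $F_l=\frac{\mu}{\mu-1}$. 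The symmetry $x\mapsto 1-x$, which fixes the front and swaps $r_1\leftrightarrow r_2$, then yields the three candidates for $F_r$ from those for $F_l$. It finally remains to show that the genuinely active regime is always the one attaining the \emph{minimum} of its three candidates, so that the $\min$ operators encode the full KKT case split.

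I expect this last step to be the main obstacle. I would prove it via the monotone dictionary ``the constraint $x_1\ge 0$ binds $\iff$ the unclamped value $1-r_1$ exceeds the relevant clamped candidate,'' verifying it in each of the interacting cases (left free or clamped, combined with right free or clamped) and checking that the selected regime is actually feasible (correct sign of the spacing and $0\le x_1,x_\mu\le 1$). Because $\text{HV}$ is strictly concave, any KKT point I exhibit is automatically the unique global maximizer, so once the min-formula is shown to reproduce a valid KKT point for every sign pattern of $(r_1,r_2)\in\mathbb{R}^2_{<0}$, the proof is complete.
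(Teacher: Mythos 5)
The paper contains no proof of this statement for you to be compared against: Lemma~1 is reproduced verbatim from Theorem~3 of \cite{brockhoff2010optimal} and is used downstream as a black box (only its two corollaries --- equal spacing, and the reference-point conditions for including extreme points --- are exploited in Theorem~1). So your argument has to stand on its own, and it essentially does. Your column decomposition $\mathrm{HV}=\sum_{i=1}^{\mu}(x_i-x_{i-1})(1-x_i-r_2)$ with $x_0:=r_1$ is exact on the closed ordered polytope (coincident points contribute zero-width terms), and the Hessian is the constant matrix $\mathrm{tridiag}(1,-2,1)$ --- note this \emph{is} the Dirichlet second-difference operator, not its negative, a harmless slip --- which is negative definite, so strict concavity, uniqueness, and sufficiency of KKT hold as you claim. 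Your identification of the clamped candidates is correct (left clamped, right free gives spacing $(1-r_2)/\mu$, hence $F_l=\tfrac{\mu+1}{\mu}-\tfrac{1}{\mu}f(1-r_2)$; both clamped gives $F_l=F_r=\tfrac{\mu}{\mu-1}$), and the step you flag as the main obstacle does go through with exactly your dictionary: primal feasibility $x_1\ge 0$ of the fully free stationary point reads $\mu r_1+1-r_2\ge 0$, which is literally $1-r_1\le\tfrac{\mu+1}{\mu}-\tfrac{1}{\mu}r_2$, while dual feasibility of the left-clamped point requires $\lambda=-r_1-\tfrac{1-r_2}{\mu}\ge 0$, the reverse inequality; so each regime admits a valid KKT point precisely when its candidate attains the minimum, ties yielding the same configuration. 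The cross-cases close similarly (e.g.\ $-r_1,-r_2\ge\tfrac{1}{\mu-1}$ forces $\tfrac{\mu+1-r_2}{\mu}\ge\tfrac{\mu}{\mu-1}$, and free-free feasibility forces $-r_1\le\tfrac{1}{\mu-1}$, so the selected candidate is the minimum of all three, not just of a pair). What remains is only this finite, mechanical verification that you already anticipated; there is no missing idea and no step that fails.
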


Lemma 1 provides the following two main conclusions:
\begin{enumerate}
\item The optimal $\mu$ solutions are equispaced on the Pareto front, since $x_i^\mu-x_{i-1}^\mu$ is constant for all $i$. 
\item The inclusion of the extreme points $(0,1)$ and $(1,0)$ in the optimal $\mu$-distribution depends on the location of the reference point. If the reference point is $\mathbf{r} = (r,r)$, then $r\leq -1/(\mu-1)$ guarantees the inclusion of the two extreme points. If the reference point is $\mathbf{r} = (r,0)$, then $r\leq -1/\mu$ guarantees the inclusion of one extreme point $(1,0)$.
\end{enumerate}

Based on Lemma 1, we present the following theorem describing the hypervolume optimal $\mu$-distribution on the Type III Pareto front.

\begin{theorem}
\label{theorem4-1}
For $\mu > 3$ solutions on the Type III Pareto front,
\begin{enumerate}
\item if $\mu$ is odd, the optimal $\mu$-distribution is that the same number of solutions lie uniformly on each of the two lines where the three extreme points (i.e., (1,0,0), (0,1,0), (0,0,1)) are included, under the condition of $r\leq -2/(\mu-1)$,
\item if $\mu$ is even, the optimal $\mu$-distribution is that a different number of solutions lie uniformly on each of the two lines where the difference in the number of solutions on each line is one, and the three extreme points (i.e., (1,0,0), (0,1,0), (0,0,1)) are included, under the condition of $r\leq -1/\lfloor \frac{\mu-1}{2}\rfloor$.
\end{enumerate}
\end{theorem}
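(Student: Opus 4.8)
The plan is to reduce the three-dimensional problem on the two lines to a pair of two-dimensional problems governed by Lemma 1, and then to solve a discrete allocation problem that decides how many of the $\mu$ solutions sit on each line. Write $S_1$ for the solutions on the first line (those with $f_2=0$) and $S_2$ for those on the second line (those with $f_3=0$). First I would split the dominated region according to the signs of $f_2$ and $f_3$: in the slab $f_3>0$ only $S_1$ can dominate and the $f_2$-extent is the constant interval $[r,0]$; symmetrically for $f_2>0$ and $S_2$; no point with $f_2>0,f_3>0$ is dominated; and the remaining corner slab has $f_2,f_3\in[r,0]$. Inclusion--exclusion then yields
\[
\text{HV} = |r|\,\text{HV}_{2D}(S_1',(r,r)) + |r|\,\text{HV}_{2D}(S_2',(r,r)) - \big(\min(M_1,M_2)+|r|\big)|r|^2,
\]
where $S_i'$ is the projection of $S_i$ onto the plane of its two non-constant objectives and $M_i=\max_{S_i} f_1$. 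The key feature is that each two-dimensional term is the hypervolume on a \emph{unit} linear front ($f_1+f_3=1$ or $f_1+f_2=1$) with reference $(r,r)$, so Lemma 1 applies exactly, and the three extreme points $(1,0,0),(0,1,0),(0,0,1)$ are precisely the endpoints of these fronts, with $(1,0,0)$ shared.

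Next I would fix the number of solutions on each line and invoke Lemma 1: for any allocation the within-line optimum is equispaced, and for sufficiently negative $r$ the endpoints are included, so the two planar terms become $g(n_1)$ and $g(n_2)$, where $g(n)$ is the optimal $2$D hypervolume of $n$ points and $n_1+n_2=\mu+1$ (the shared point $(1,0,0)$ is counted on both lines, which also forces $M_1=M_2=1$ and makes the corner term the constant $(1+|r|)|r|^2$). Computing $g$ for the equispaced-with-endpoints configuration gives
\[
g(n) = (1+|r|)^2 - \frac{n}{2(n-1)},
\]
so that
\[
g(n_1)+g(n_2) = 2(1+|r|)^2 - \frac{1}{2}\Big(\frac{n_1}{n_1-1} + \frac{n_2}{n_2-1}\Big).
\]
Maximizing this is equivalent to minimizing $\tfrac{1}{n_1-1}+\tfrac{1}{n_2-1}$ subject to $(n_1-1)+(n_2-1)=\mu-1$; since $1/x$ is convex, the most balanced integer split wins: $n_1=n_2=(\mu+1)/2$ when $\mu$ is odd, and $\{n_1,n_2\}=\{\mu/2+1,\mu/2\}$ when $\mu$ is even. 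This is exactly the ``same number on each line'' versus ``differing by one'' dichotomy of the theorem.

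The reference-point thresholds then fall out of Lemma 1's endpoint criterion $r\le -1/(n_i-1)$ applied at the line carrying the \emph{fewest} points (the binding constraint). In the odd case each line has $(\mu+1)/2$ points, giving $r\le -1/((\mu-1)/2)=-2/(\mu-1)$; in the even case the sparser line has $\mu/2$ points, giving $r\le -1/(\mu/2-1)=-1/\lfloor(\mu-1)/2\rfloor$. Under these conditions both lines indeed include their endpoints, so all three extreme points appear and the shared point $(1,0,0)$ is present, confirming the claimed optimal configurations.

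The main obstacle is making the reduction to $g(n_1)+g(n_2)$ rigorous across \emph{all} allocations rather than only balanced ones. Under the stated thresholds a highly unbalanced allocation can leave a line with so few points that its planar optimum no longer touches the endpoints, so the clean closed form for $g$ holds only on part of the range. I would therefore need the concavity of the optimal two-dimensional hypervolume $\text{HV}_{2D}^{*}(n)$ in $n$ over the full reference regime (which still forces the balanced split to dominate), together with a careful argument that the global optimum really places a single solution at the shared vertex $(1,0,0)$ rather than two nearby points. Establishing this general concavity and keeping the corner/overlap bookkeeping consistent is the step I expect to demand the most care.
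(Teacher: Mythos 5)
Your computations are correct and the skeleton of your argument is the same as the paper's: reduce each line to a two-dimensional problem governed by Lemma 1, evaluate the equispaced-with-endpoints closed form, and settle the allocation by convexity of $\tfrac{1}{n_1-1}+\tfrac{1}{n_2-1}$ under a fixed sum, which yields exactly the odd/even dichotomy and the thresholds $-2/(\mu-1)$ and $-1/\lfloor(\mu-1)/2\rfloor$. Where you genuinely differ is the decomposition. You keep both planar problems at the symmetric reference $(r,r)$ and pay with the inclusion--exclusion cross term $-\left(\min(M_1,M_2)+|r|\right)|r|^2$; the paper instead slices the dominated region at the plane $f_2=0$, giving the exact additive split $\text{HV}=HV_1+HV_2$, where line 1 is measured against $(r,r,r)$ and line 2 against the \emph{asymmetric} reference $(r,0,r)$. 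That asymmetry is what your cross term costs you: with planar reference $(r,0)$, Lemma 1 applied to line 2 already produces (for $r\le -1/\mu_2$) the equispaced set containing $(0,1,0)$ but \emph{not} the shared vertex, sitting at distance exactly $1/\mu_2$ from it, while Lemma 1 on line 1 with reference $(r,r)$ places a point at $(1,0,0)$. Single occupancy of the shared vertex is thus an output of the paper's decomposition, not an input; in your version it has to be assumed (counting the vertex in both $n_1$ and $n_2$ and freezing $M_1=M_2=1$ presupposes the answer), which is precisely the second obstacle you flag. Switching to the $f_2=0$ slicing makes that obstacle disappear.

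Your first obstacle --- that under the stated condition on $r$ a strongly unbalanced allocation may violate its own endpoint threshold, so that the closed form $g(n)$ no longer equals that allocation's true planar optimum --- is genuine, but you should know that the paper does not close it either: its optimization over $(\mu_1,\mu_2)$ is carried out ``based on the condition $r\leq \min\{-1/(\mu_1-1),-1/\mu_2\}$,'' i.e., assuming every allocation under comparison satisfies its own threshold, and the condition on $r$ appearing in the theorem is only stated afterwards, for the optimal allocation. So on this point your proposal is more careful than the published proof, not less; completing it (for instance via concavity in $n$ of the exact optimal planar hypervolume obtained from Lemma 1's general formula, or a direct bound showing that allocations outside the endpoint regime are dominated by the balanced one) would strengthen the paper's argument rather than merely reproduce it.
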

\begin{proof}
First, we assume that $\mu_1$ solutions lie on the first line (from $(1,0,0)$ to $(0,0,1)$) and $\mu_2$ solutions lie on the second line (from $(1,0,0)$ to $(0,1,0)$) where $\mu_1+\mu_2=\mu$. Denote the solutions on the first line from $(1,0,0)$ to $(0,0,1)$ as $\mathbf{a}_1,...,\mathbf{a}_{\mu_1}$, and the solutions on the second line from $(1,0,0)$ to $(0,1,0)$ as $\mathbf{b}_1,...,\mathbf{b}_{\mu_2}$. Without loss of generality, we assume that $f_1(\mathbf{a}_1)\geq f_1(\mathbf{b}_1)$. Then the whole hypervolume can be decomposed into two parts as illustrated in Fig.~\ref{divide2}. That is, the whole hypervolume is sliced according to the plane $f_2=0$.

One part (denoted as $HV_1$) is the hypervolume determined by the solutions on the first line with the reference point $\mathbf{r}_1=(r,r,r)$. The other part (denoted as $HV_2$) is the hypervolume determined by the solutions on the second line with the reference point $\mathbf{r}_2=(r,0,r)$. 
\begin{figure}[!htb]
\centering
\includegraphics[scale=0.3]{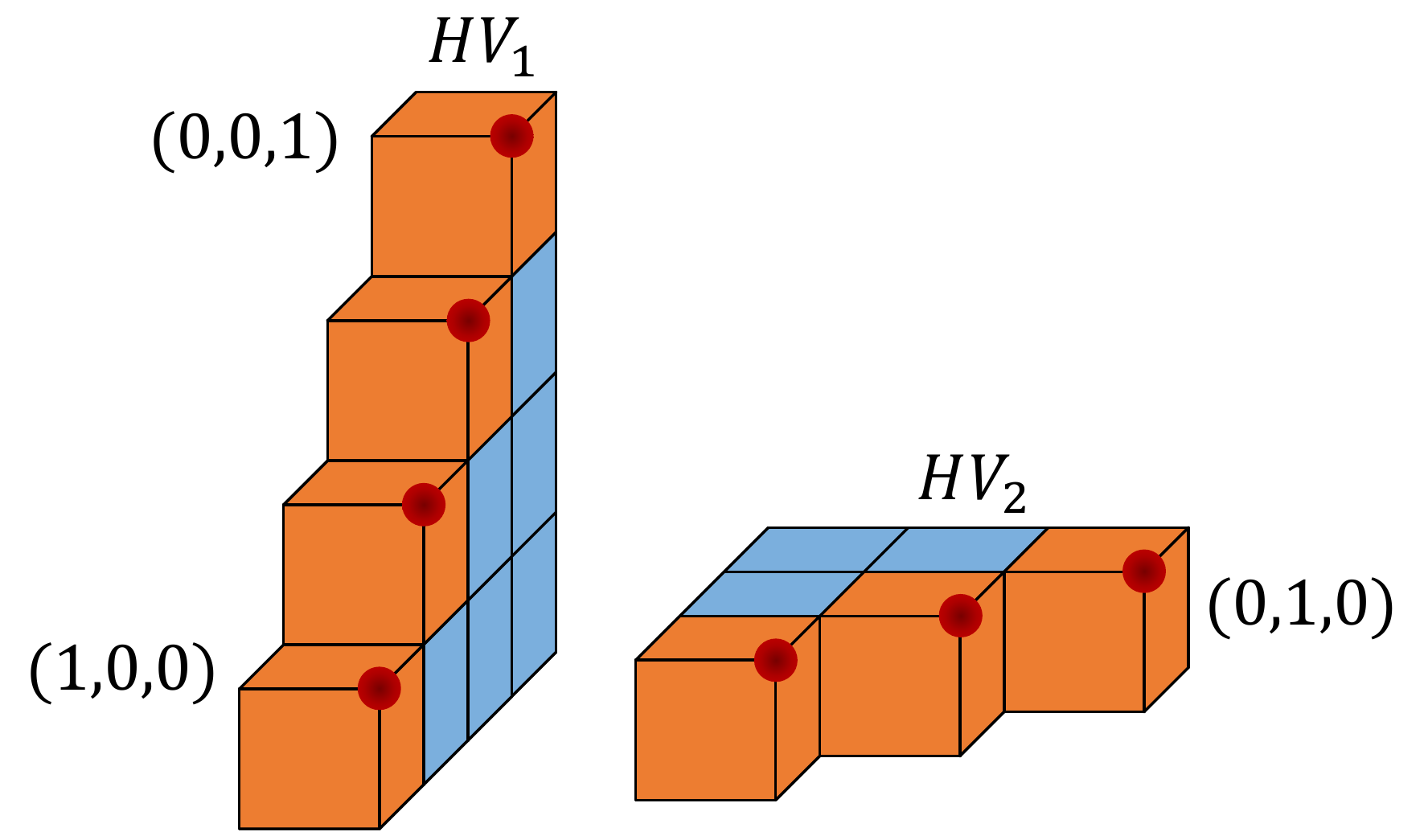}               
\caption{An illustration of the decomposition of the hypervolume into two parts.} 
\label{divide2}                                                        
\end{figure}

Then, the hypervolume can be calculated as $HV=HV_1+HV_2$. Thus, maximizing $HV$ is equivalent to maximizing $HV_1+HV_2$. We can observe that $HV_1$ is the two-dimensional hypervolume in the $f_1\text{-}f_3$ space (denoted as $HV_1^{f_1\text{-}f_3}$) multiplying $|r|$, and $HV_2$ is the two-dimensional hypervolume in the $f_1\text{-}f_2$ space (denoted as $HV_2^{f_1\text{-}f_2}$) multiplying $|r|$. Therefore, maximizing $HV_1+HV_2$ is equivalent to maximizing $HV_1^{f_1\text{-}f_3}+HV_2^{f_1\text{-}f_2}$. 

Based on Lemma 1, we can get that the $\mu_1$ solutions are equispaced on the first line and the $\mu_2$ solutions are equispaced on the second line, in order to maximize $HV_1^{f_1\text{-}f_3}$ and $HV_2^{f_1\text{-}f_2}$ respectively. Note that the reference points for $HV_1^{f_1\text{-}f_3}$ and $HV_2^{f_1\text{-}f_2}$ are $\mathbf{r}_1^{f_1\text{-}f_3}=(r,r)$ and $\mathbf{r}_2^{f_1\text{-}f_2}=(r,0)$, respectively. Based on Lemma 1, if $r\leq -1/(\mu_1-1)$, the two extreme points $(1,0,0)$ and $(0,0,1)$ of the first line are included in the $\mu_1$ solutions. If $r\leq -1/\mu_2$, one extreme point $(0,1,0)$ of the second line is included in the $\mu_2$ solutions. Thus, if $r\leq \min\{-1/(\mu_1-1),-1/\mu_2\}$, the three extreme points are all included. 

Next, based on the condition $r\leq \min\{-1/(\mu_1-1),-1/\mu_2\}$, we determine the values of $\mu_1$ and $\mu_2$. First, $HV_1^{f_1\text{-}f_3}$ and $HV_2^{f_1\text{-}f_2}$ are calculated as illustrated in Fig. \ref{calculation12}. $HV_1^{f_1\text{-}f_3}$ is the sum of the areas of the triangle $A$ and the three rectangles $B$, $C$ and $D$, minus the areas of the grey triangles in Fig. \ref{calculation12} (a). $HV_2^{f_1\text{-}f_2}$ is the sum of the areas of the triangle $A$ and the rectangle $B$, minus the areas of the grey triangles in Fig. \ref{calculation12} (b). More specifically, they are calculated as follows:
\begin{equation}
\begin{aligned}
HV_1^{f_1\text{-}f_3} &= \frac{1}{2}-r-r+r^2-\frac{1}{2(\mu_1-1)^2}\times (\mu_1-1),\\
& = \frac{1}{2}+r(r-2)-\frac{1}{2(\mu_1-1)},\\
HV_2^{f_1\text{-}f_2} &= \frac{1}{2} -r - \frac{1}{2\mu_2^2}\times \mu_2,\\
&= \frac{1}{2}-r-\frac{1}{2\mu_2}.\\
\end{aligned}
\end{equation}

\begin{figure}[!htb]
\centering
\subfigure[$HV_1^{f_1\text{-}f_3}$]{                    
\includegraphics[scale=0.4]{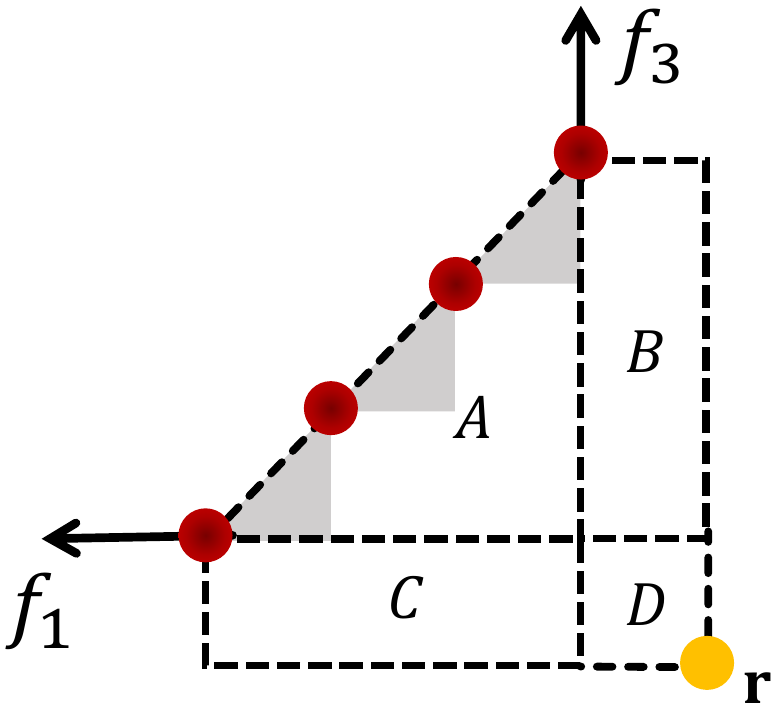}               
}\hspace{5mm}
\subfigure[$HV_2^{f_1\text{-}f_2}$]{                    
\includegraphics[scale=0.4]{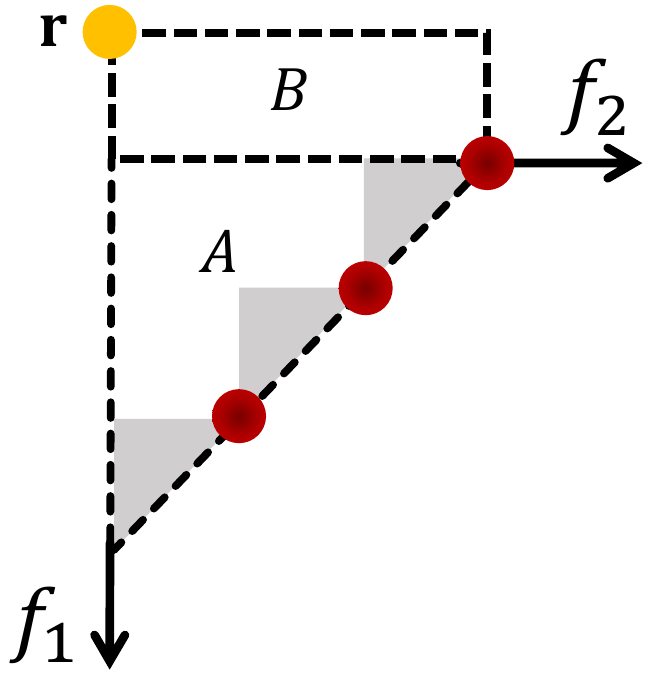}                
}
\caption{An illustration of calculating $HV_1^{f_1\text{-}f_3}$ and $HV_2^{f_1\text{-}f_2}$.} 
\label{calculation12}                                                        
\end{figure}

For hypervolume maximization, we can formulate the following optimization problem:
\begin{equation}
\begin{aligned}
\text{maximize: }&HV_1^{f_1\text{-}f_3}+HV_2^{f_1\text{-}f_2}\\
&=1-\frac{1}{2(\mu_1-1)}-\frac{1}{2\mu_2}+r(r-3),\\
\text{subject to: }&\mu_1+\mu_2=\mu \text{ and }\mu_1,\mu_2\in\mathbb{Z}_{+}.\\
\end{aligned}
\label{optimization1}
\end{equation}

The above convex optimization problem can be easily solved by relaxing $\mu_1$ and $\mu_2$ to real values. The solution of the optimization problem is
\begin{equation}
\mu_1=\frac{\mu+1}{2}, \mu_2 = \frac{\mu-1}{2}.
\label{optimalvalue1}
\end{equation}

If $\mu$ is odd, Eq.~\eqref{optimalvalue1} is the optimal solution of \eqref{optimization1}. In this case, $\mu_1-\mu_2 = 1$, which means that the same number of solutions lie on each of the two lines (since the joint extreme point $(1,0,0)$ is not counted in $\mu_2$). 

If $\mu$ is even, then the optimal solution of \eqref{optimization1} is $\mu_1 = \lceil \frac{\mu+1}{2}\rceil$, $\mu_2 = \lfloor \frac{\mu-1}{2}\rfloor$, or $\mu_1 = \lfloor \frac{\mu+1}{2}\rfloor$, $\mu_2 = \lceil \frac{\mu-1}{2}\rceil$. In this case, $\mu_1-\mu_2 = 2$ or $\mu_1-\mu_2=0$, which means that one more or less solution lie on the first line than on the second line (since the joint extreme point $(1,0,0)$ is not counted in $\mu_2$). 

We have shown the condition for the reference point $r\leq \min\{-1/(\mu_1-1),-1/\mu_2\}$ in order to guarantee the inclusion of the three extreme points. Based on the above results, if $\mu$ is odd, $r\leq \min\{-1/(\mu_1-1),-1/\mu_2\}=-1/\mu_2 = -2/(\mu-1)$. If $\mu$ is even, $r\leq \min\{-1/(\mu_1-1),-1/\mu_2\} = -1/\lfloor \frac{\mu-1}{2}\rfloor$.
\end{proof}

\subsection{Type IV Pareto Front}
Let us consider the Type IV Pareto front with two lines where the first line is $f_1+f_2=1,f_1,f_2\geq 0$ and $f_3=1$, and the second line is $f_1+f_3=1,f_1,f_3\geq 0$ and $f_2=1$ as in Fig.~\ref{type3-4} (b). Different from the analysis on the Type III Pareto front, we focus on investigating a uniform solution set on the Type IV Pareto front as illustrated in Fig.~\ref{type3-4} (b), and analyze whether a uniform solution set is optimal for hypervolume maximization. Here a uniform solution set means the same number of solutions are uniformly distributed on each of the two lines.

We have the following theorem for this type of Pareto front.

\begin{theorem}
\label{theorem4-2}
For $\mu > 3$ solutions on the Type IV Pareto front, a uniform solution set is not optimal for hypervolume maximization.
\end{theorem}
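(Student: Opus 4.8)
The plan is to establish non-optimality by exhibiting an explicit single-solution perturbation of the uniform set that strictly increases the hypervolume, organizing the argument around a decomposition of HV that isolates exactly the coupling between the two lines. Parametrize the first line by $\mathbf{a}_i=(a_i,1-a_i,1)$ and the second by $\mathbf{b}_j=(b_j,1,1-b_j)$ with $a_i,b_j\in[0,1]$; the lines share the joint $(0,1,1)$ at $a=b=0$ and have far endpoints $(1,0,1)$ and $(1,1,0)$. Since $\mathbf{a}_i$ dominates the box $[r,a_i]\times[r,1-a_i]\times[r,1]$ and $\mathbf{b}_j$ dominates $[r,b_j]\times[r,1]\times[r,1-b_j]$, the region dominated by the first line is $S_1\times[r,1]$ with $S_1=\bigcup_i[r,a_i]\times[r,1-a_i]$ in the $f_1$-$f_2$ plane, and that dominated by the second is $S_2\times[r,1]$ with $S_2=\bigcup_j[r,b_j]\times[r,1-b_j]$ in the $f_1$-$f_3$ plane. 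Inclusion-exclusion then gives
\[
\mathrm{HV}=(1-r)\,\mathrm{Area}(S_1)+(1-r)\,\mathrm{Area}(S_2)-C,
\]
where the coupling term $C$ is the integral over the common coordinate $f_1$ of the product of the two staircase heights, $C=\int (h_1(x)-r)(h_2(x)-r)\,dx$ with $h_1(x)=1-\min\{a_i:a_i\ge x\}$ and $h_2(x)=1-\min\{b_j:b_j\ge x\}$.

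This decomposition is the conceptual heart and the precise point where Type IV departs from Type III: in Theorem \ref{theorem4-1} the analogous slicing produced two genuinely independent two-dimensional subproblems, whereas here $C$ ties the lines together through the shared coordinate $f_1$. Each of the first two terms equals, up to the factor $(1-r)$, the two-dimensional hypervolume of the corresponding line, which by Lemma 1 is uniquely maximized by the uniform spacing and hence is stationary at the uniform configuration. Therefore any first-order change in HV caused by moving a single interior solution comes entirely from the coupling term $C$, and it suffices to find one admissible move whose first-order effect increases HV.

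I would move one interior point $\mathbf{a}_k$ of the first line outward by a small $\delta>0$ (raising its $f_1$ value from $a_k$ to $a_k+\delta$) while fixing all other solutions. The area $\mathrm{Area}(S_1)$ changes only at second order ($-\delta^2$), confirming the stationarity above, so the decisive quantity is the first-order variation of $C$. The moved point lowers $h_1$ by $\delta$ on the interval immediately to the left of $a_k$, where $h_2$ takes its larger value, and raises $h_1$ on the thin interval of length $\delta$ immediately to the right of $a_k$, where $h_2$ takes the next, smaller value; because the removed and added pieces are weighted by different values of the decreasing staircase $h_2$, they do not cancel, and the removed piece outweighs the added one. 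Carrying this out yields $\Delta C=-\delta/n^2+O(\delta^2)$ (with $n$ the common number of intervals per line), hence $\Delta\mathrm{HV}=\delta/n^2-(1-r)\delta^2+O(\delta^2)>0$ for all sufficiently small $\delta$, contradicting optimality of the uniform set.

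The hard part is the bookkeeping of this coupling variation: the surviving first-order term arises solely from the asymmetry between the value of $h_2$ immediately to the left and immediately to the right of $a_k$, so one must evaluate $h_2$ on the correct side of the moved point and keep the staircase steps straight. I would also verify that at least one interior point is available to move (guaranteed for $\mu>3$) and note that the portion of $C$ over $f_1\in[r,0]$, where both heights equal $1$, is independent of the interior positions and drops out of the variation.
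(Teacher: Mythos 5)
Your proof is correct, and at its core it is the same argument as the paper's: keep the uniform set, move one interior solution slightly away from the joint point $(0,1,1)$ along its line, and show the hypervolume strictly increases. Indeed your first-order gain $\delta/n^2$ is exactly the paper's gain $\alpha h^3$ under the substitution $\delta=\alpha h$, $h=1/n$. What differs is the bookkeeping, and the difference is worth noting. The paper computes the three-dimensional hypervolume contribution of the moved solution directly: before the move it is a single cuboid of volume $(i-1)h^3$, after the move it is a union of two cuboids, and the exact change in hypervolume is $(\alpha-i\alpha^2)h^3$, strictly positive for all $0<\alpha<1/i$ --- an exact, finite-move inequality with no asymptotics. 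You instead decompose $\mathrm{HV}$ by inclusion--exclusion into two decoupled two-dimensional terms minus a coupling integral $C$, observe that the two-dimensional terms are stationary under interior moves, and extract the first-order decrease of $C$; your staircase computation of $\Delta C=-\delta/n^2+O(\delta^2)$ checks out. This framing buys a structural explanation that the paper states only informally in its discussion: Type III admits a slicing into independent parts (no coupling term), and it is precisely the coupling through the shared coordinate $f_1$ that destroys uniformity for Type IV. The cost is that your conclusion holds only for ``sufficiently small $\delta$'' rather than over an explicit interval. Two small blemishes, neither fatal: the appeal to Lemma 1 for stationarity is overstated, since Lemma 1's maximizer includes both endpoints only under a reference-point condition --- but your direct computation $\Delta\mathrm{Area}(S_1)=-\delta^2$ is what is actually needed and is exact; and in your final display the term $-(1-r)\delta^2$ is itself $O(\delta^2)$ (the exact second-order coefficient works out to $-k/n$, matching the paper's $-i\alpha^2 h^3$), which is harmless since positivity follows from the first-order term alone.
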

\begin{proof}
First, we assume that there are $\mu'$ solutions uniformly distributed on each of the two lines, then $\mu'=(\mu+1)/2$. Let $\mathbf{a}_1,\mathbf{a}_2,...,\mathbf{a}_{\mu'}$ denote the solutions on the line from $(0,1,1)$ to $(1,1,0)$ where $\mathbf{a}_1=(0,1,1)$ and $\mathbf{a}_{\mu'}=(1,1,0)$ are the two extreme solutions of this line. Fig.~\ref{type3-4} (b) illustrates the hypervolume contributions of the uniformly distributed solutions on the Type IV Pareto front. We can see that different solutions have different hypervolume contributions, and each hypervolume contribution is a cuboid. For the hypervolume contribution of solution $\mathbf{a}_i$, the height of the cuboid is $\frac{1}{\mu'-1}$ and the basal area of the cuboid is $(i-1)(\frac{1}{\mu'-1})^2$. Thus, the hypervolume contribution of $\mathbf{a}_i$ is 
\begin{equation}
\begin{aligned}
&HVC(\mathbf{a}_{i})=(i-1)\left(\frac{1}{\mu'-1}\right)^3, \text{ }i=2,...,\mu'-1.\\
\end{aligned}
\end{equation}

\begin{figure}[!htb]
\centering
\subfigure[Before moving a solution]{                    
\includegraphics[scale=0.4]{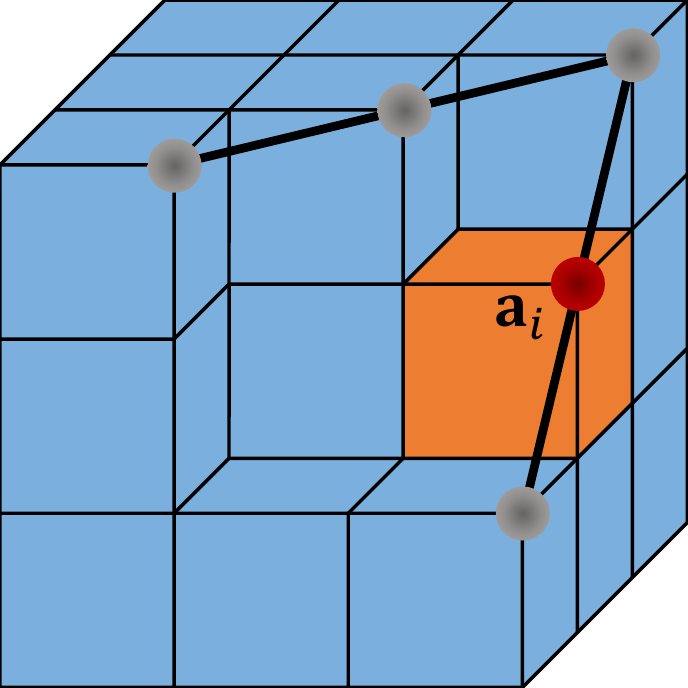}               
}\hspace{5mm}
\subfigure[After moving a solution]{                    
\includegraphics[scale=0.4]{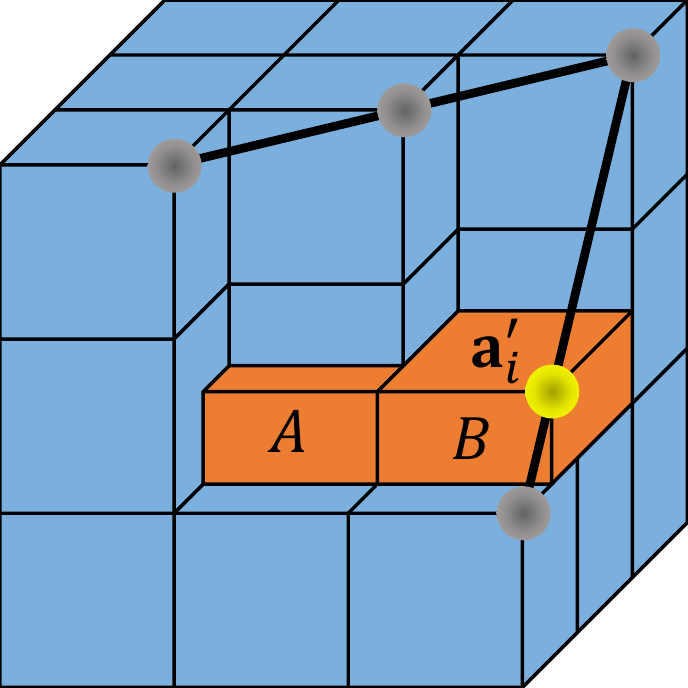}                
}
\caption{An illustration of moving a solution.} 
\label{move1}                                                        
\end{figure}

If we move $\mathbf{a}_i$ towards $\mathbf{a}_{i+1}$ for any $i\in\{2,...,\mu'-1\}$ (i.e., only one solution is moved and all the others are fixed as illustrated in Fig.~\ref{move1}), a new solution $\mathbf{a}'_i=\mathbf{a}_i+\alpha(\mathbf{a}_{i+1}-\mathbf{a}_i)$ where $\alpha\in[0,1]$ is obtained. We can see that the hypervolume contribution of $\mathbf{a}'_i$ is composed by two cuboids $A$ and $B$. The height of the two cuboids is $(1-\alpha)(\frac{1}{\mu'-1})$. The basal area of $A$ is $\alpha(\frac{1}{\mu'-1})^2$, and the basal area of $B$ is $(1+\alpha)(\frac{1}{\mu'-1})\times (i-1)(\frac{1}{\mu'-1}) = (1+\alpha)(i-1)(\frac{1}{\mu'-1})^2$. Thus, the hypervolume contribution of $\mathbf{a}'_i$ is
\begin{equation}
\begin{aligned}
HVC(\mathbf{a}'_i)&=\left (\alpha\left(\frac{1}{\mu'-1}\right)^2 + (1+\alpha)(i-1)\left(\frac{1}{\mu'-1}\right)^2\right)\\
&\times (1-\alpha)\left(\frac{1}{\mu'-1}\right)\\
&=[(i-1)(1-\alpha)(1+\alpha)+\alpha(1-\alpha)]\left(\frac{1}{\mu'-1}\right)^3.
\end{aligned}
\end{equation}

The hypervolume contribution difference between $\mathbf{a}'_i$ and $\mathbf{a}_i$ is
\begin{equation}
HVC(\mathbf{a}'_i)-HVC(\mathbf{a}_i)=(-i\alpha^2+\alpha)\left(\frac{1}{\mu'-1}\right)^3.
\end{equation}

It is easy to obtain that if $0<\alpha<1/i$, $HVC(\mathbf{a}'_i)-HVC(\mathbf{a}_i)>0$. This indicates that $\mathbf{a}'_i$ has a larger hypervolume contribution than $\mathbf{a}_i$. Thus, we can improve the overall hypervolume by moving $\mathbf{a}_i$ to $\mathbf{a}'_i$. Therefore, the original uniform solution set is not optimal for hypervolume maximization.
\end{proof}
 

\subsection{Type V Pareto Front}
Let us consider the Type V Pareto front with a triangular shape specified by $f_1+f_2+f_3=1,f_1,f_2,f_3\geq 0$ as shown in Fig. \ref{type3-4} (c).  

We present the following theorem describing the hypervolume optimal $\mu$-distribution on the Type V Pareto front.

\begin{theorem}
\label{theorem5-1}
For $\mu > 3$ solutions on the Type III Pareto front,
\begin{enumerate}
\item if $(\mu\mod 3) = 0$, the optimal $\mu$-distribution is that the same number of solutions lie uniformly on each of the three lines where the three extreme points (i.e., (1,0,0), (0,1,0), (0,0,1)) are included, under the condition of $r\leq -3/\mu$,
\item if $(\mu\mod 3) \neq 0$, the optimal $\mu$-distribution is that a different number of solutions lie uniformly on each of the three lines where two lines have the same number of solutions and the other line has one more/less solution, and the three extreme points (i.e., (1,0,0), (0,1,0), (0,0,1)) are included, under the condition of $r\leq -1/\lfloor \frac{\mu}{3}\rfloor$.
\end{enumerate}
\end{theorem}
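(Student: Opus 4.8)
The plan is to mirror the hypervolume decomposition used for the Type III front, but now to slice the dominated region along all three coordinate planes $f_1=0$, $f_2=0$, $f_3=0$ rather than a single one. First I would observe that, since each of the three edges has exactly one vanishing objective, an edge can only dominate points lying in the slab where its constant coordinate is nonpositive. Writing $S_3=\{b_2\le 0\}$, $S_1=\{b_3\le 0\}$, $S_2=\{b_1\le 0\}$ for the slabs governed by the edges in the planes $f_2=0$, $f_3=0$, $f_1=0$, the dominated region is exactly $S_1\cup S_2\cup S_3$ (a point with all coordinates strictly positive lies beneath the empty interior of the triangle and is dominated by nothing). Each slab has thickness $|r|$ in its privileged direction and a two-dimensional staircase cross-section, so $\mathcal L(S_k)=|r|\cdot HV_k^{\mathrm{2D}}$, where $HV_k^{\mathrm{2D}}$ is the planar hypervolume of the $n_k$ points on edge $k$ measured against the \emph{symmetric} reference $(r,r)$; note this symmetry is what distinguishes the three-line case from the asymmetric $(r,0)$ reference that appeared for Type III.

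Next I would apply inclusion--exclusion, $HV=\sum_k\mathcal L(S_k)-\sum_{i<j}\mathcal L(S_i\cap S_j)+\mathcal L(S_1\cap S_2\cap S_3)$, and evaluate the correction terms. The key point is that, once the three vertices $(1,0,0),(0,1,0),(0,0,1)$ are occupied, each pairwise intersection collapses to a fixed axis-aligned box of volume $(1-r)r^2$ anchored at the shared vertex, and the triple intersection is the cube $[r,0]^3$ of volume $-r^3$; none of these depend on the interior placement of the points. Hence $HV$ equals $|r|\sum_k HV_k^{\mathrm{2D}}$ plus a distribution-independent constant, so maximizing the hypervolume is equivalent to independently maximizing the three planar hypervolumes. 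Lemma 1 then forces equispaced points with both endpoints on every edge, and substituting the closed form $HV_k^{\mathrm{2D}}=\tfrac12+r(r-2)-\tfrac{1}{2(n_k-1)}$ already derived for Type III reduces the entire problem to minimizing $\sum_k\tfrac{1}{n_k-1}$ subject to $n_1+n_2+n_3=\mu+3$ (the $+3$ accounting for the three doubly-counted shared vertices).

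The allocation step is then routine convex optimization: setting $m_k:=n_k-1$ with $\sum_k m_k=\mu$, the objective $\sum_k 1/m_k$ is Schur-convex and is minimized by making the $m_k$ as equal as possible. When $3\mid\mu$ this gives $m_k=\mu/3$ for all $k$, i.e. the same number of points on every edge; otherwise the integers split into two equal values and one differing by one, which is exactly the ``one more/less solution on one edge'' phrasing of the statement. The reference-point thresholds follow directly from the endpoint-inclusion clause of Lemma 1 applied edge-by-edge: each edge requires $r\le -1/m_k$, so the binding constraint is $r\le -1/\min_k m_k$, giving $-3/\mu$ in the divisible case and $-1/\lfloor\mu/3\rfloor$ otherwise.

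I expect the main obstacle to be the honest justification of the inclusion--exclusion correction terms, because their stated constant values hold only once the extreme points are actually occupied, and those same extreme points are what Lemma 1 is invoked to guarantee. To break this apparent circularity I would argue first, using the monotonicity of the dominated region, that any hypervolume-maximal configuration under the stated reference condition must place a point at each vertex: if a vertex were vacant, relocating the nearest point to it both enlarges that edge's slab-hypervolume and fills in the overlap box, and under $r\le -1/\min_k m_k$ Lemma 1 certifies that the net gain is strictly positive. Only after the vertices are known to be occupied do the pairwise and triple boxes attain their constant volumes, legitimizing the clean separation into three independent planar problems; the remaining work is bookkeeping to confirm the vertex-sharing count and that the two rounding cases reproduce the stated distributions.
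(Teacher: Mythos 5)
Your endgame is right and it reproduces the statement: once the three vertices are known to be occupied and each edge's points are equispaced, the problem does reduce to minimizing $\sum_k 1/m_k$ subject to $\sum_k m_k=\mu$ (your ``$+3$'' bookkeeping for the doubly-counted shared vertices is correct), Schur-convexity gives the as-equal-as-possible split, and the thresholds $r\le -3/\mu$ and $r\le -1/\lfloor\mu/3\rfloor$ come out exactly as stated. Note that the paper's own proof is only in its supplementary material, but it is described as a copy of the Theorem~\ref{theorem4-1} argument: slice the dominated region into \emph{disjoint} pieces (after a without-loss-of-generality ordering of which line reaches furthest toward each shared vertex), so that $HV$ equals a sum of per-line planar hypervolumes with references $(r,r)$, $(r,0)$ and $(0,0)$ \emph{for every configuration}, and then Lemma~1 plus the integer allocation finishes. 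Your decomposition is genuinely different---overlapping slabs with the symmetric reference $(r,r)$ corrected by inclusion--exclusion---and its bookkeeping (slab volume $|r|\,HV_k^{\mathrm{2D}}$, pairwise boxes of volume $(1-r)r^2$, triple cube of volume $-r^3$) is exact \emph{provided} the vertices are occupied.

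The gap is precisely the step you flagged, and your repair does not close it. First, the claim that relocating an edge's outermost point to a vacant vertex ``enlarges that edge's slab-hypervolume'' is false in general: if the two outermost points of that edge sit at $f_1=s$ and $f_1=s'$, the move changes that edge's planar hypervolume by $(1-s)\bigl(|r|-(s-s')\bigr)$, which is negative whenever the gap $s-s'$ exceeds $|r|$; Lemma~1 is a statement about the unconstrained planar optimum, not about one-point relocations, so it ``certifies'' nothing about this local move. Second, the move is taxed by the overlap terms you are trying to freeze: for a general configuration the pairwise correction at a vertex is $r^2\bigl(\min(\max_A f_1,\max_B f_1)-r\bigr)$, and occupying the vertex can raise this $\min$. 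This tension is structural, not cosmetic, because gain and tax are both of order $r^2$: if you fold the corrections into the per-line objectives via $\min(a,b)\ge a+b-1$, each line's effective objective becomes exactly $|r|$ times its planar hypervolume with reference $(0,0)$ (plus a constant), and the optimizer of \emph{that} problem excludes the endpoints---so independent per-line maximization, by itself, pushes points away from the vertices, and equality in the folded bound (which requires occupied vertices) is never attained by the decoupled optima. To make your route rigorous you would need a genuine quantitative comparison, for each allocation, between the best vertex-occupied and the best vertex-free configurations; alternatively, switch to the paper's disjoint slicing, where $HV=\sum_k HV_k$ holds identically for all configurations and no such coupling ever arises. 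Without one of these, the assertion that the optimum includes the three extreme points---which is part of what the theorem claims---remains unproven.
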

\begin{remark}
The proof of Theorem \ref{theorem5-1} is similar to that of Theorem \ref{theorem4-1}. We provide the detailed proof of Theorem \ref{theorem5-1} in Section I of the supplementary material. 
\end{remark}

\subsection{Type VI Pareto Front}
Let us consider the Type VI Pareto front with an inverted triangular shape specified by $f_1+f_2+f_3=2,0\leq f_1,f_2,f_3\leq 1$ as shown in Fig. \ref{type3-4} (d). Similar to the analysis on the Type IV Pareto front, we investigate whether a uniform solution set is optimal for hypervolume maximization. Here a uniform solution set means the same number of solutions are uniformly distributed on each of the three lines.

We have the following theorem for this type of Pareto front.
\begin{theorem}
\label{theorem5-2}
For $\mu> 6$ solutions on the Type VI Pareto front, a uniform solution set is not optimal for hypervolume maximization.
\end{theorem}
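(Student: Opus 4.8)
The plan is to mirror the local-perturbation argument used for the Type IV front in Theorem~\ref{theorem4-2}: I will exhibit one solution of the uniform set that can be slid along its own edge so as to strictly increase the total hypervolume. Since every other solution is held fixed, the region they dominate is unchanged, so the change in total hypervolume equals the change in the exclusive contribution of the moved solution alone. It therefore suffices to track that single contribution, exactly as in the Type IV argument.

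By the three-fold symmetry of the inverted triangle $f_1+f_2+f_3=2$, I focus on the edge with $f_3=1$, i.e. $f_1+f_2=1$. Write the $\mu'=(\mu+3)/3$ uniformly spaced solutions on it as $\mathbf{a}_i=(x_i,y_i,1)$ with spacing $h=1/(\mu'-1)$ and $x_i+y_i=1$ (the shared vertices are counted once per edge). The first substantive step is to compute the exclusive contribution of an interior $\mathbf{a}_i$. Because every solution on this edge attains the maximal value $f_3=1$, the region dominated by this edge alone is a prism over the two-dimensional $f_1$–$f_2$ picture, and the part owned by $\mathbf{a}_i$ against its two edge-neighbours is the rectangle $[x_{i+1},x_i]\times[y_{i-1},y_i]$ of area $h^2$, extruded over $f_3\in[r,1]$. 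The crux is that the bottom of this box is eroded by the solutions on the two adjacent edges (where $f_1=1$ and $f_2=1$), which dominate the low-$f_3$ layers. Carrying out this erosion shows that for the uniform set the exclusive contribution collapses to the clean expression
\begin{equation}
\mathrm{HVC}(\mathbf{a}_i)=\min(x_i,y_i)\,h^2 .
\end{equation}

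Next I perturb an off-centre interior solution. Such a solution exists precisely because $\mu>6$ forces $\mu'\geq4$, so each edge carries at least two interior solutions, none of them the midpoint; for $\mu=6$ the only interior solution is the midpoint (where $x_i=y_i$) and no beneficial move exists, which is exactly why the hypothesis excludes it. Taking an $\mathbf{a}_i$ with $x_i>y_i$ and moving it toward its neighbour $\mathbf{a}_{i+1}$ via $\mathbf{a}'_i=\mathbf{a}_i+\alpha(\mathbf{a}_{i+1}-\mathbf{a}_i)$, the base rectangle becomes $(1-\alpha)(1+\alpha)h^2$, while the erosion from the adjacent edges now forms a two-step staircase rather than a single slab. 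Recomputing the exclusive contribution and subtracting yields a difference of the form $h^{2}\alpha\bigl(h-\alpha(y_i+h)\bigr)$ (for a representative off-centre solution), which is strictly positive for all small enough $\alpha$. Hence the move increases the total hypervolume and the uniform set is not optimal.

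The main obstacle is the contribution bookkeeping in the middle step. Unlike the Type IV front, here the three edges close up, so the two edges adjacent to the chosen one both erode its box from below and their dominated regions overlap. For the uniform set I must decide, for each column of the box, which adjacent-edge solution cuts deepest and then take the union of the two erosions. After the perturbation this profile is no longer one slab but a two-step staircase, because stretching the box in the $f_2$-direction exposes a sliver whose nearest cutting solution on the adjacent edge lies one grid step farther out and so cuts less deeply, leaving extra exclusive height. Keeping track of these steps and their overlap, and confirming that the resulting signed difference is positive to first order in $\alpha$, is the delicate part; the remaining volume arithmetic is routine.
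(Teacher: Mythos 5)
Your proposal is correct and takes essentially the same route as the paper: the paper also proves this theorem (in its supplementary material, as described in Remark~2) by the same local-perturbation argument used for the Type IV front in Theorem~\ref{theorem4-2}, i.e., computing the exclusive contribution of one solution of the uniform set and showing it strictly increases when that solution is slid slightly along its edge, and your bookkeeping --- the contribution $\min(x_i,y_i)h^2$ for an interior solution and the first-order gain $h^{2}\alpha\bigl(h-\alpha(y_i+h)\bigr)$ when an off-centre solution is moved toward the edge midpoint --- is exactly right. One cosmetic slip: when $\mu'-1$ is even (e.g., $\mu=12$) the edge midpoint \emph{is} one of the interior solutions, so the claim should be ``at least one interior solution is off-centre'' rather than ``none of them the midpoint''; the step you actually use (existence of an off-centre interior solution for $\mu>6$) is unaffected.
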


\begin{remark}
The proof of Theorem \ref{theorem5-2} is similar to that of Theorem \ref{theorem4-2}. That is, the hypervolume of a uniform solution set can be improved by moving one solution. We provide the detailed proof of Theorem \ref{theorem5-2} in Section II of the supplementary material. 
\end{remark}

\begin{remark}
Here we need to emphasize that Theorem \ref{theorem5-2} only holds for $\mu>6$. For the case of $\mu\leq 6$ (i.e., at most three solutions lie on each line), we cannot move one solution to get a better overall hypervolume. For example, if there are three uniformly distributed solutions on each line (i.e., one midpoint and two extreme points), the overall hypervolume will be decreased (instead of increased) by moving any of the three midpoints. As will be shown in Section \ref{section-plane}, when $\mu=3$ and $6$ the uniform solution set on the Type VI Pareto front is optimal for hypervolume maximization.
\end{remark}

\subsection{Discussions}
\subsubsection{Explanations of the results in Fig. \ref{intro}}
In this section, we have investigated the hypervolume optimal $\mu$-distribution on four types of line-based Pareto fronts. Through theoretical analysis, we showed that the uniform solution set is optimal for hypervolume maximization on the Type III and Type V Pareto fronts, whereas it is not optimal for hypervolume maximization on the Type IV and Type VI Pareto fronts. Now we can explain the results in Fig. \ref{intro}. Since the Pareto front in Fig. \ref{intro} (a) is Type III, the uniform solution set has a larger hypervolume than any other solution sets. However, since the Pareto front in Fig. \ref{intro} (d) is Type IV, the uniform solution set is not optimal for hypervolume maximization. Similarly, since the Pareto front in Fig. \ref{intro} (e) is Type V, the uniform solution set has a larger hypervolume than any other solution sets. However, since the Pareto front in Fig. \ref{intro} (h) is Type VI, the uniform solution set is not optimal for hypervolume maximization.

We need to note that maximization of each objective is assumed in this paper. If the minimization case is considered, then the conclusions for the Type III/V and the Type IV/VI Pareto fronts will be exchanged. That is, in the minimization case, the uniform solution set is optimal for hypervolume maximization on the Type IV and Type VI Pareto fronts whereas it is not optimal on the Type III and Type V Pareto fronts.

\subsubsection{Hypervolume optimal $\mu$-distributions on Types IV and VI Pareto fronts}
We have proved that a uniform solution set on the Type IV or Type VI Pareto front is not optimal for hypervolume maximization. The optimal $\mu$-distributions on these two types of Pareto fronts are not theoretically derived in this paper. However, we can empirically investigate their optimal $\mu$-distributions. Fig. \ref{intro} (c) and (g) show the obtained solution sets by SMS-EMOA on the Types IV and VI Pareto fronts, respectively. For the Type IV Pareto front, we can see that solutions are sparsely distributed around the joint extreme point of the two lines. From the joint extreme point to the other two extreme points, solutions become denser and denser. For the Type VI Pareto front, we can see that solutions are sparsely distributed around the three extreme points. More solutions are distributed in the middle of the three lines.

\subsubsection{Motivation of investigating line-based Pareto fronts}
The four line-based Pareto fronts considered in this section are not so realistic. They can be hardly found in real-world applications. However, as shown in \cite{ishibuchi2017hypervolume}, when the reference point is sufficiently far away, the solutions for hypervolume maximization are distributed on the boundary of some realistic Pareto fronts (e.g., the Pareto fronts of DTLZ1$^{-1}$ \cite{ishibuchi2017performance} and WFG3 \cite{ishibuchi2015pareto}). In this case, it is meaningful to investigate the hypervolume optimal $\mu$-distribution on the line-based Pareto fronts since the line-based Pareto fronts can be viewed as the boundary of some realistic Pareto fronts. This is the main motivation to investigate the line-based Pareto fronts.

\subsubsection{Lesson learned from line-based Pareto fronts}
From Fig. \ref{type3-4} we can see that the Type III and IV Pareto fronts are very similar, and the Type V and VI Pareto fronts are very similar. The only difference is the way to combine different lines. This difference leads to totally different results. We can see that for the Type III and V Pareto fronts, the hypervolume can be decomposed into independent parts. However, for the Type IV and VI Pareto fronts, the hypervolume cannot be decomposed into independent parts. That is, the solution set is uniform for hypervolume maximization only if the hypervolume can be decomposed into independent parts. This is the main lesson we can learn from the line-based Pareto fronts. 

\subsubsection{Construction of Type III-VI Pareto fronts}
We can also construct test problems with Type III-VI Pareto fronts. For example, let us consider a multi-line distance minimization problem \cite{li2017multiline} with constraints as shown in the left figure of Fig. \ref{dmp}, where the objectives are to minimize the distance to each of the three lines: $AB$, $AC$ and $BC$. In the left figure of Fig. \ref{dmp}, the inside of the triangle $ABC$ and the line $BC$  are infeasible. Then the Pareto set of this problem is line $AB$ and line $AC$. The Pareto front (i.e., the projection of these two lines into the objective space) is Type III. If we consider to minimize the minus distance to each line (which is equivalent to maximize the distance to each line) as shown in the right figure of  Fig. \ref{dmp}, given the infeasible region defined, the Pareto set is the union of the two red lines. In this case, the Pareto front is Type IV (the objective is the minus distance to each line).

\begin{figure}[!htb]
\centering                                    
\includegraphics[scale=0.28]{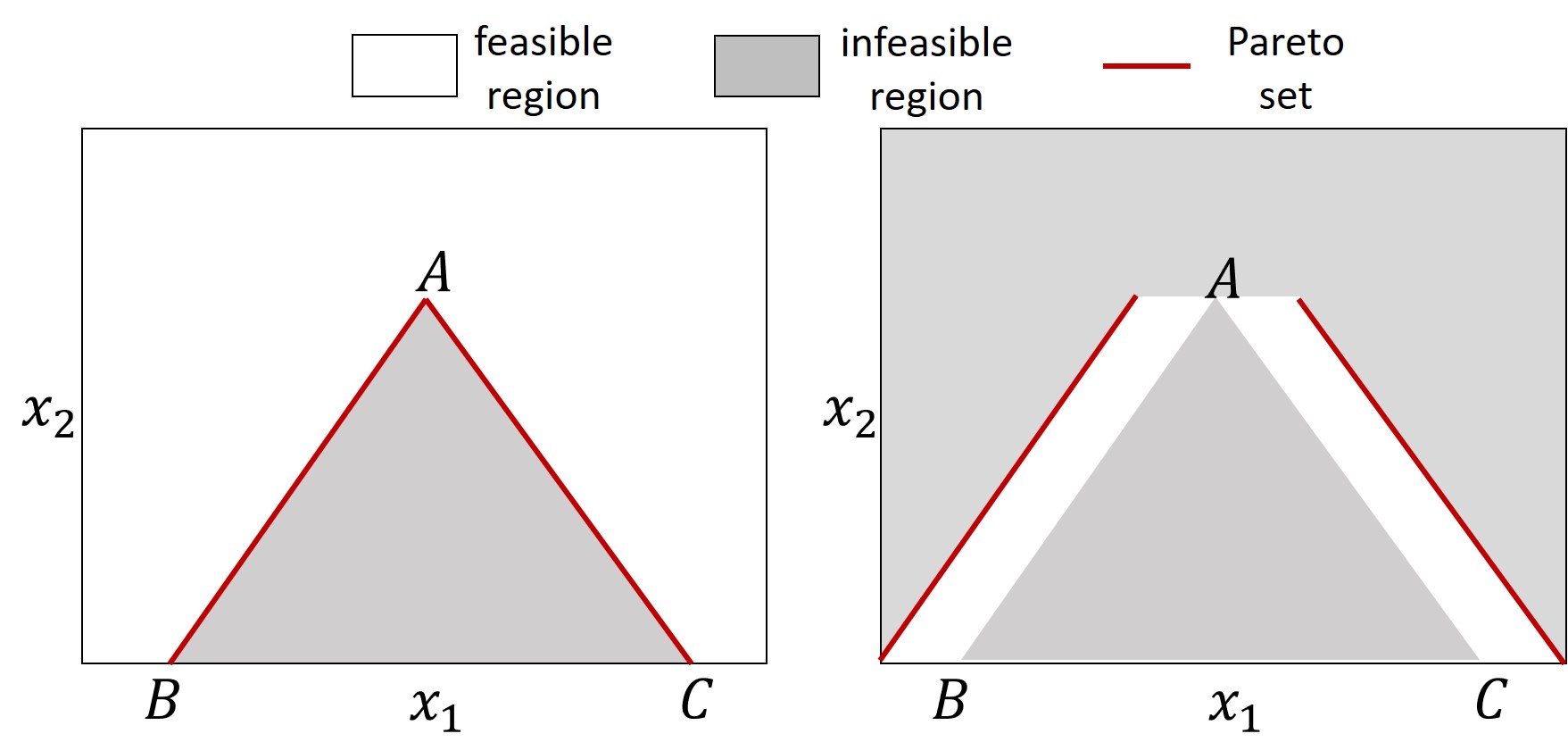}               
\caption{An illustration of the multi-line distance minimization problem with constraints. (a) The inside of the triangle $ABC$ and the line $BC$ are infeasible. The union of the line $AB$ and the line $AC$ is the Pareto set. (b) The objective is the minus distance to each line. The union of the two line segments is the Pareto set.} 
\label{dmp}                                                        
\end{figure}

Similarly, we can use the multi-line distance minimization problem to construct a Type V or Type VI Pareto front. The details are provided in Section IV of the supplementary material.

\section{Plane-based Pareto fronts}
\label{section-plane}
In the last section, we only considered the line-based Pareto fronts in three dimensions and showed that the hypervolume optimal $\mu$-distribution is not always uniform on these Pareto fronts. In this section, we investigate plane-based Pareto fronts, which are more commonly seen in multi-objective optimization test problems (e.g., three-objective DTLZ1 \cite{deb2005scalable} and DTLZ1$^{-1}$ \cite{ishibuchi2017performance}). 

The following two types of plane-based Pareto fronts were considered by Ishibuchi et al. \cite{ishibuchi2017hypervolume}:
\begin{enumerate}
\item \textbf{Type VII:} The linear triangular Pareto front specified by $f_1+f_2+f_3=1,f_1,f_2,f_3\geq 0$ (see Fig.~\ref{type7-8} (a)).
\item \textbf{Type VIII:}  The linear inverted triangular Pareto front specified by $f_1+f_2+f_3=2,0\leq f_1,f_2,f_3\leq 1$ (see Fig.~\ref{type7-8} (b)).
\end{enumerate}

Next, the hypervolume optimal $\mu$-distribution on each of these two types of Pareto fronts is investigated.

\begin{figure}[!htb]
\centering
\subfigure[Type VII Pareto front]{                    
\includegraphics[scale=0.3]{planePF1}               
}
\subfigure[Type VIII Pareto front]{                    
\includegraphics[scale=0.3]{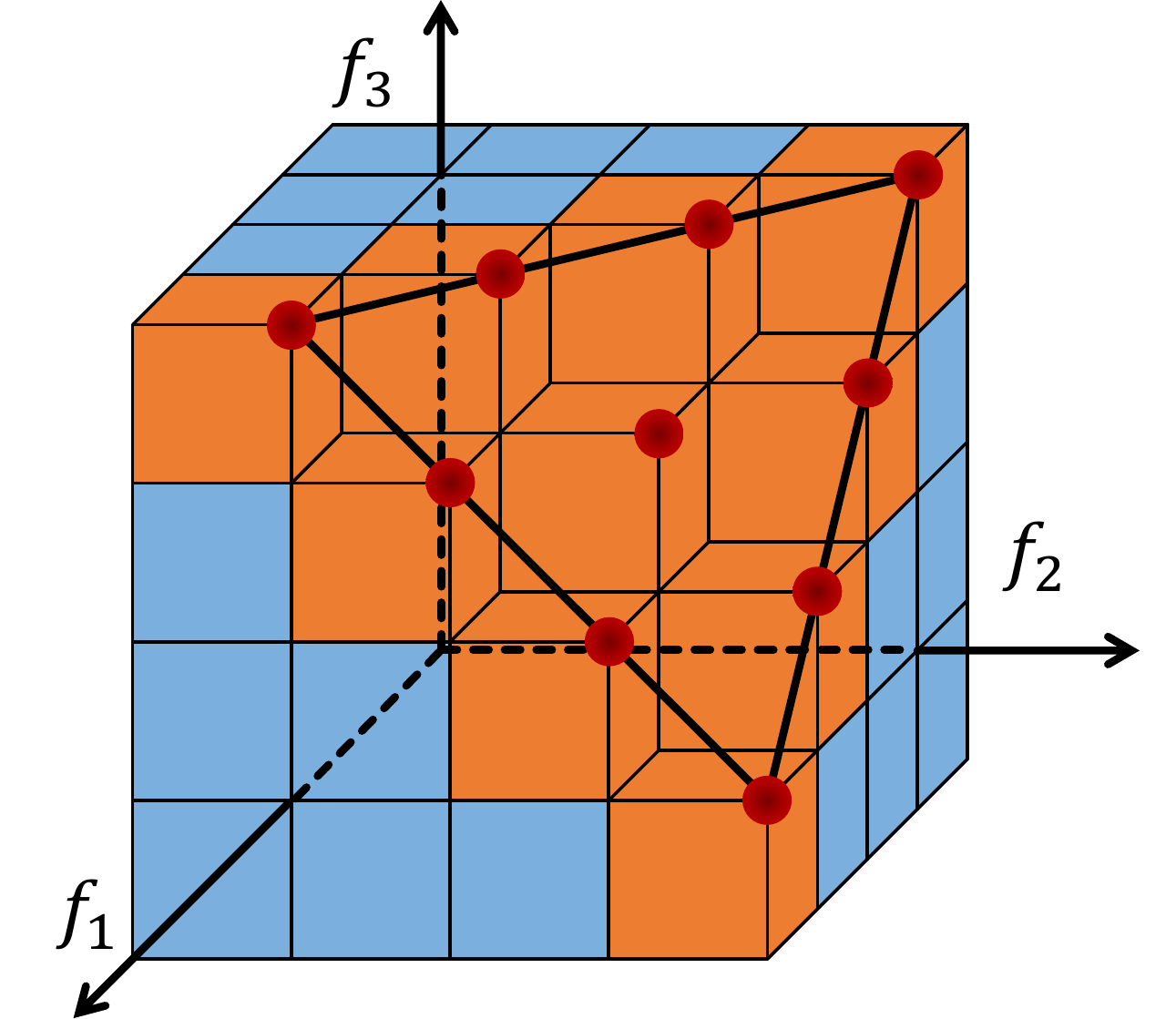}                
}
\caption{Plane-based Pareto fronts in three dimensions.} 
\label{type7-8}                                                        
\end{figure}

\begin{figure*}[!htb]
\centering
\subfigure[$H=1$]{                    
\includegraphics[scale=0.12]{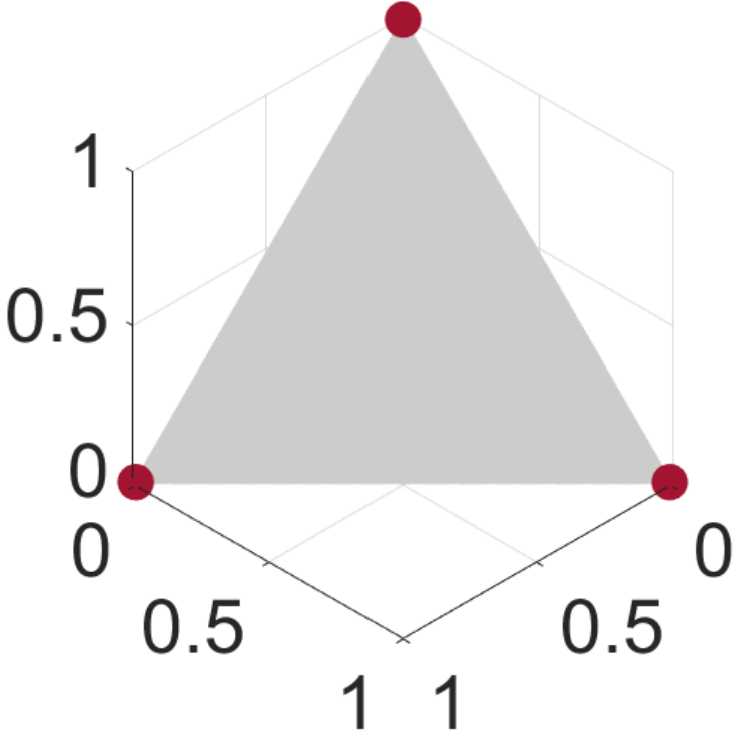}               
}
\subfigure[$H=2$]{                    
\includegraphics[scale=0.12]{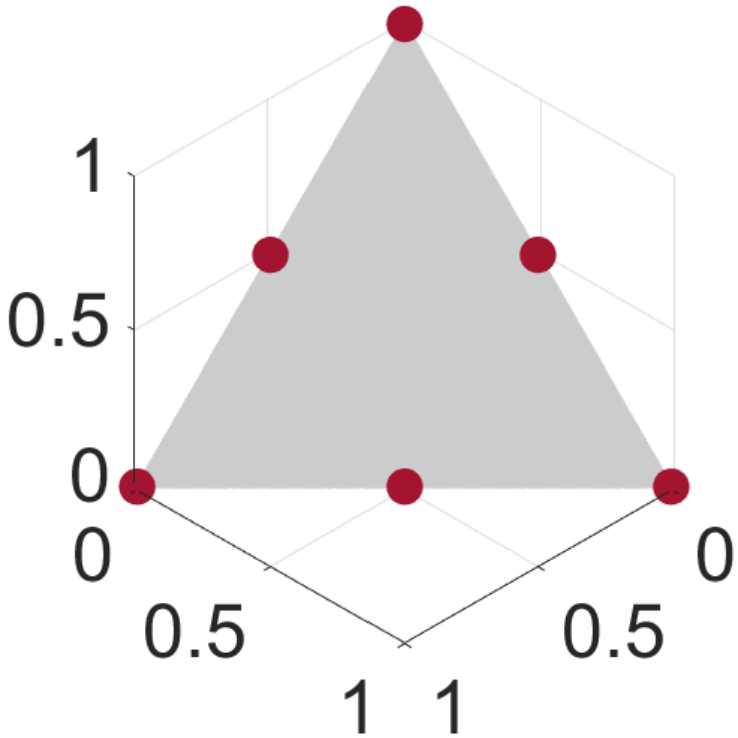}                
}
\subfigure[$H=3$]{                    
\includegraphics[scale=0.12]{das3}                
}
\subfigure[$H=4$]{                    
\includegraphics[scale=0.12]{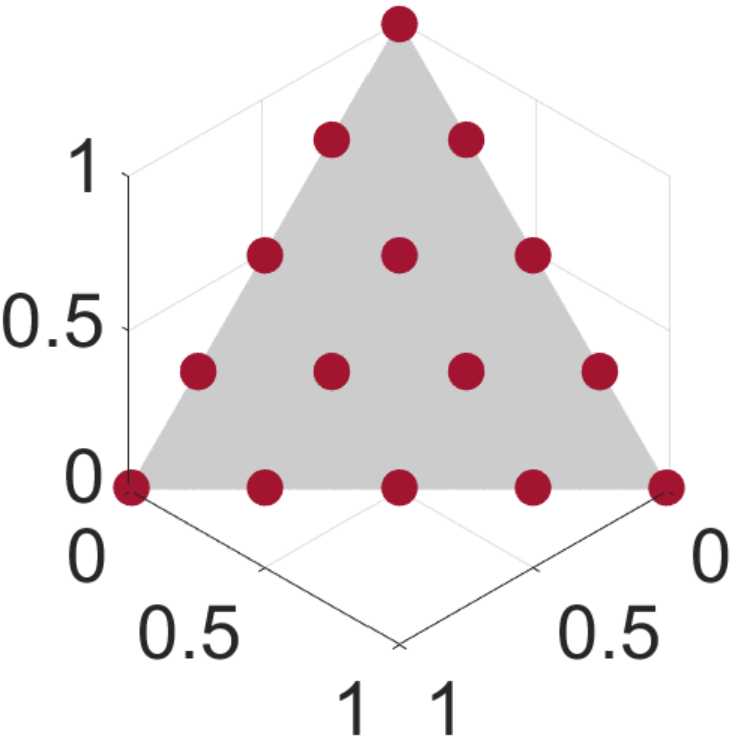}                
}
\subfigure[$H=5$]{                    
\includegraphics[scale=0.12]{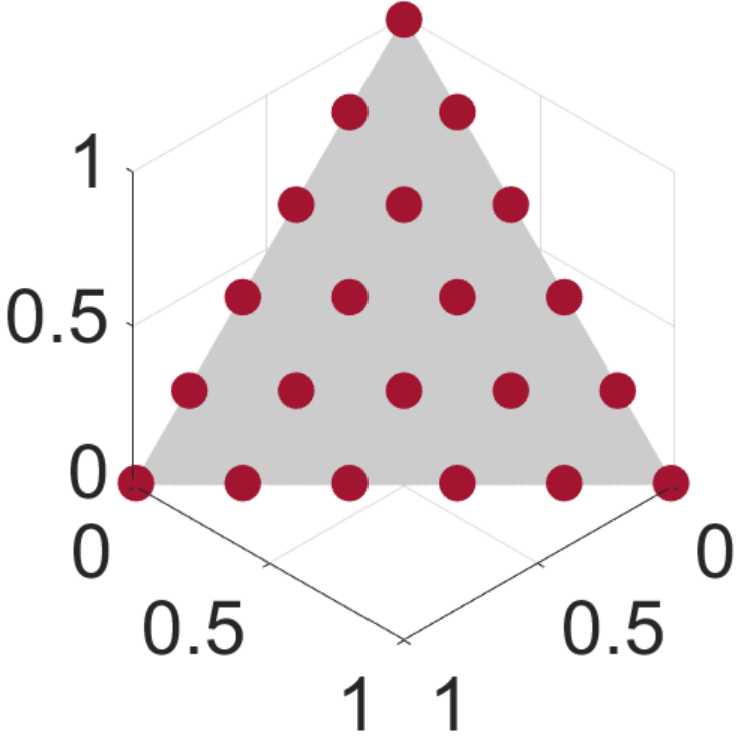}                
}\\
\subfigure[$H=6$]{                    
\includegraphics[scale=0.12]{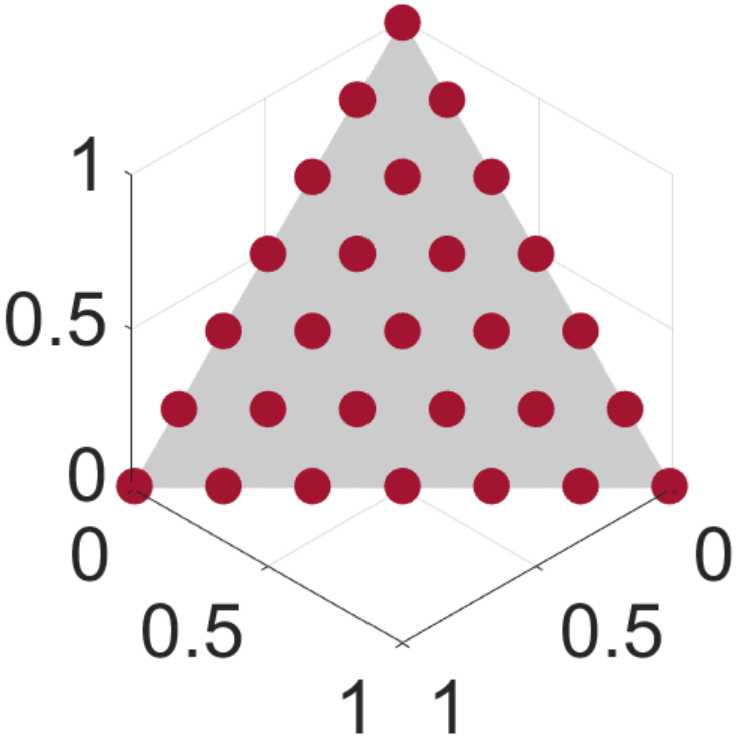}               
}
\subfigure[$H=7$]{                    
\includegraphics[scale=0.12]{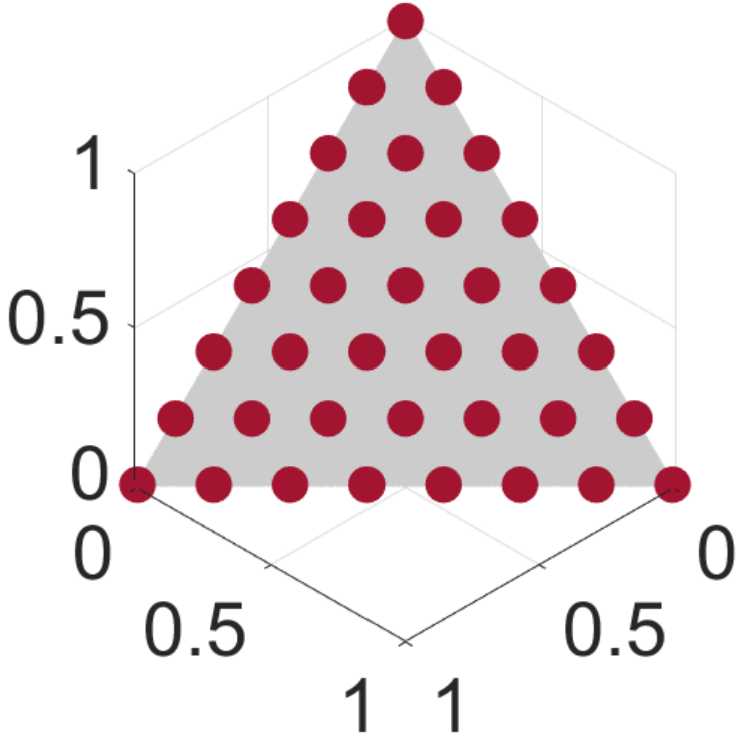}                
}
\subfigure[$H=8$]{                    
\includegraphics[scale=0.12]{das8}                
}
\subfigure[$H=9$]{                    
\includegraphics[scale=0.12]{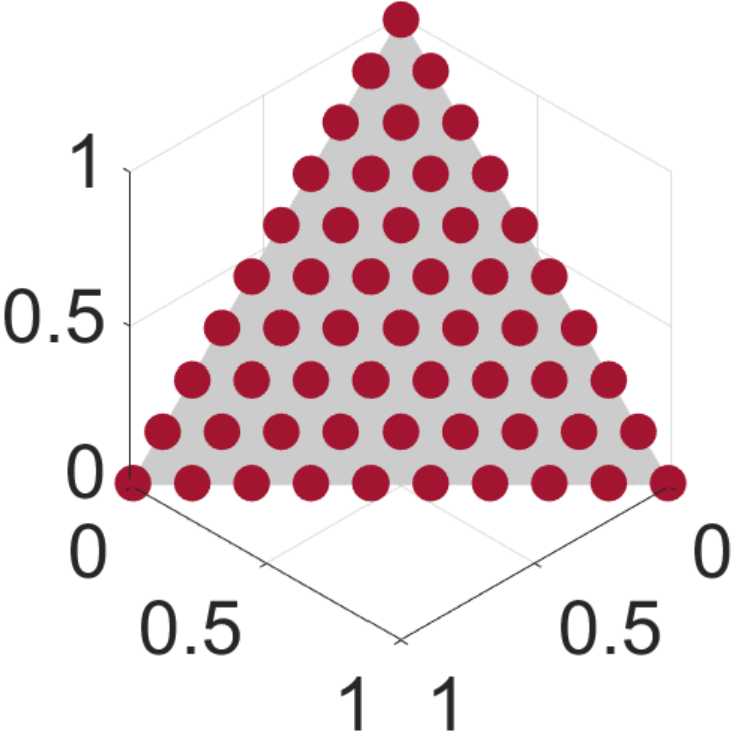}                
}
\subfigure[$H=10$]{                    
\includegraphics[scale=0.12]{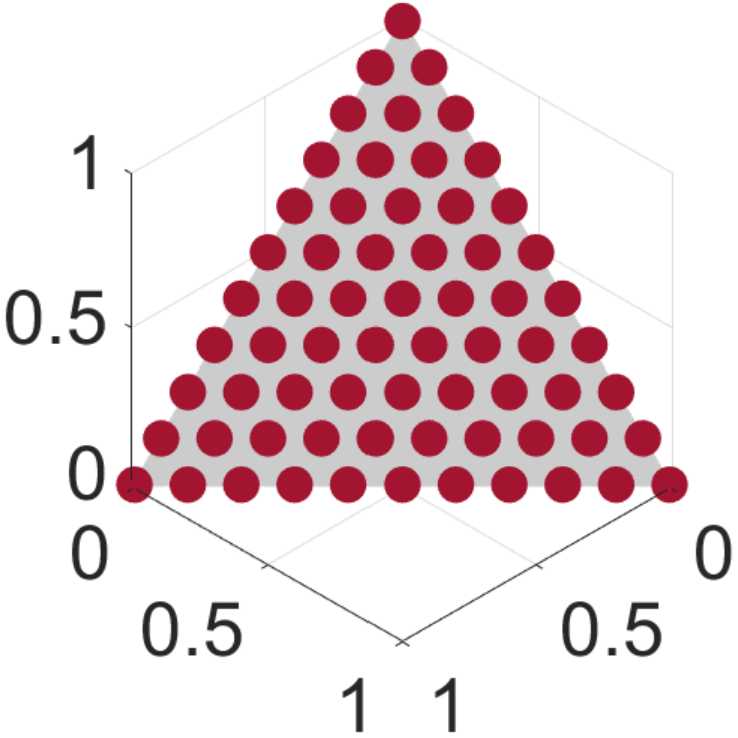}                
}
\caption{The DAS solution sets.} 
\label{appendix1}                                                        
\end{figure*}

\begin{figure*}[!htb]
\centering
\subfigure[$H=1$]{                    
\includegraphics[scale=0.12]{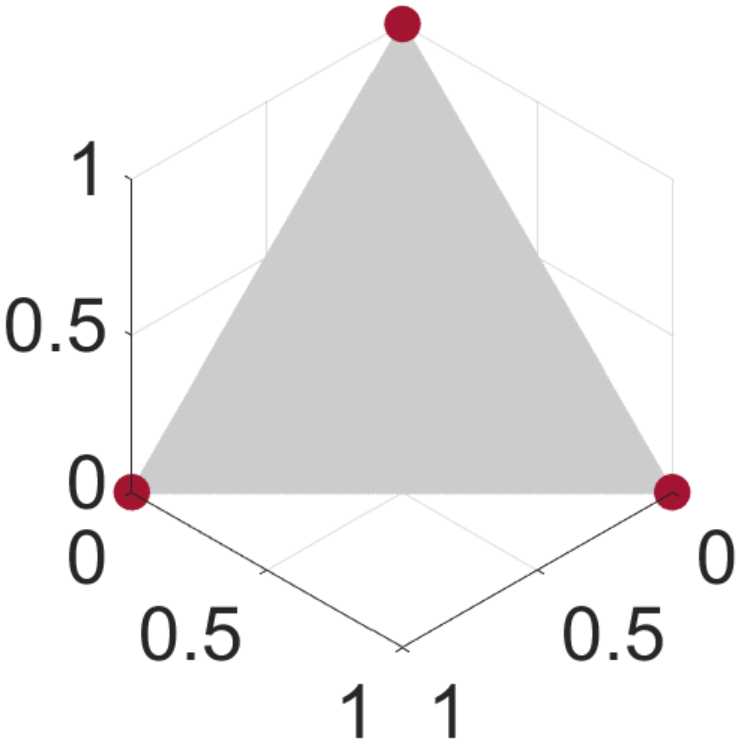}               
}
\subfigure[$H=2$]{                    
\includegraphics[scale=0.12]{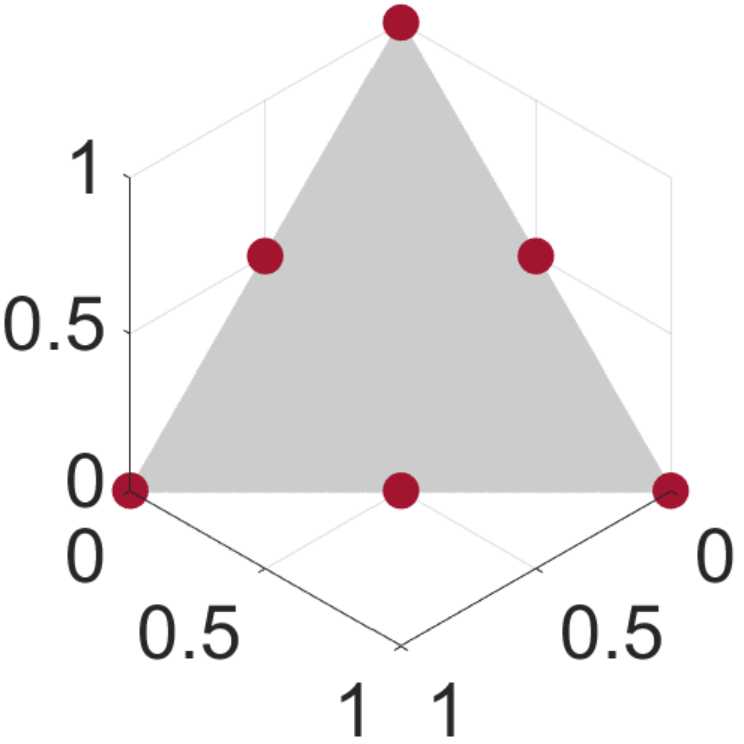}                
}
\subfigure[$H=3$]{                    
\includegraphics[scale=0.12]{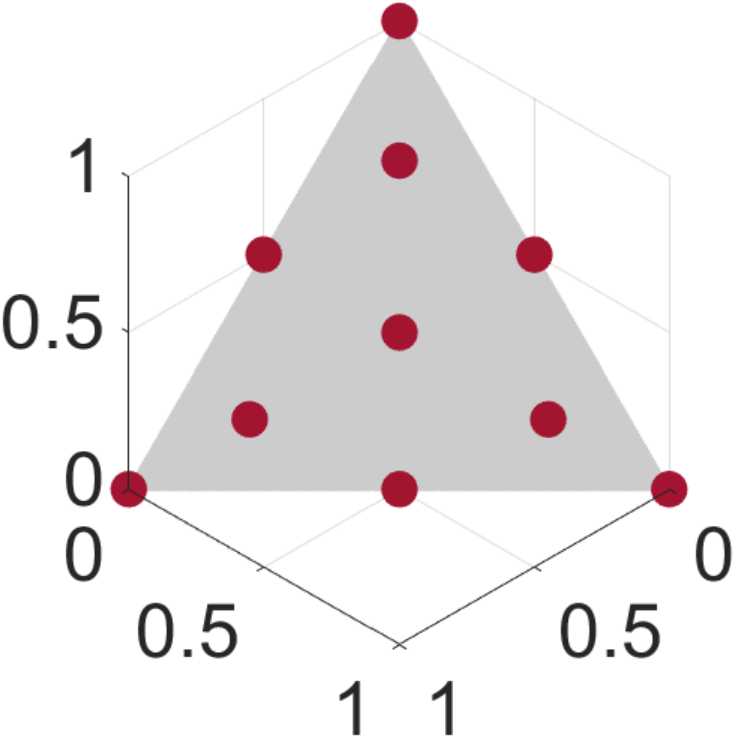}                
}
\subfigure[$H=4$]{                    
\includegraphics[scale=0.12]{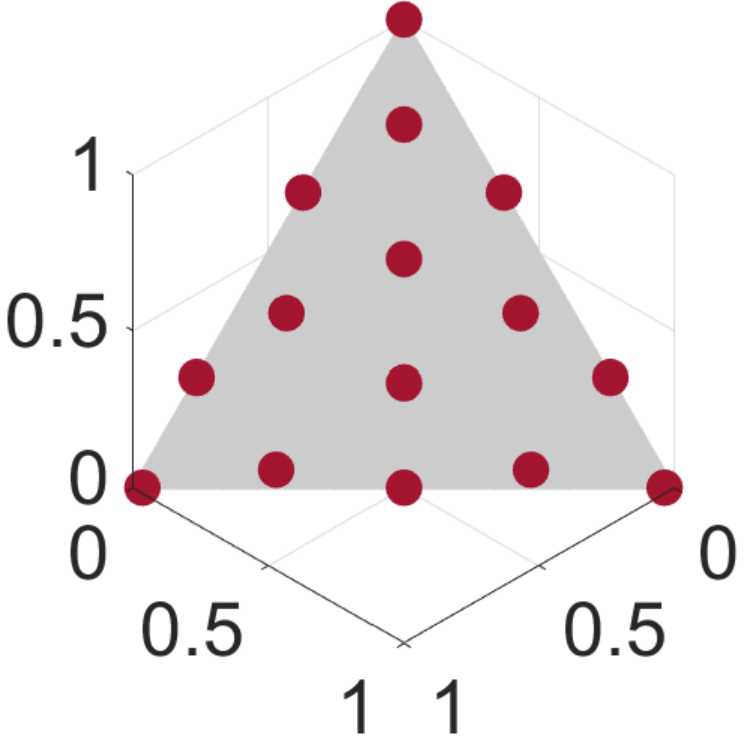}                
}
\subfigure[$H=5$]{                    
\includegraphics[scale=0.12]{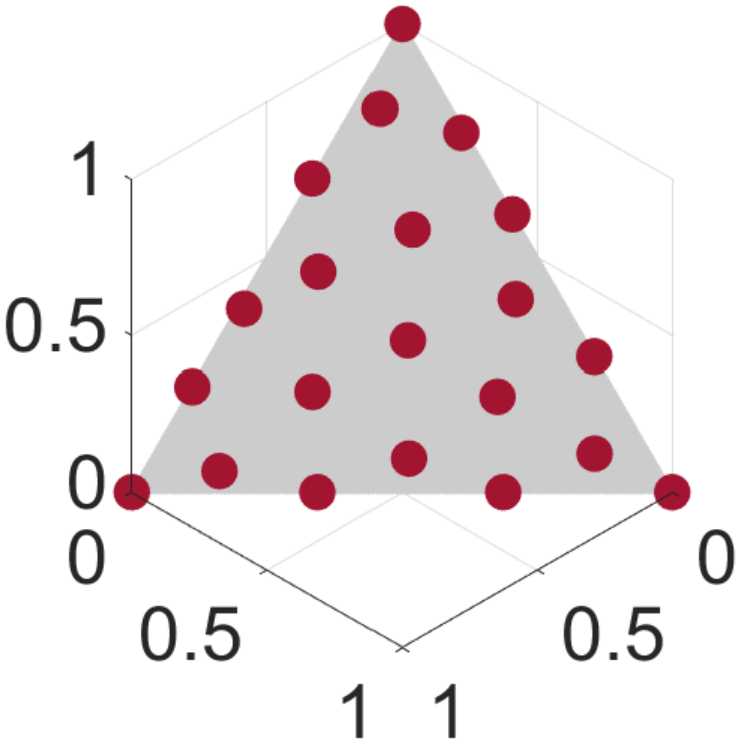}                
}\\
\subfigure[$H=6$]{                    
\includegraphics[scale=0.12]{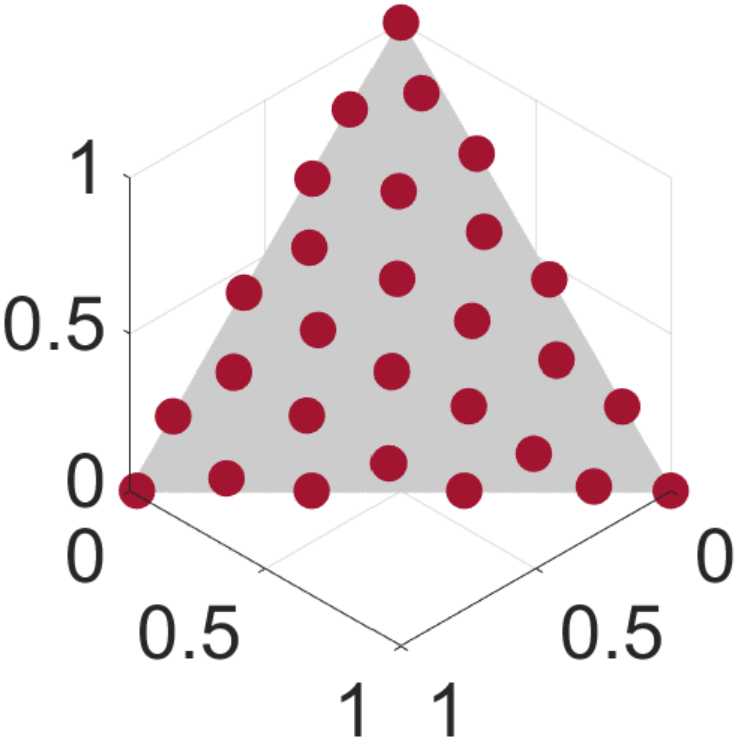}               
}
\subfigure[$H=7$]{                    
\includegraphics[scale=0.12]{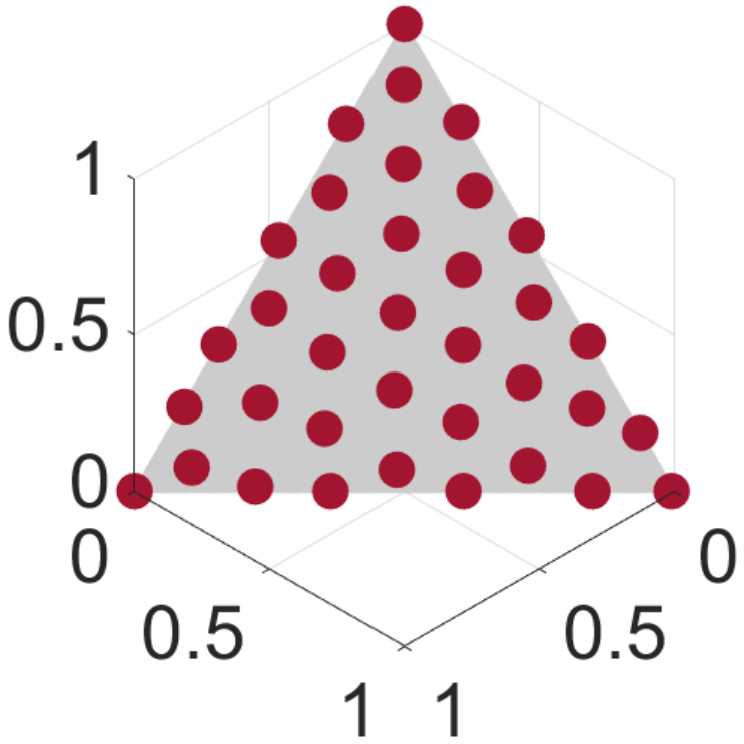}                
}
\subfigure[$H=8$]{                    
\includegraphics[scale=0.12]{sms8}                
}
\subfigure[$H=9$]{                    
\includegraphics[scale=0.12]{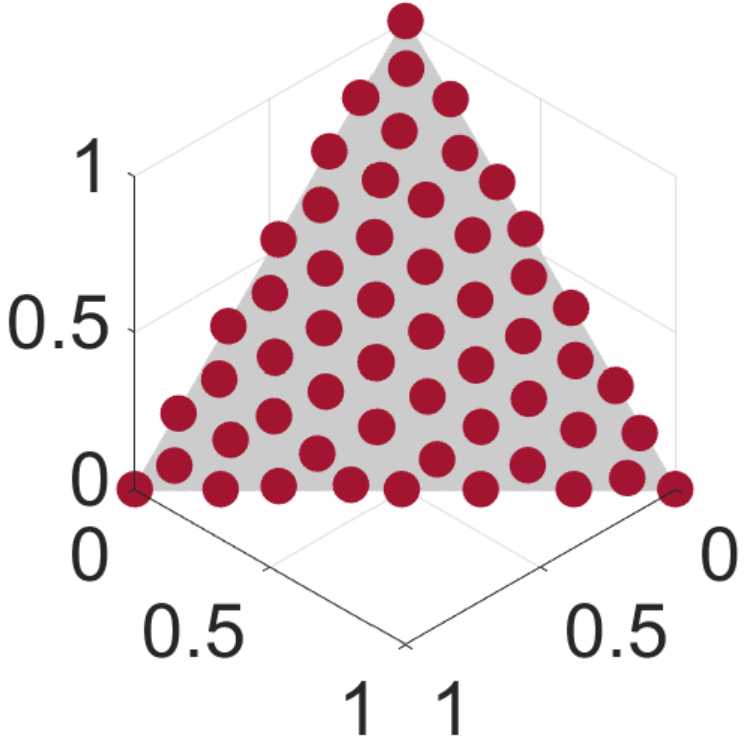}                
}
\subfigure[$H=10$]{                    
\includegraphics[scale=0.12]{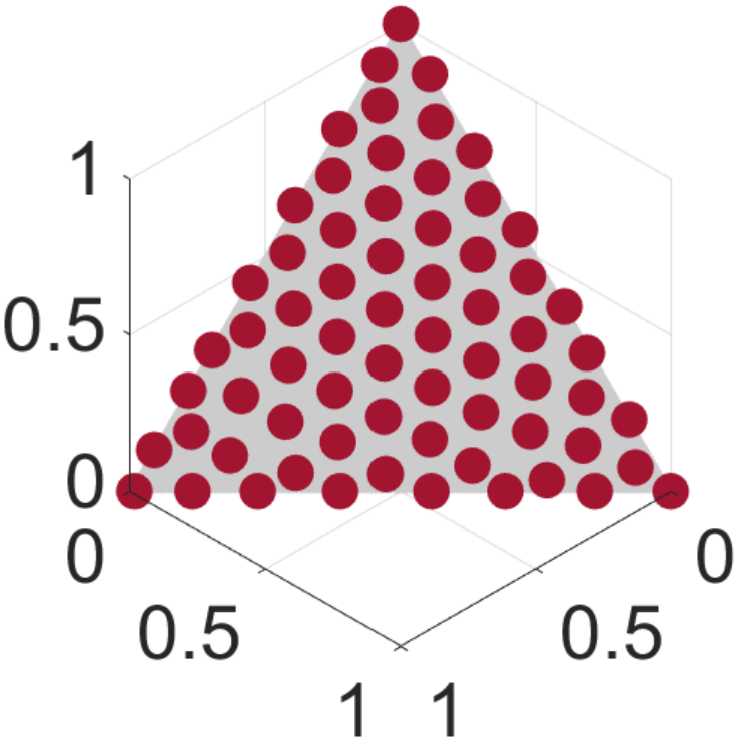}                
}
\caption{The best solution sets obtained by SMS-EMOA.} 
\label{appendix2}                                                        
\end{figure*}

\subsection{Type VII Pareto Front}

For the DAS solution set on the Type VII Pareto front, if the reference point is specified as $r=-1/H$, all the solutions have the same hypervolume contribution as shown in Fig. \ref{type7-8} (a). As already shown in the previous sections, a uniformly distributed solution set is optimal for hypervolume maximization when each solution has the same hypervolume contribution (e.g., Types I, III and V Pareto fronts). Thus, intuitively we may think that the DAS solution set is also optimal for hypervolume maximization.

In order to examine whether this intuition is correct or not, we perform SMS-EMOA on the Type VII Pareto front to search for the optimal $\mu$-distribution. Here we use SMS-EMOA to do the search since it is proved by Beume et al. \cite{beume2009effects} that SMS-EMOA is able to find the optimal $\mu$-distribution on a two-objective linear Pareto front. Whereas there is no theoretical guarantee on the three-objective linear Pareto front, SMS-EMOA is powerful for searching for the optimal $\mu$-distribution in the three-objective case. We set $H=1,2,...,10$ and the reference point $r=-1/H$, and use SMS-EMOA to search for the optimal $\mu$-distribution on the Type VII Pareto front. In order to make sure that the obtained solution set is optimal (or very close to optimal), we use a large number of generations 10,000 and a large number of independent runs 100 for each $H$.  The best solution set among 100 runs is used as the best optimal $\mu$-distribution for each case. 

Table \ref{exactmethodshv} shows the hypervolume of the best solution sets obtained by SMS-EMOA, and the DAS solution sets. From Table \ref{exactmethodshv}, we can observe that for each $H$, the hypervolume of the DAS solution set is smaller than or equal to the hypervolume of the best solution set obtained by SMS-EMOA. Specifically, the DAS solution set has the same hypervolume value as the best solution set obtained by SMS-EMOA when $H=1,2$. Moreover, all the 100 solution sets obtained by SMS-EMOA for each $H=1,2$ have the same hypervolume value. This means that the DAS solution set is optimal for hypervolume maximization when $H=1,2$, which is consistent with our intuition. However, it is not optimal for hypervolume maximization when $H=3,...,10$, which is contrary to our intuition. Thus, the intuition mentioned above is not always correct. 

\begin{table}[!htb]
\centering
\caption{Hypervolume of the solution sets generated by DAS and obtained by SMS-EMOA. The reference point is specified as $r=-1/H$. The better hypervolume value is highlighted in bold for each $H$.}
\renewcommand\arraystretch{1.2}
\begin{tabular}{l|l|c}
\hline
$H$ ($\mu$)&{DAS}&SMS-EMOA \\ \hline
1 (3)&\textbf{4.0000} &\textbf{4.0000} \\
2 (6)&\textbf{1.2500} &\textbf{1.2500} \\
3 (10)&0.7407 &\textbf{0.7422} \\
4 (15)&0.5469 &\textbf{0.5483} \\
5 (21)&0.4480 &\textbf{0.4497} \\
6 (28) &0.3889 &\textbf{0.3905}\\
 7 (36) &0.3499 &\textbf{0.3515}\\
 8 (45) &0.3223 & \textbf{0.3236}\\
 9 (55) &0.3018 &\textbf{0.3031}\\
10 (66) &0.2860 &\textbf{0.2872}\\
\hline
\end{tabular}
\label{exactmethodshv}
\end{table}

Figs. \ref{appendix1}-\ref{appendix2} show the DAS solution sets and the best solution sets obtained by SMS-EMOA, respectively. We can see that when $H=1,2$, the DAS solution sets are the same as the best solution sets obtained by SMS-EMOA. However, when $H=3,...,10$, they are totally different. The solution sets obtained by SMS-EMOA are not as uniform as the DAS solution sets for $H=3,...,10$.

In the above experiments, we use the reference point suggested in \cite{ishibuchi2017reference} for hypervolume comparison (i.e., $r=-1/H$). In the case of the Type VII Pareto front, the optimal $\mu$-distribution strongly depends on the reference point specification \cite{ishibuchi2017reference}\footnote{This conclusion holds for the inverted triangular Pareto front in \cite{ishibuchi2017reference} since the minimization case is considered in \cite{ishibuchi2017reference}. However, this conclusion holds for the triangular Pareto front in this paper since we consider the maximization case.}. When the reference point is far away from the Pareto front (e.g., $r = -100$), all solutions in the optimal $\mu$-distribution are on the sides of the Pareto front. When the nadir point is used as the reference point (i.e., $r = 0$), no solutions in the optimal $\mu$-distribution are on the sides of the Pareto front. 

Now we want to know whether we can improve the DAS solution set for $H=3,...,10$ using SMS-EMOA. Through experiments, we found that the DAS solution set cannot be further improved by SMS-EMOA. This interesting observation motivates us to examine the property of the DAS solution set. The following theorem shows that the DAS solution set is locally optimal for hypervolume maximization with respect to the $(\mu+1)$ selection scheme, i.e., the DAS solution set cannot be further improved by replacing one solution. 

\begin{theorem}
\label{theorem6-2}
Given the reference point $r=-1/H$, the DAS solution set is locally optimal for hypervolume maximization with respect to a $(\mu+1)$ selection scheme.
\end{theorem}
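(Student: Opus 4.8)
The plan is to convert local optimality under the $(\mu+1)$ scheme into a single pointwise inequality and then verify it by a direct volume computation on the grid. First I would record the two facts that make the reduction clean: with $r=-1/H$ every point $\mathbf{s}$ of the DAS set $A$ has the same contribution $c$, and (as one checks directly from the staircase) its exclusive region $E(\mathbf{s})$ — the part of space dominated by $\mathbf{s}$ and by no other point of $A$ — is an axis-aligned cube of side $1/H$, so $c=(1/H)^3$. The $(\mu+1)$ scheme returns $A$ exactly when, for every candidate $\mathbf{p}\in\mathcal{F}$, the inserted point is a least-contributing point of $A\cup\{\mathbf{p}\}$; using $\mathrm{HV}(A\cup\{\mathbf{p}\})=\mathrm{HV}(A)+\mathrm{HVC}(\mathbf{p},A\cup\{\mathbf{p}\})$ an elementary manipulation shows this is equivalent to $\mathrm{HVC}\big(\mathbf{p},(A\setminus\{\mathbf{s}\})\cup\{\mathbf{p}\}\big)\le c$ for every $\mathbf{s}\in A$ and every $\mathbf{p}\in\mathcal{F}$ (with equality forced when $\mathbf{p}=\mathbf{s}$). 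Writing $R(\mathbf{p})$ for the box dominated by $\mathbf{p}$ and $U=\bigcup_{\mathbf{a}\in A}R(\mathbf{a})$ for the full coverage, this quantity splits into two disjoint pieces, $\mathrm{HVC}\big(\mathbf{p},(A\setminus\{\mathbf{s}\})\cup\{\mathbf{p}\}\big)=\Phi_{\mathrm{new}}(\mathbf{p})+\Phi_{\mathrm{rec}}(\mathbf{p},\mathbf{s})$, where $\Phi_{\mathrm{new}}(\mathbf{p})=\mathrm{Vol}\big(R(\mathbf{p})\setminus U\big)$ is the ``new territory'' won outside the DAS coverage and is independent of $\mathbf{s}$, and $\Phi_{\mathrm{rec}}(\mathbf{p},\mathbf{s})=\mathrm{Vol}\big(R(\mathbf{p})\cap E(\mathbf{s})\big)$ is the portion of $\mathbf{s}$'s cube reclaimed by $\mathbf{p}$.

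Next I would introduce explicit grid coordinates. Writing $Hp_i=n_i+f_i$ with $n_i=\lfloor Hp_i\rfloor$ and fractional parts $f_i\in[0,1)$, the front constraint $\sum_i p_i=1$ forces $\sum_i f_i\in\{0,1,2\}$, classifying $\mathbf{p}$ as a grid point, a point of an ``upward'' triangular cell ($\sum_i f_i=1$), or a ``downward'' cell ($\sum_i f_i=2$). The key observation is that $(x,y,z)\in U$ iff $\lceil Hx\rceil_{+}+\lceil Hy\rceil_{+}+\lceil Hz\rceil_{+}\le H$, where $\lceil t\rceil_{+}:=\max(0,\lceil t\rceil)$, so $\Phi_{\mathrm{new}}(\mathbf{p})$ is the total volume, inside $R(\mathbf{p})$, of the unit grid cells whose index sum exceeds $H$. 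Counting them gives $\Phi_{\mathrm{new}}=\big(f_1f_2+f_1f_3+f_2f_3+f_1f_2f_3\big)/H^{3}$ in the upward case (one corner cell and three face cells protrude from $U$) and $\Phi_{\mathrm{new}}=f_1f_2f_3/H^{3}$ in the downward case (only the corner cell protrudes). For the reclaimed term one gets $\Phi_{\mathrm{rec}}(\mathbf{p},\mathbf{s})=\prod_{i}\big(1/H-\max(0,s_i-p_i)\big)$ whenever all factors are positive; maximizing over $\mathbf{s}\in A$ amounts to choosing the vertex of $\mathbf{p}$'s cell that maximizes the corresponding product, giving $\max_{\mathbf{s}}\Phi_{\mathrm{rec}}=(\max_i f_i)/H^{3}$ in the upward case and $\max_{\mathbf{s}}\Phi_{\mathrm{rec}}=(\max_{i<j}f_if_j)/H^{3}$ in the downward case. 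Since $\Phi_{\mathrm{new}}$ is independent of $\mathbf{s}$, it suffices to check $\Phi_{\mathrm{new}}+\max_{\mathbf{s}}\Phi_{\mathrm{rec}}\le c$.

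The two cases then reduce to short algebraic inequalities. In the upward case, putting $a=\max_i f_i$ and letting $b,c'$ denote the other two fractions with $a+b+c'=1$, the target $\Phi_{\mathrm{new}}+\max_{\mathbf{s}}\Phi_{\mathrm{rec}}\le 1/H^{3}$ becomes $(1-a)^2\ge(1+a)\,b\,c'$, which follows from $(1-a)^2=(b+c')^2\ge 4bc'\ge(1+a)bc'$ by the arithmetic--geometric mean inequality together with $a<1$. In the downward case, taking $u=\min_i f_i$ the target is $f_1f_2(1+u)\le1$ for the two largest fractions; applying $f_1f_2\le(2-u)^2/4$ reduces it to the identity $(2-u)^2(1+u)=4-u^{2}(3-u)\le4$. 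Both inequalities are strict in the interior of a cell, so the DAS set is in fact strictly locally optimal away from the grid, with equality occurring only when $\mathbf{p}$ is itself a grid point (where the best replacement is $\mathbf{p}=\mathbf{s}$), matching the fact that the scheme leaves $A$ fixed.

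I expect the geometric bookkeeping, rather than this closing algebra, to be the main obstacle. The delicate points are pinning down the exclusive cubes $E(\mathbf{s})$ and the exact set of grid cells contributing to $\Phi_{\mathrm{new}}$, and checking that the formulas survive the boundary cases: DAS points lying on the edges or vertices of the simplex (whose cubes protrude below the nadir into the reference box) and candidate points $\mathbf{p}$ lying on cell edges. This is precisely why I would isolate the coverage characterization $U=\{\lceil Hx\rceil_{+}+\lceil Hy\rceil_{+}+\lceil Hz\rceil_{+}\le H\}$ and the clamped reclaim formula $\prod_i\big(1/H-\max(0,s_i-p_i)\big)$ first, since they handle the interior, edge, and vertex configurations uniformly; once they are in place, the upward/downward case split and the two mean-inequalities complete the proof.
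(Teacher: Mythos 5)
Your proposal is correct and takes essentially the same route as the paper's proof: the paper likewise reduces $(\mu+1)$ local optimality to showing that an arbitrary inserted point $\mathbf{p}$ on the front is the least hypervolume contributor of $A\cup\{\mathbf{p}\}$ (hence immediately removed), splits the front into triangular (your ``upward'') and inverted-triangular (your ``downward'') cells, and verifies elementary inequalities in the cell's local coordinates. Your criterion $\Phi_{\mathrm{new}}(\mathbf{p})+\max_{\mathbf{s}}\Phi_{\mathrm{rec}}(\mathbf{p},\mathbf{s})\le c$ is exactly the paper's comparison $\mathrm{HVC}(\mathbf{p},A\cup\{\mathbf{p}\})\le \mathrm{HVC}(\mathbf{s},A\cup\{\mathbf{p}\})$ rewritten via $\mathrm{HVC}(\mathbf{s},A\cup\{\mathbf{p}\})=c-\Phi_{\mathrm{rec}}(\mathbf{p},\mathbf{s})$, and your two fractional-coordinate inequalities coincide with the paper's (e.g., $f_1f_2f_3+f_1f_2\le 1$ versus the paper's $xyz<1-xy$), with only the closing algebra (your AM--GM steps versus the paper's direct expansions) differing.
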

\begin{proof}
We will prove this theorem by adding one arbitrary solution on the Pareto front and showing that this solution has the least hypervolume contribution, which implies that this solution is removed and the original set cannot be improved. 

Firstly, the Pareto front is divided into different triangular regions as illustrated in Fig.~\ref{dividetriangle}. We can see that there are two types of regions: the triangular regions and the inverted triangular regions. Next, the two types of regions are considered separately. 
\begin{figure}[!htb]
\centering
\includegraphics[scale=0.2]{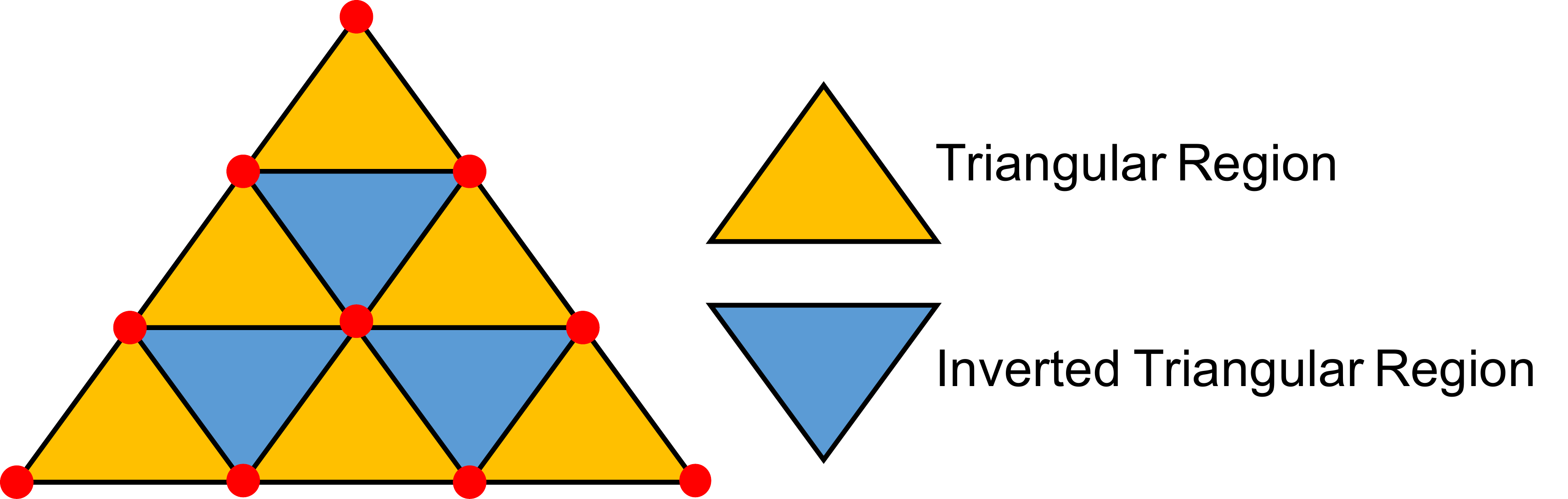}               
\caption{The Pareto front is divided into triangular and inverted triangular regions.} 
\label{dividetriangle}                                                        
\end{figure}

If a solution $\mathbf{p}$ is added in an inverted triangular region as illustrated in Fig.~\ref{invertedregion} (a), the hypervolume contributions of three solutions $\mathbf{a},\mathbf{b},\mathbf{c}$ (i.e., the three vertices of the triangle) are influenced by $\mathbf{p}$. 
\begin{figure}[!htb]
\centering
\subfigure[Adding a solution in the inverted triangular region]{
\includegraphics[scale=0.5]{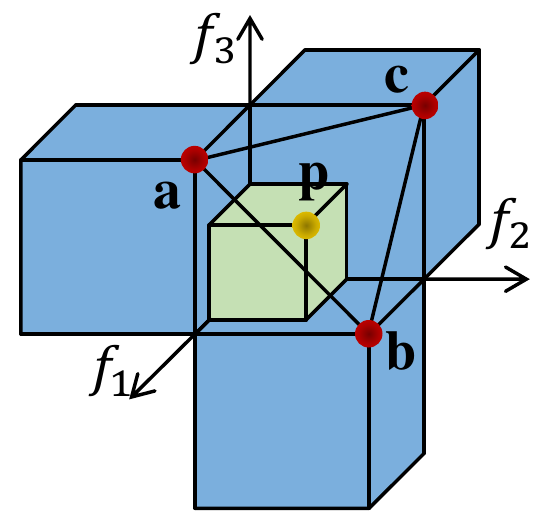}    }           
\hspace{5mm}
\subfigure[Adding a solution in the triangular region]{
\includegraphics[scale=0.5]{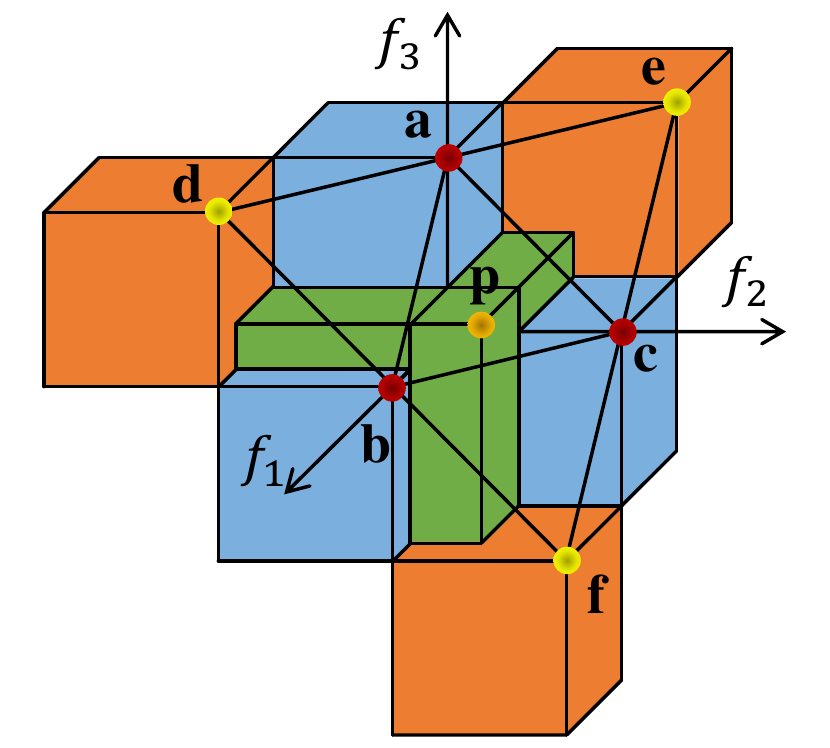}     }          
\caption{Adding a solution on the Pareto front.} 
\label{invertedregion}                                                        
\end{figure}

Suppose $\mathbf{p}=(x,y,z)$, $\mathbf{a}=(1,0,1)$, $\mathbf{b}=(0,1,1)$ and $\mathbf{c}=(1,1,0)$, then the hypervolume contribution of each of these solutions can be calculated as follows:
\begin{equation}
\begin{aligned}
&HVC(\mathbf{p})=xyz,\\
&HVC(\mathbf{a})=1-xz,\text{ }HVC(\mathbf{b})=1-yz,\text{ }HVC(\mathbf{c})=1-xy.\\
\end{aligned}
\end{equation}

Given the conditions $x+y+z=2$ and $0\leq x,y,z\leq 1$, we prove that $HVC(\mathbf{p})<HVC(\mathbf{c})$ as follows:
\begin{equation}
\begin{aligned}
HVC(\mathbf{p})&=xyz=xy(2-x-y)=xy(1-x)+xy(1-y)\\
&<y(1-x)+x(1-y)=(x+y-xy)-xy.\\
\end{aligned}
\label{hvcabc}
\end{equation}

Let $f(x,y)=x+y-xy$, we have $f(0,0)=0$ and $f(1,1,)=1$. Since $\nabla f(x,y)=(1-y,1-x)>0$, we have $f(x,y)<f(1,1)=1$. Then, based on Eq.~\eqref{hvcabc}, we have $HVC(\mathbf{p})<1-xy=HVC(\mathbf{c})$. Similarly, we can also prove that $HVC(\mathbf{p})<HVC(\mathbf{a})$ and $HVC(\mathbf{p})<HVC(\mathbf{b})$. Thus, $\mathbf{p}$ is the least hypervolume contributor and $\mathbf{p}$ is removed.

If a solution $\mathbf{p}$ is added in a triangular region as illustrated in Fig.~\ref{invertedregion} (b), the hypervolume contributions of six solutions $\mathbf{a},\mathbf{b},\mathbf{c},\mathbf{d},\mathbf{e},\mathbf{f}$ are influenced by $\mathbf{p}$. 

Suppose $\mathbf{p}=(x,y,z)$, $\mathbf{a}=(0,0,1)$, $\mathbf{b}=(1,0,0)$, $\mathbf{c}=(0,1,0)$, $\mathbf{d}=(1,-1,1)$, $\mathbf{e}=(-1,1,1)$ and $\mathbf{f}=(1,1,-1)$, then the hypervolume contribution of each of these solutions can be calculated as follows:
\begin{equation}
\begin{aligned}
&HVC(\mathbf{p})=xyz+xy+xz+yz,\\
&HVC(\mathbf{a})=1-z,\text{ }HVC(\mathbf{b})=1-x, \text{ }HVC(\mathbf{c})=1-y,\\
&HVC(\mathbf{d})=1-xz,\text{ }HVC(\mathbf{e})=1-yz,\text{ }HVC(\mathbf{f})=1-xy.\\
\end{aligned}
\end{equation}

Given the conditions $x+y+z=1$ and $0\leq x,y,z\leq 1$, we prove that $HVC(\mathbf{p})<HVC(\mathbf{a})$ as follows:
\begin{equation}
\begin{aligned}
HVC(\mathbf{p})&=xyz+xy+xz+yz\\
&=xy(1-x-y)+xy\\
&+x(1-x-y)+y(1-x-y)\\
&=x+y-x^2(1+y)-y^2(1+x)\\
&<x+y=1-z=HVC(\mathbf{a}).\\
\end{aligned}
\end{equation}

Similarly, we can prove that $HVC(\mathbf{p})<HVC(\mathbf{b})$ and $HVC(\mathbf{p})<HVC(\mathbf{c})$. It is easy to observe that $HVC(\mathbf{a})<HVC(\mathbf{d})$ and $HVC(\mathbf{a})<HVC(\mathbf{e})$. Similarly, $HVC(\mathbf{b})<HVC(\mathbf{d})$ and $HVC(\mathbf{b})<HVC(\mathbf{f})$, $HVC(\mathbf{c})<HVC(\mathbf{e})$ and $HVC(\mathbf{c})<HVC(\mathbf{f})$. Based on these relations, we can conclude that $\mathbf{p}$ is the least hypervolume contributor and $\mathbf{p}$ is removed.
\end{proof}

\begin{remark}
In Theorem \ref{theorem6-2}, the reference point is specified as $r = -1/H$. This is the suggested reference point specification for the hypervolume indicator as discussed in Section \ref{suggestedR}. If the reference point is specified as $r< -1/H$, Theorem \ref{theorem6-2} may not hold anymore since the solutions on the boundary of the Pareto front have  larger hypervolume contributions than the inner solutions. In this case, replacing an inner solution by a boundary solution can improve the hypervolume of the whole solution set.  When $r > -1/H$, Theorem \ref{theorem6-2} does not hold since the reference point is too close to the Pareto front.
\end{remark}

\subsection{Type VIII Pareto Front}
If we invert the DAS solution set, we can obtain a uniformly distributed solution set on the inverted triangular Pareto front as shown in Fig.~\ref{type7-8} (b). We call this uniform solution set as the inverted DAS solution set. If the reference point is specified as $r=-1/H$, all the solutions have the same hypervolume contribution as shown in Fig.~\ref{type7-8} (b). Similarly, we use SMS-EMOA to search for the optimal $\mu$-distribution on the Type VIII Pareto front. All the experimental settings are the same as in the previous subsection. Table \ref{exactmethodshv1} shows the hypervolume of the inverted DAS solution sets and the best solution sets obtained by SMS-EMOA. From Table \ref{exactmethodshv1}, we can observe that for each $H$, the hypervolume of the inverted DAS solution set is smaller than or equal to the hypervolume of the best solution set obtained by SMS-EMOA, which means that the inverted DAS solution set is not always optimal for hypervolume maximization. In Section V of the supplementary material, we visually show the inverted DAS solution sets and the best solution sets obtained by SMS-EMOA. Similar observations can be obtained to the case of the Type VII Pareto front.

\begin{table}[!htb]
\centering
\caption{Hypervolume of the inverted DAS solution sets and the best solution sets obtained by SMS-EMOA. The reference point is specified as $r=-1/H$. The better hypervolume value is highlighted in bold for each $H$.}
\renewcommand\arraystretch{1.2}
\begin{tabular}{r|c|c}
\hline
$H$ ($\mu$)&{Inverted DAS}&SMS-EMOA \\ \hline
1 (3)&\textbf{7.0000} &\textbf{7.0000} \\
2 (6)&\textbf{2.8750} &\textbf{2.8750} \\
3 (10)&2.0000 &\textbf{2.0019} \\
4 (15)&1.6406 &\textbf{1.6421} \\
5 (21)&1.4480 &\textbf{1.4496} \\
6 (28) &1.3287 &\textbf{1.3303}\\
7 (36) &1.2478 &\textbf{1.2493}\\
8 (45) &1.1895 &\textbf{1.1908}\\
9 (55) &1.1454 &\textbf{1.1466}\\
10 (66) &1.1110 &\textbf{1.1122}\\
\hline
\end{tabular}
\label{exactmethodshv1}
\end{table}

We use the suggested reference point specification in \cite{ishibuchi2017reference} for hypervolume comparison (i.e., $r=-1/H$). In the case of the Type VIII Pareto front, the optimal $\mu$-distribution does not change when $r\leq-1/H$ \cite{ishibuchi2017reference}. That is, the same distribution is always optimal for any specification of the reference point $\mathbf{r} = (r, r, r)$ satisfying $r \leq -1/H$. This is totally different from the case of the Type VII Pareto front.

In a similar manner to the case of  the triangular Pareto front, we have the following theorem which shows that the inverted DAS solution set is locally optimal for hypervolume maximization with respect to the $(\mu+1)$ selection scheme. 

\begin{theorem}
\label{theorem6-4}
Given the reference point $r=-1/H$, the inverted DAS solution set is locally optimal for hypervolume maximization with respect to the $(\mu+1)$ selection scheme.
\end{theorem}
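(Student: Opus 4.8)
The plan is to mirror the argument that established Theorem~\ref{theorem6-2} for the triangular front, re-deriving every geometric quantity for the inverted triangular geometry. I would add one arbitrary solution $\mathbf{p}=(x,y,z)$ on the Type VIII front to the inverted DAS set and show that $\mathbf{p}$ is always the strictly least hypervolume contributor among the solutions whose contributions it perturbs. Since the $(\mu+1)$ selection scheme discards the least contributor, $\mathbf{p}$ is removed and the original inverted DAS set is recovered, which is precisely the asserted local optimality.

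First I would partition the inverted triangular front $f_1+f_2+f_3=2$, $0\le f_1,f_2,f_3\le 1$ into congruent sub-regions generated by the inverted DAS grid, exactly as Fig.~\ref{dividetriangle} does for the triangular front. As before, these come in two orientations, triangular and inverted triangular sub-regions, and the only contributions disturbed by inserting $\mathbf{p}$ are those of the vertices of the sub-region containing $\mathbf{p}$ together with their immediate neighbors across the shared edges. Treating the two orientations separately localizes the problem to a single sub-region with canonical vertex coordinates.

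Next, working in rescaled local unit coordinates so that the relevant vertices take canonical positions (the analogues of $\mathbf{a},\mathbf{b},\mathbf{c}$ and $\mathbf{d},\mathbf{e},\mathbf{f}$ in the proof of Theorem~\ref{theorem6-2}), I would write closed forms for $HVC(\mathbf{p})$ and for the contribution of each affected neighbor as polynomials in $(x,y,z)$, subject to the affine constraint ($x+y+z$ equal to the appropriate local constant) and the box constraints $0\le x,y,z\le 1$. I would then establish $HVC(\mathbf{p})<HVC(\cdot)$ for every affected neighbor by the same elementary route as before: eliminate one coordinate using the affine constraint, reduce each difference to a polynomial in two free variables, and certify its sign via monotonicity (positivity of the relevant gradient) on the unit box. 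Combining these with the trivial comparisons between the inner and the outer neighbors shows that $\mathbf{p}$ is the unique minimizer of the hypervolume contribution.

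The hardest part will be setting up the hypervolume-contribution formulas correctly on the inverted front, rather than the algebra that follows. On the inverted triangular front the box dominated by a point is clipped by a different set of neighboring solutions and along different faces than on the triangular front, so I must carefully identify which neighbor trims which face of the region dominated by $\mathbf{p}$, and treat the sub-regions touching the three boundary edges---where some neighbors are absent and the clipping pattern changes---as separate cases. Because the componentwise inversion $\mathbf{f}\mapsto\mathbf{1}-\mathbf{f}$ turns maximization into minimization and does not preserve hypervolume, I cannot simply invoke a duality with Theorem~\ref{theorem6-2}; the contribution formulas genuinely have to be recomputed. Once they are in hand, the sign certifications should collapse to the same kind of elementary inequalities used for Theorem~\ref{theorem6-2}.
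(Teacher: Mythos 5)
Your proposal takes essentially the same approach as the paper: the paper's entire proof of Theorem~\ref{theorem6-4} is a single sentence instructing the reader to follow the proof of Theorem~\ref{theorem6-2}, which is precisely what you plan to do --- partition the inverted front into triangular and inverted-triangular sub-regions, write local hypervolume-contribution formulas for the added point $\mathbf{p}$ and its affected neighbors, and show $\mathbf{p}$ is the least contributor so that the $(\mu+1)$ selection scheme removes it. If anything, your plan is more careful than the paper's, since you explicitly flag the boundary sub-regions and the failure of the $\mathbf{f}\mapsto\mathbf{1}-\mathbf{f}$ duality (which turns maximization into minimization), neither of which the paper addresses.
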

\begin{proof}
Follow the proof way of Theorem~\ref{theorem6-2}, we can easily get this conclusion.
\end{proof}

\begin{remark}
In Theorem \ref{theorem6-4}, the reference point is also specified as $r = -1/H$. Different from Theorem \ref{theorem6-2}, Theorem \ref{theorem6-4} still holds for $r < -1/H$. This is because the change of the reference point only influence the hypervolume contribution of the three extreme points of the Type VIII Pareto front.  This is totally different from the Type VII Pareto front. When $r > -1/H$, Theorem \ref{theorem6-4} does not hold since the reference point is too close to the Pareto front.
\end{remark}

\subsection{Discussions}
\subsubsection{The optimality of the solution sets obtained by SMS-EMOA}
In this section, we have shown that the DAS solution sets and the inverted DAS solution sets are not always the optimal $\mu$-distributions whereas their distributions look perfectly uniform. The exact hypervolume optimal $\mu$-distributions on the Types VII and VIII Pareto fronts are not derived in this paper, and we used SMS-EMOA to search for the optimal $\mu$-distributions. It is clear that the solution sets obtained by SMS-EMOA cannot be guaranteed to be optimal. Proving the optimality of the obtained solution sets is a challenging task in the future.

\subsubsection{The hypervolume contribution of each solution}
Each solution of the DAS solution set (or the inverted DAS solution set) has the same hypervolume contribution when the reference point is specified as $r=-1/H$. The counter-intuitive fact is that the whole solution set is not optimal for hypervolume maximization when $H=3,...,10$. Thus, it is interesting to investigate the hypervoume contribution of each solution for the optimal solution set on the plane-based Pareto fronts. We take the Type VII Pareto front and $H=3$ as an example. Fig. \ref{hvc12} shows the hypervolume contribution of each solution for the DAS solution set and the best solution set obtained by SMS-EMOA. We can observe that each solution has the same hypervolume contribution in the DAS solution set whereas each solution has a different hypervolume contribution in the best solution set obtained by SMS-EMOA. We can also observe that the hypervolume contribution of an outer solution (a solution far from the center of the Pareto front) is larger than that of an inner solution (a solution close to the center of the Pareto front). These observations may lead to new research directions in the future.

\begin{figure}[!htb]
\centering
\subfigure[DAS (HV=0.7407)]{                    
\includegraphics[scale=0.14]{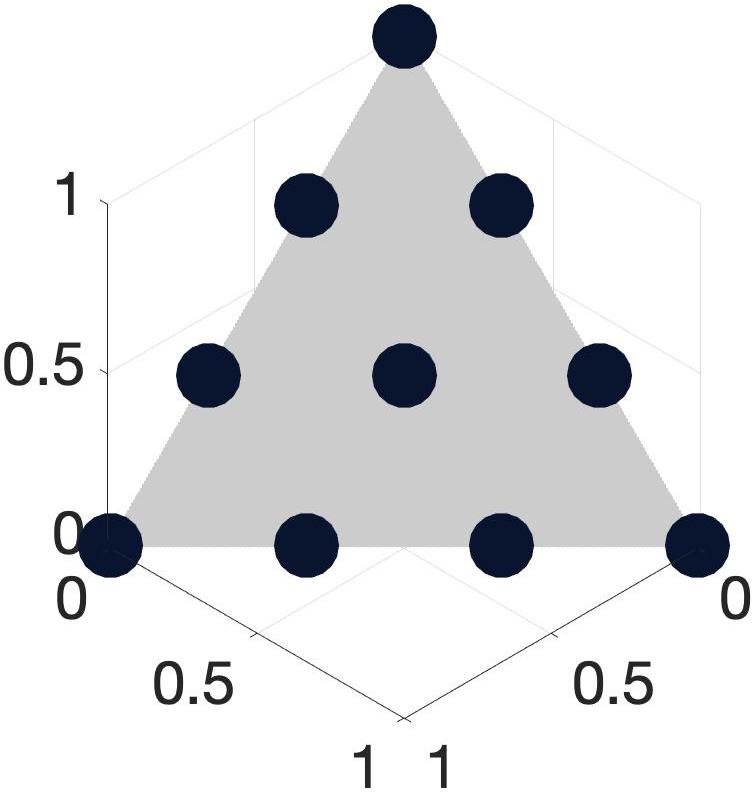}               
}
\subfigure[SMS-EMOA (HV = 0.7422)]{                    
\includegraphics[scale=0.14]{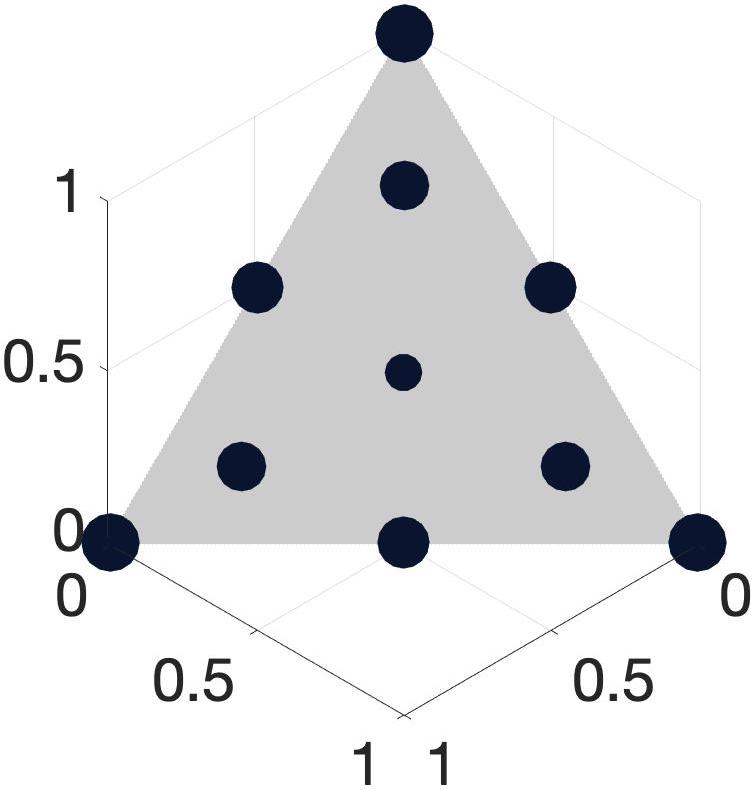}                
}
\subfigure[DAS (HV=0.7407)]{                    
\includegraphics[scale=0.3]{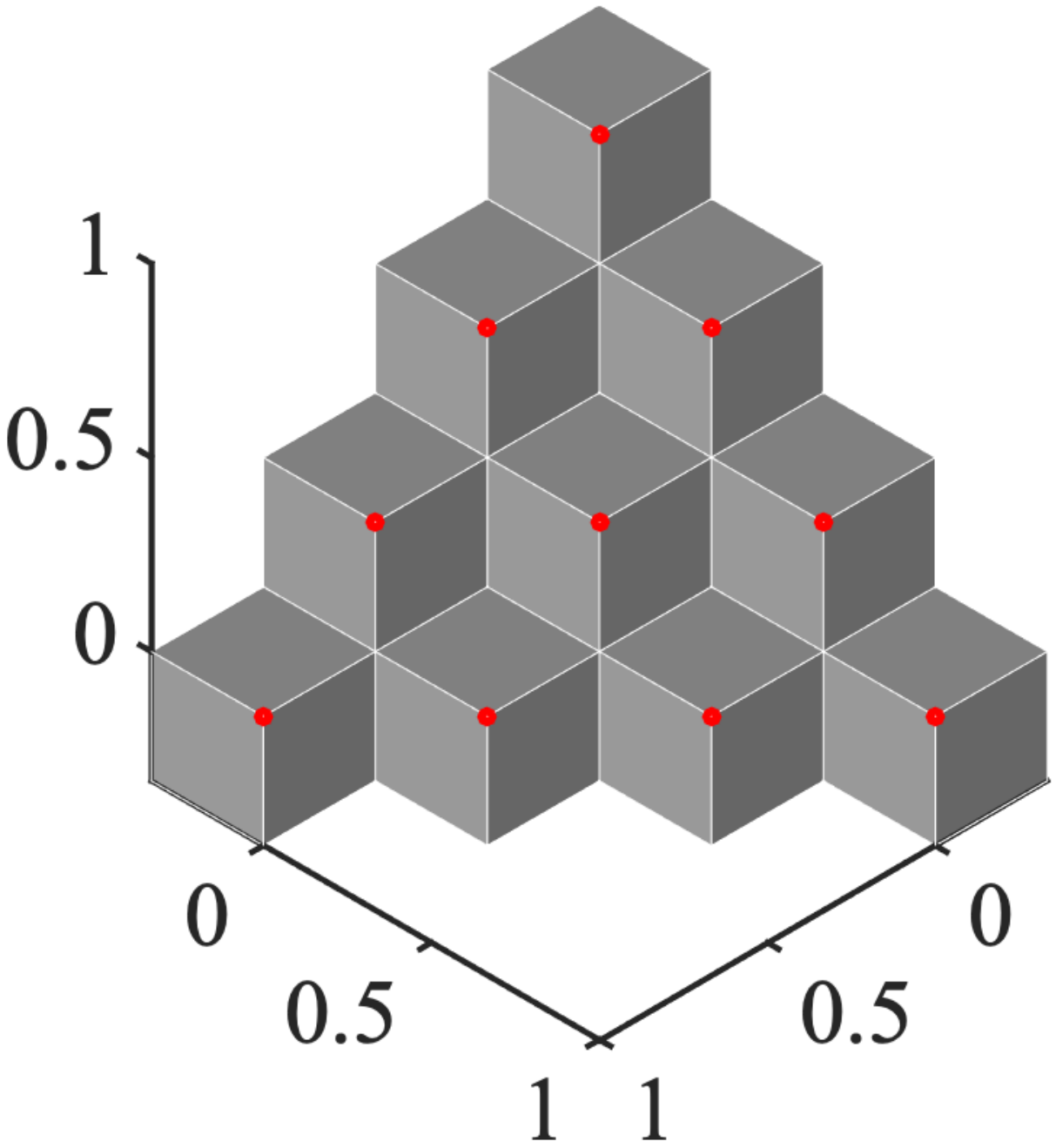}               
}
\subfigure[SMS-EMOA (HV = 0.7422)]{                    
\includegraphics[scale=0.3]{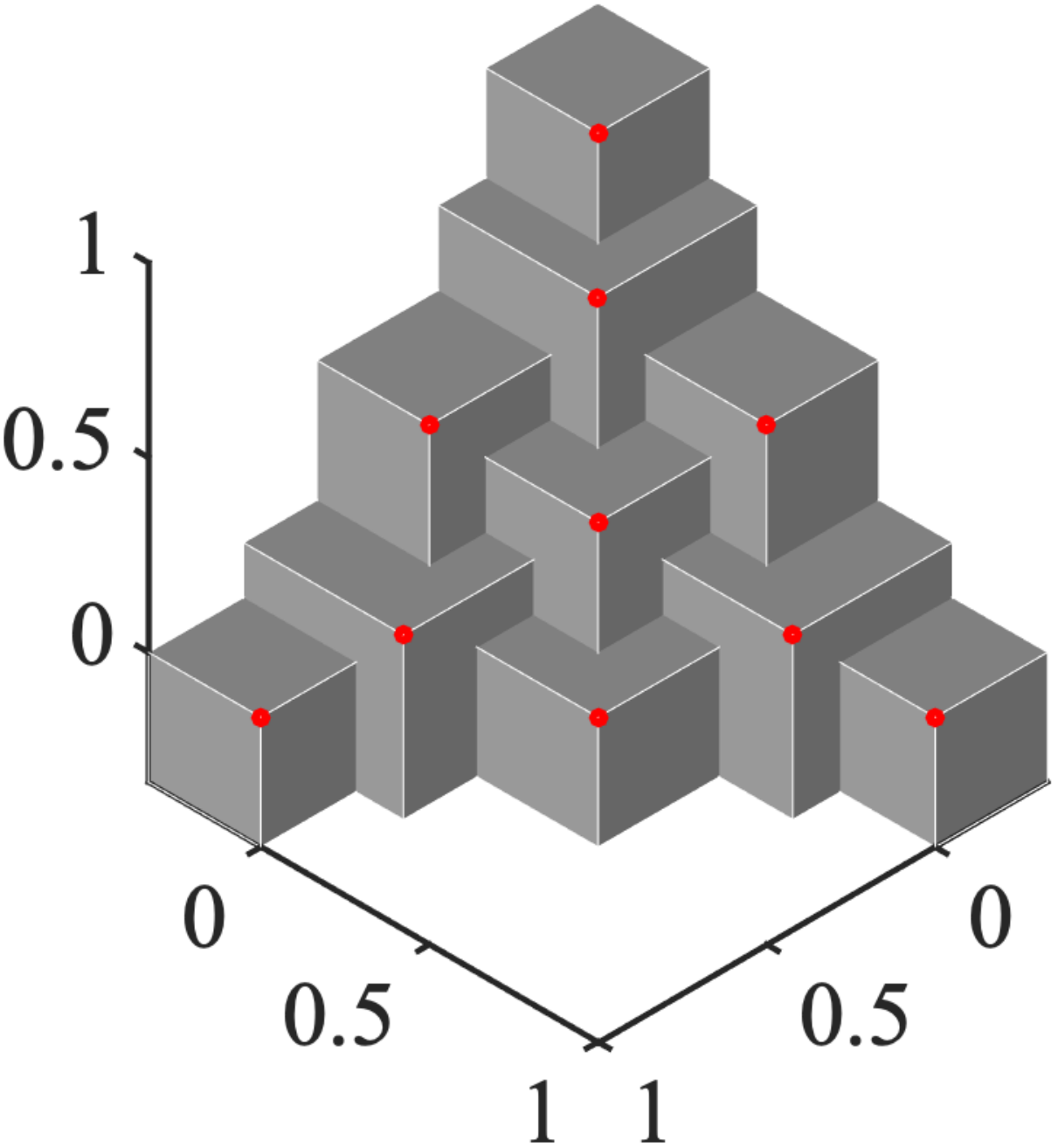}                
}
\caption{An illustration of the hypervolume contribution of each solution in (a) the DAS solution set, (b) the best solution set obtained by SMS-EMOA. The size of the circle represents the relative hypervolume contribution value of each solution. (c) and (d) show the shape of the hypervolume of (a) and (b), respectively. The reference point is specified as $\mathbf{r}=(-1/3,-1/3,-1/3)$.} 
\label{hvc12}                                                        
\end{figure}

\subsubsection{The uniqueness of the hypervolume optimal $\mu$-distribution}
It is clear that a DAS solution set is unique on the Pareto front. That is, it is impossible to have two different DAS solution sets with the same number of solutions. However, it is not always the case for a solution set maximizing the hypervolume indicator. As shown in Fig. \ref{appendix2}, it is clear that the solution sets with $H=1,2,3$ are unique. However, for the solution set with $H=4$, we can rotate the solution set $\pm 120^{\circ}$ to get a different solution set. Thus, the hypervolume optimal $\mu$-distribution does not necessarily to be unique.

\subsubsection{The solutions distribution for hypervolume maximization}
Although the theoretical hypervolume optimal $\mu$-distributions on the plane-based Pareto fronts are not derived in this paper, we can use the empirical results in Fig. \ref{appendix2} to get a sense of how the solutions are distributed on the Pareto fronts. From Fig. \ref{appendix2} (a)-(d) we can observe that the solutions are symmetrically distributed. In our previous work \cite{ishibuchi2020numerical}, we empirically investigated the hypervolume optimal $\mu$-distributions on the plane-based Pareto fronts for $\mu=2,...,10$. We have similar observations, i.e., in most cases the solutions are symmetrically distributed. Thus, this may be the law of the hypervolume optimal $\mu$-distribution on the plane-based Pareto fronts. However, proving this is a challenging task. We leave it as an open question for future research.

\subsubsection{Inspirations to EMO algorithm design}
In the EMO literature, most decomposition-based EMOAs use the DAS method to generate weight vectors or reference points. The discussions in this section suggest the necessity of reconsidering the generation methods for weight vectors or reference points, in order to obtain better hypervolume results for these decomposition-based EMOAs. For the hypervolume-based EMOAs (e.g., SMS-EMOA), the results in this section reveal that a $(\mu+1)$ selection scheme may fall into local optimum. Thus, a mechanism to jump out of local optimum is needed in this type of algorithms.

\section{Conclusions}
\label{conclusion}
In this paper, we investigated the hypervolume optimal $\mu$-distributions on line- and plane-based Pareto fronts in three dimensions. First, we investigated the optimal $\mu$-distributions on the line-based Pareto fronts with two and three lines.  We showed that the solution set is not always uniform on these Pareto fronts for hypervolume maximization. The solutions are only uniform on the Type III and Type V Pareto fronts, whereas the solutions are non-uniform on the Type IV and Type VI Pareto fronts. Then, we investigated the optimal $\mu$-distributions for the plane-based Pareto fronts. We showed that the DAS solution set on the Type VII Pareto front and the inverted DAS solution set on the Type VIII Pareto front are not always optimal for hypervolume maximization, which is contrary to our intuition. Table \ref{sum} summarizes all the conclusions mentioned in this paper. These conclusions can provide more knowledge for the EMO researchers to better understand the hypervolume indicator. A uniform solution set cannot always be obtained by maximizing the hypervolume indicator, which reminds us to utilize the hypervolume indicator in three dimensions more carefully. 

\begin{table}[!htb]
\centering
\caption{A summary of the conclusions for the hypervolume optimal $\mu$-distribution on different Pareto fronts. Maximization of each objective is considered.}
\renewcommand\arraystretch{1.5}
\begin{tabular}{l|c|c}
\hline
Pareto front & Optimal $\mu$-distribution & Reference point \\\hline
Type I&Uniform&$r\leq -1/(\mu-1)$\\\hline
Type II&Nonuniform&$-$\\\hline
Type III&Uniform&$r\leq-2/(\mu-1)$\\\hline
Type IV&Nonuniform&$-$\\\hline
Type V&Uniform&$r\leq -3/\mu$\\\hline
Type VI&Nonuniform&$-$\\\hline
Type VII&Uniform ($H=1,2$)&$r=-1/H$\\
&Nonuniform ($H>2$)&\\\hline
Type VIII&Uniform ($H=1,2$)&$r=-1/H$\\
&Nonuniform ($H>2$)&\\
\hline
\end{tabular}
\label{sum}
\end{table}

For our future work, we have the following research directions. 1) The exact hypervolume optimal $\mu$-distributions on the Type IV, Type VI, Type VII and Type VIII Pareto fronts were not derived in this paper. One future direction is to precisely obtain the hypervolume optimal $\mu$-distributions for these Pareto fronts. The hypervolume Newton method \cite{hernandez2018set} may be a promising method for this task.  2) We only considered three dimensions in this paper. It is interesting to extend our research to higher dimensions and see what happens there. 3) The non-uniformity caused by maximizing the hypervolume indicator can be seen as a undesired property of the hypervolume indicator. How to solve this issue is one research direction. Another research direction is to utilize this property of hypervolume maximization for decision making (e.g., for the search of knee regions of nonlinear Pareto fronts).

\bibliographystyle{IEEEtran}
\bibliography{sample-bibliography}

\begin{thebibliography}{10}
\providecommand{\url}[1]{#1}
\csname url@samestyle\endcsname
\providecommand{\newblock}{\relax}
\providecommand{\bibinfo}[2]{#2}
\providecommand{\BIBentrySTDinterwordspacing}{\spaceskip=0pt\relax}
\providecommand{\BIBentryALTinterwordstretchfactor}{4}
\providecommand{\BIBentryALTinterwordspacing}{\spaceskip=\fontdimen2\font plus
\BIBentryALTinterwordstretchfactor\fontdimen3\font minus
  \fontdimen4\font\relax}
\providecommand{\BIBforeignlanguage}[2]{{%
\expandafter\ifx\csname l@#1\endcsname\relax
\typeout{** WARNING: IEEEtran.bst: No hyphenation pattern has been}%
\typeout{** loaded for the language `#1'. Using the pattern for}%
\typeout{** the default language instead.}%
\else
\language=\csname l@#1\endcsname
\fi
#2}}
\providecommand{\BIBdecl}{\relax}
\BIBdecl

\bibitem{zitzler2007hypervolume}
E.~Zitzler, D.~Brockhoff, and L.~Thiele, ``The hypervolume indicator revisited:
  On the design of pareto-compliant indicators via weighted integration,'' in
  \emph{2007 International Conference on Evolutionary Multi-Criterion
  Optimization}.\hskip 1em plus 0.5em minus 0.4em\relax Springer, 2007, pp.
  862--876.

\bibitem{fleischer2003measure}
M.~Fleischer, ``The measure of pareto optima applications to multi-objective
  metaheuristics,'' in \emph{2003 International Conference on Evolutionary
  Multi-Criterion Optimization}.\hskip 1em plus 0.5em minus 0.4em\relax
  Springer, 2003, pp. 519--533.

\bibitem{beume2007sms}
N.~Beume, B.~Naujoks, and M.~Emmerich, ``{SMS-EMOA}: Multiobjective selection
  based on dominated hypervolume,'' \emph{European Journal of Operational
  Research}, vol. 181, no.~3, pp. 1653--1669, 2007.

\bibitem{Emmerich2005An}
M.~Emmerich, N.~Beume, and B.~Naujoks, ``An {EMO} algorithm using the
  hypervolume measure as selection criterion,'' in \emph{2005 International
  Conference on Evolutionary Multi-Criterion Optimization}, 2005, pp. 62--76.

\bibitem{jiang2015simple}
S.~Jiang, J.~Zhang, Y.-S. Ong, A.~N. Zhang, and P.~S. Tan, ``A simple and fast
  hypervolume indicator-based multiobjective evolutionary algorithm,''
  \emph{IEEE Transactions on Cybernetics}, vol.~45, no.~10, pp. 2202--2213,
  2015.

\bibitem{bader2011hype}
J.~Bader and E.~Zitzler, ``{HypE}: An algorithm for fast hypervolume-based
  many-objective optimization,'' \emph{Evolutionary Computation}, vol.~19,
  no.~1, pp. 45--76, 2011.

\bibitem{shang2019new}
K.~Shang and H.~Ishibuchi, ``A new hypervolume-based evolutionary algorithm for
  many-objective optimization,'' \emph{IEEE Transactions on Evolutionary
  Computation (Early access)}, 2020.

\bibitem{shang2020survey}
K.~Shang, H.~Ishibuchi, L.~He, and L.~M. Pang, ``A survey on the hypervolume
  indicator in evolutionary multi-objective optimization,'' \emph{IEEE
  Transactions on Evolutionary Computation}, 2020.

\bibitem{auger2009theory}
A.~Auger, J.~Bader, D.~Brockhoff, and E.~Zitzler, ``Theory of the hypervolume
  indicator: optimal $\mu$-distributions and the choice of the reference
  point,'' in \emph{Proceedings of the 10th ACM SIGEVO Workshop on Foundations
  of Genetic Algorithms}.\hskip 1em plus 0.5em minus 0.4em\relax ACM, 2009, pp.
  87--102.

\bibitem{emmerich2007gradient}
M.~Emmerich, A.~Deutz, and N.~Beume, ``Gradient-based/evolutionary relay hybrid
  for computing pareto front approximations maximizing the s-metric,'' in
  \emph{2007 International Workshop on Hybrid Metaheuristics}.\hskip 1em plus
  0.5em minus 0.4em\relax Springer, 2007, pp. 140--156.

\bibitem{friedrich2015multiplicative}
T.~Friedrich, F.~Neumann, and C.~Thyssen, ``Multiplicative approximations,
  optimal hypervolume distributions, and the choice of the reference point,''
  \emph{Evolutionary computation}, vol.~23, no.~1, pp. 131--159, 2015.

\bibitem{shukla2014theoretical}
P.~K. Shukla, N.~Doll, and H.~Schmeck, ``A theoretical analysis of volume based
  {Pareto} front approximations,'' in \emph{Proceedings of the 2014 Annual
  Conference on Genetic and Evolutionary Computation}, 2014, pp. 1415--1422.

\bibitem{auger2010theoretically}
A.~Auger, J.~Bader, and D.~Brockhoff, ``Theoretically investigating optimal
  $\mu$-distributions for the hypervolume indicator: First results for three
  objectives,'' in \emph{2010 International Conference on Parallel Problem
  Solving from Nature}.\hskip 1em plus 0.5em minus 0.4em\relax Springer, 2010,
  pp. 586--596.

\bibitem{singh2019understanding}
H.~K. Singh, ``Understanding hypervolume behavior theoretically for
  benchmarking in evolutionary multi/many-objective optimization,'' \emph{IEEE
  Transactions on Evolutionary Computation}, 2019.

\bibitem{glasmachers2014optimized}
T.~Glasmachers, ``Optimized approximation sets for low-dimensional benchmark
  pareto fronts,'' in \emph{2014 International Conference on Parallel Problem
  Solving from Nature}.\hskip 1em plus 0.5em minus 0.4em\relax Springer, 2014,
  pp. 569--578.

\bibitem{ishibuchi2019comparison}
H.~Ishibuchi, R.~Imada, N.~Masuyama, and Y.~Nojima, ``Comparison of
  hypervolume, {IGD} and {IGD+} from the viewpoint of optimal distributions of
  solutions,'' in \emph{2019 International Conference on Evolutionary
  Multi-Criterion Optimization}.\hskip 1em plus 0.5em minus 0.4em\relax
  Springer, 2019, pp. 332--345.

\bibitem{ishibuchi2017hypervolume}
H.~Ishibuchi, R.~Imada, Y.~Setoguchi, and Y.~Nojima, ``Hypervolume subset
  selection for triangular and inverted triangular pareto fronts of
  three-objective problems,'' in \emph{Proceedings of the 14th ACM/SIGEVO
  Conference on Foundations of Genetic Algorithms}.\hskip 1em plus 0.5em minus
  0.4em\relax ACM, 2017, pp. 95--110.

\bibitem{ishibuchi2017reference}
------, ``Reference point specification in hypervolume calculation for fair
  comparison and efficient search,'' in \emph{Proceedings of the 2017 Genetic
  and Evolutionary Computation Conference}.\hskip 1em plus 0.5em minus
  0.4em\relax ACM, 2017, pp. 585--592.

\bibitem{ryoji2020new}
R.~Tanabe and H.~Ishibuchi, ``An analysis of quality indicators using
  approximated optimal distributions in a three-dimensional objective space,''
  \emph{IEEE Transactions on Evolutionary Computation (Early access)}, 2020.

\bibitem{das1998normal}
I.~Das and J.~E. Dennis, ``Normal-boundary intersection: A new method for
  generating the pareto surface in nonlinear multicriteria optimization
  problems,'' \emph{SIAM Journal on Optimization}, vol.~8, no.~3, pp. 631--657,
  1998.

\bibitem{shangke2020}
K.~Shang, H.~Ishibuchi, W.~Chen, and L.~Adam, ``Hypervolume optimal
  $\mu$-distributions on line-based pareto fronts in three dimensions,'' in
  \emph{Parallel Problem Solving from Nature -- PPSN XVI}, T.~B{\"a}ck,
  M.~Preuss, A.~Deutz, H.~Wang, C.~Doerr, M.~Emmerich, and H.~Trautmann,
  Eds.\hskip 1em plus 0.5em minus 0.4em\relax Cham: Springer International
  Publishing, 2020, pp. 257--270.

\bibitem{auger2012hypervolume}
A.~Auger, J.~Bader, D.~Brockhoff, and E.~Zitzler, ``Hypervolume-based
  multiobjective optimization: Theoretical foundations and practical
  implications,'' \emph{Theoretical Computer Science}, vol. 425, pp. 75--103,
  2012.

\bibitem{brockhoff2010optimal}
D.~Brockhoff, ``Optimal $\mu$-distributions for the hypervolume indicator for
  problems with linear bi-objective fronts: Exact and exhaustive results,'' in
  \emph{2010 Asia-Pacific Conference on Simulated Evolution and
  Learning}.\hskip 1em plus 0.5em minus 0.4em\relax Springer, 2010, pp. 24--34.

\bibitem{zhang2007moea}
Q.~Zhang and H.~Li, ``{MOEA/D}: A multiobjective evolutionary algorithm based
  on decomposition,'' \emph{IEEE Transactions on Evolutionary Computation},
  vol.~11, no.~6, pp. 712--731, 2007.

\bibitem{deb2013evolutionary}
K.~Deb and H.~Jain, ``An evolutionary many-objective optimization algorithm
  using reference-point-based nondominated sorting approach, part i: solving
  problems with box constraints,'' \emph{IEEE Transactions on Evolutionary
  Computation}, vol.~18, no.~4, pp. 577--601, 2013.

\bibitem{ishibuchi2017performance}
H.~Ishibuchi, Y.~Setoguchi, H.~Masuda, and Y.~Nojima, ``Performance of
  decomposition-based many-objective algorithms strongly depends on pareto
  front shapes,'' \emph{IEEE Transactions on Evolutionary Computation},
  vol.~21, no.~2, pp. 169--190, 2017.

\bibitem{ishibuchi2015pareto}
H.~Ishibuchi, H.~Masuda, and Y.~Nojima, ``Pareto fronts of many-objective
  degenerate test problems,'' \emph{IEEE Transactions on Evolutionary
  Computation}, vol.~20, no.~5, pp. 807--813, 2015.

\bibitem{li2017multiline}
M.~Li, C.~Grosan, S.~Yang, X.~Liu, and X.~Yao, ``Multiline distance
  minimization: A visualized many-objective test problem suite,'' \emph{IEEE
  Transactions on Evolutionary Computation}, vol.~22, no.~1, pp. 61--78, 2017.

\bibitem{deb2005scalable}
K.~Deb, L.~Thiele, M.~Laumanns, and E.~Zitzler, ``Scalable test problems for
  evolutionary multiobjective optimization,'' in \emph{Evolutionary
  Multiobjective Optimization}.\hskip 1em plus 0.5em minus 0.4em\relax
  Springer, 2005, pp. 105--145.

\bibitem{beume2009effects}
N.~Beume, B.~Naujoks, M.~Preuss, G.~Rudolph, and T.~Wagner, ``Effects of
  1-greedy $\mathcal{S}$-metric-selection on innumerably large pareto fronts,''
  in \emph{2009 International Conference on Evolutionary Multi-Criterion
  Optimization}.\hskip 1em plus 0.5em minus 0.4em\relax Springer, 2009, pp.
  21--35.

\bibitem{ishibuchi2020numerical}
H.~Ishibuchi, L.~M. Pang, and K.~Shang, ``Numerical analysis on optimal
  distributions of solutions for hypervolume maximization,'' in \emph{2020 IEEE
  International Conference on Systems, Man, and Cybernetics (SMC)}.\hskip 1em
  plus 0.5em minus 0.4em\relax IEEE, 2020, pp. 1103--1110.

\bibitem{hernandez2018set}
V.~A.~S. Hern{\'a}ndez, O.~Sch{\"u}tze, H.~Wang, A.~Deutz, and M.~Emmerich,
  ``The set-based hypervolume {N}ewton method for bi-objective optimization,''
  \emph{IEEE Transactions on Cybernetics}, vol.~50, no.~5, pp. 2186--2196,
  2018.

\end{thebibliography}

\end{document}